\documentclass{article}

\usepackage{microtype}
\usepackage{graphicx}
\usepackage{subcaption}
\usepackage{booktabs} % for professional tables

\usepackage{hyperref}

\usepackage[accepted]{icml2026}

\usepackage{amsmath}
\usepackage{amssymb}
\usepackage{mathtools}
\usepackage{amsthm}

\providecommand{\State}{\STATE}
\providecommand{\For}{\FOR}
\providecommand{\EndFor}{\ENDFOR}
\providecommand{\If}{\IF}
\providecommand{\EndIf}{\ENDIF}

\newcommand{\capconst}{\kappa}
\usepackage{multirow}   % \multirow
\usepackage{makecell}   % \makecell
\usepackage{adjustbox}  % adjustbox environment
\usepackage{enumitem}   % itemize/enumerate optional 
\usepackage{xcolor}
\hypersetup{
  colorlinks=true,
  linkcolor=blue,
  citecolor=blue,
  urlcolor=blue
}

\usepackage[capitalize,noabbrev]{cleveref}

\theoremstyle{plain}
\newtheorem{theorem}{Theorem}[section]
\newtheorem{proposition}[theorem]{Proposition}
\newtheorem{lemma}[theorem]{Lemma}

\theoremstyle{definition}

\newtheorem{assumption}[theorem]{Assumption}
\theoremstyle{remark}
\newtheorem{remark}[theorem]{Remark}

\def\onedot{.}
\def\eg{\emph{e.g}\onedot}

\def\ie{\emph{i.e}\onedot}

\newcommand{\sem}[1]{\scriptsize{$\pm$#1}}

\newcommand{\gammaLL}{\gamma_{\mathrm{LL}}}

\newcommand{\reprfunc}{h_{\theta}}
\newcommand{\dynfunc}{g_{\theta}}
\newcommand{\predfunc}{f_{\theta}}
\newcommand{\marginfunc}{m_{\phi}}
\newcommand{\budgetfunc}{b_{\psi}}
\newcommand{\tparagraph}[1]{\vspace{-4pt}\paragraph{#1}\vspace{-4pt}}
\usepackage[disable,textsize=tiny]{todonotes}
\icmltitlerunning{Learning Multi-Timescale Abstractions}

\begin{document}

\twocolumn[
  \icmltitle{Learning Multi-Timescale Abstractions for Hierarchical Combinatorial Planning}

  % It is OKAY to include author information, even for blind submissions: the
  % style file will automatically remove it for you unless you've provided
  % the [accepted] option to the icml2026 package.

  % List of affiliations: The first argument should be a (short) identifier you
  % will use later to specify author affiliations Academic affiliations
  % should list Department, University, City, Region, Country Industry
  % affiliations should list Company, City, Region, Country

  % You can specify symbols, otherwise they are numbered in order. Ideally, you
  % should not use this facility. Affiliations will be numbered in order of
  % appearance and this is the preferred way.
  \icmlsetsymbol{equal}{*}

    \begin{icmlauthorlist}
    \icmlauthor{Vivienne Huiling Wang}{xxx}
    \icmlauthor{Tinghuai Wang}{comp}
    \icmlauthor{Joni Pajarinen}{xxx}
    \end{icmlauthorlist}
    
    \icmlaffiliation{xxx}{Department of Electrical Engineering and Automation, Aalto University, Finland}
    \icmlaffiliation{comp}{Qutwo, Finland}
    
    \icmlcorrespondingauthor{Vivienne Huiling Wang}{vivienne.wang@aalto.fi}

  % You may provide any keywords that you find helpful for describing your
  % paper; these are used to populate the "keywords" metadata in the PDF but
  % will not be shown in the document
  \icmlkeywords{Reinforcement Learning, ICML}

  \vskip 0.3in
]

% this must go after the closing bracket ] following \twocolumn[ ...

% This command actually creates the footnote in the first column listing the
% affiliations and the copyright notice. The command takes one argument, which
% is text to display at the start of the footnote. The \icmlEqualContribution
% command is standard text for equal contribution. Remove it (just {}) if you
% do not need this facility.

% Use ONE of the following lines. DO NOT remove the command.
% If you have no special notice, KEEP empty braces:
\printAffiliationsAndNotice{}  % no special notice (required even if empty)
% Or, if applicable, use the standard equal contribution text:
% \printAffiliationsAndNotice{\icmlEqualContribution}

\begin{abstract}
The combination of exponentially large action spaces, stochastic dynamics, and long-horizon decision-making under limited resources makes Sequential Stochastic Combinatorial Optimization (SSCO) particularly challenging for reinforcement learning. Hierarchical Reinforcement Learning (HRL) offers a natural decomposition, but it places the high-level policy in a Semi-Markov Decision Process (SMDP) where actions have variable durations, making it difficult to learn a world model that is suitable for planning. We introduce a model-based hierarchical framework for sequential stochastic combinatorial decision-making that directly addresses this issue. Our method combines a latent-space tree-search planner with an SMDP-aware world model for variable-duration decisions. A multi-timescale objective structures the latent dynamics so that transition magnitudes reflect the effective temporal scales of abstract actions, enabling efficient lookahead under adaptive temporal abstraction. We further learn a subgoal-conditioned budget policy jointly with the world model to support context-aware resource allocation. Across challenging SSCO benchmarks, our method outperforms strong baselines.

\end{abstract}

\section{Introduction}
\label{sec:introduction}

Sequential Stochastic Combinatorial Optimization (SSCO) problems require making a sequence of combinatorial decisions under uncertainty in order to maximize a cumulative objective \citep{li2018combinatorial, kool2019attention}. They arise in applications such as dynamic vehicle routing \citep{kool2019attention} and adaptive influence maximization in social networks \citep{chen2021contingency}. SSCO is particularly challenging because the action space is combinatorial, the dynamics are stochastic and path-dependent, and early decisions can have long-lasting consequences. For instance, in Adaptive Influence Maximization (AIM), an agent repeatedly selects seed nodes to induce stochastic cascades, while in the Stochastic Orienteering Problem (SOP), an agent plans multi-day routes on a graph subject to tight travel budgets. In both settings, strategic aims such as ``target a specific community'' or ``sweep a profitable region'' can require very different amounts of effort to execute.

Flat reinforcement learning (RL) methods often struggle in these settings: they must act directly in an enormous action space and solve long-horizon credit assignment without explicit structure. Hierarchical RL (HRL) \citep{dayan1993feudal,barto2003recent,sutton1999between} offers a natural way to introduce temporal abstraction, but in SSCO it comes with a structural twist. At the high level, the agent must decide not only \emph{what} to do (subgoal) but also \emph{how much resource} to spend (budget). Each high-level decision specifies a subgoal together with a budget, and the low level then acts until either the subgoal terminates or the budget is exhausted. This induces variable numbers of primitive steps between high-level decisions, so the high-level process is a \emph{Semi-Markov Decision Process (SMDP)} \citep{sutton1999between} rather than a standard one-step MDP.

This SMDP structure clashes with many existing goal-conditioned HRL and multi-timescale world-model methods. Approaches such as Director \citep{hafner2022deep} and MTS-WM \citep{shaj2023multi} impose a fixed slow timescale: the high level acts every fixed number of primitive steps, and the world model predicts at that fixed stride. This effectively turns the high level back into an MDP with fixed-duration macro steps. In SSCO, however, the agent must choose \emph{how long} to pursue each subgoal, depending on both the state and the goal. Simply fixing the stride cannot express this. The closest SSCO baseline, WS-option~\citep{feng2025sequential}, supports variable budgets but communicates only a scalar allocation: it decides \emph{how much} resource to spend without specifying \emph{what} strategic subtask to pursue. As a result, the low-level controller receives no semantic guidance and may under-spend (failing to complete an intended subtask) or over-spend (wasting scarce budget on low-value progress). Moreover, WS-option’s high level is model-free, so it is inherently limited in explicit lookahead to anticipate how stochastic, path-dependent SSCO dynamics will evolve under variable-duration budgeted decisions.

We introduce Learning \textit{Multi-Timescale Abstractions (LMTA)}, a model-based, goal-conditioned HRL framework designed around this SMDP structure. At the high level, LMTA uses a latent-space model-based tree search planner \citep{schrittwieser2020mastering} to search over a set of learned subgoals, while a learned budget head chooses how much resource to allocate to each selected subgoal. To support this planner, we learn an \emph{SMDP-aware world model} that maps a state and a subgoal directly to a post-subgoal latent state and a cumulative reward in a single macro step. Rather than predicting durations explicitly, which can be brittle, we propose a \emph{Multi-Timescale SMDP (MTS-SMDP) world model} in which temporal scale is encoded in the latent geometry: the magnitude of a macro transition reflects how long the corresponding subgoal tends to run. The same latent space also supports low-level, per-step dynamics.

Our main contributions are:

\begin{itemize}[itemsep=0pt, topsep=0pt, parsep=0pt, leftmargin=*]
\item \textbf{Goal-conditioned hierarchical planning in an SMDP.} We formulate the high level of SSCO as an SMDP where each decision selects a learned subgoal together with a budget. LMTA uses a latent-space model-based tree search planner to search over these subgoals, enabling model-based lookahead over variable-duration abstract actions rather than fixed-stride options or budget-only signals.

\item \textbf{Multi-timescale SMDP world-model learning via latent geometry.}
We propose a unified objective that shapes a single latent space to reflect both micro (primitive-step) and macro (subgoal-level) dynamics. The objective balances complementary constraints that jointly structure the latent transitions, so that learned latent displacements correlate with the effective durations of subgoals. We show that this geometry–duration relationship emerges from the objective and relate it to planning regret.

\item \textbf{Subgoal-conditioned budgets and SSCO performance.} We couple the world model with a subgoal-conditioned budget head, so that the high level decides what strategy to pursue and how much resource to invest in it. On large-scale AIM, SOP, and a Power-2500 benchmark, LMTA  outperforms strong model-free and model-based baselines, highlighting the value of SMDP-aware, multi-timescale planning for stochastic combinatorial optimization.
\end{itemize}

\paragraph{Conflict of Interest Disclosure.}
The authors declare no financial conflicts of interest related to this work. This study evaluates public benchmarks and does not evaluate models, products, or services developed by the authors' respective employers.

\begin{figure*}[t]
    \centering
    \includegraphics[width=0.66\textwidth]{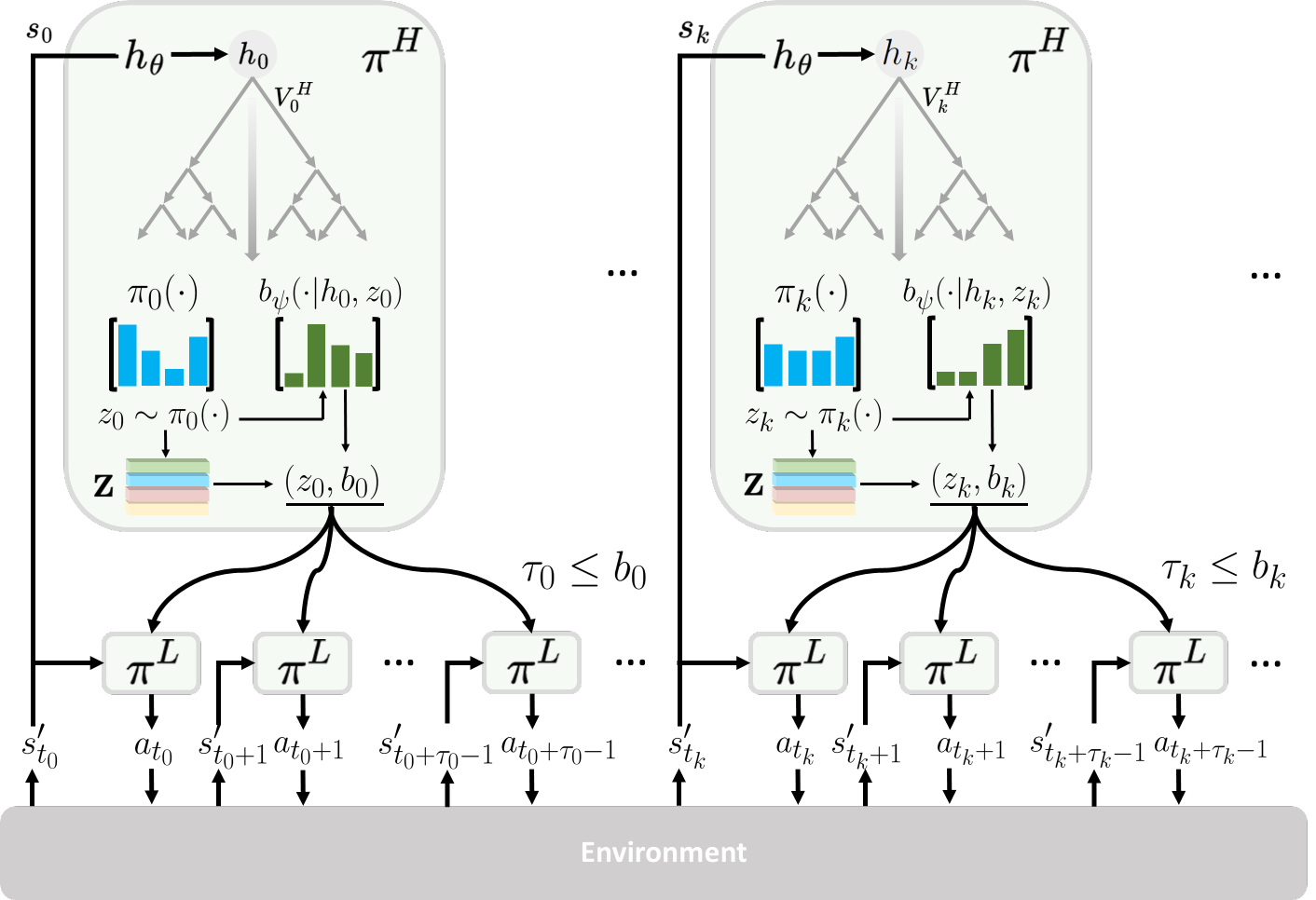}
    \caption{Architecture of LMTA. At each high-level (HL) step, the planner encodes the state as $h_k$ and runs latent-space tree search over subgoals to produce a search policy $\pi_k(\cdot)$. A subgoal $z_k$ is selected, after which the budget head  samples $b_k$, yielding HL action $(z_k,b_k)$. The low-level (LL) policy $\pi^L$ executes primitive actions for $\tau_k$ steps to pursue $z_k$, where $\tau_k \le b_k$. }
    \label{fig:goalzero_arch}
\end{figure*}

\section{Related Works}
\label{sec:related_works}

\paragraph{Hierarchical Reinforcement Learning.}
HRL methods tackle long-horizon problems by creating temporal abstractions \citep{barto2003recent}. The options framework \citep{sutton1999between} and derivatives like Option-Critic \citep{bacon2017option} formalize temporally extended actions. Recent work learns subgoals either as explicit states with hindsight relabeling \citep{nachum2018data, levy2019learning, wang2023state}, through diffusion-based subgoal generation \citep{WangWP25}, or as learnable continuous representations \citep{li2020learning, WangWYKP24}. While effective, these are typically \emph{model-free} at the high level, limiting explicit forward planning over SMDP dynamics under stochasticity. WS-option \citep{feng2025sequential} is the closest SSCO baseline: it is model-free at the high level (no long-term model-based planning), does not explicitly model SMDP dynamics, and emits only a scalar budget, offering limited semantic guidance to the low level.

\tparagraph{Hierarchical Planning and Search.} Prior work has combined MCTS with temporal abstraction to tackle long horizons. Approaches like Hierarchical MCTS \citep{vien2015hierarchical} and Option-MCTS \citep{bai2016markov} extend tree search to operate over options, effectively reducing the search depth. However, these methods typically rely on pre-defined options or do not explicitly model the variable-duration dynamics required for resource-constrained SSCO. In the context of subgoal generation, methods like LEAP \citep{nasiriany2019planning} perform search over subgoals to guide a low-level policy. Unlike these approaches, which often assume fixed representations or distance metrics, LMTA integrates a latent-space tree search planner with a learned multi-timescale world model, where the latent geometry itself adapts to the temporal scale of the subgoals.

\tparagraph{Modeling for Semi-Markov Decision Processes.}
A key challenge is that the high level operates in an SMDP with variable action durations. Traditional MDP world models assume fixed-duration transitions. Prior work models option duration explicitly via termination learning \citep{bacon2017option, khetarpal2020options} or separate duration predictors \citep{harb2018waiting}, which can be difficult to train and may not capture the interplay between a subgoal's semantics and its execution time. Many goal-conditioned HRL approaches instead assume fixed-length temporal abstractions \citep{nachum2018data, sharma2019directed, zhang2021world}, simplifying planning but sacrificing flexibility. Our approach learns temporal scale implicitly through latent geometry. 

\tparagraph{Unified SMDP Dynamics and Model-based Planning.}
Neural ODEs model continuous-time dynamics over arbitrary intervals \citep{chen2018neural}, but require costly solvers inside planning loops and do not naturally yield discrete, temporally abstract actions central to HRL. Model-based RL often learns one-step latent dynamics (\eg, Dreamer \citep{hafner2020dream}); when extended hierarchically (Director \citep{hafner2022deep}), the high level sets latent goals, yet the underlying world model remains one-step. Evaluating a single high-level goal thus requires multi-step latent rollouts of the low-level policy within each node expansion of the search tree. In contrast, our dynamics function $\dynfunc$ is trained to predict the \emph{post-subgoal} latent in a single step, collapsing a variable-duration abstract action into one model evaluation. Our MTS-SMDP framework structures the latent space so that transition magnitudes correlate with temporal scale, enabling SMDP-aware planning without an explicit duration predictor. MTS-WM \citep{shaj2023multi} also models multiple timescales but uses a fixed hyperparameter $H$ to update the slow latent, defining a fixed temporal stride. In contrast, our framework handles agent-chosen, variable durations inherent to SSCO.

\tparagraph{RL for Combinatorial Optimization.}
Applying RL to CO has gained traction \citep{mazyavkina2021reinforcement}. Seminal works address TSP \citep{bello2016neural} and VRP \citep{kool2019attention} using Pointer Networks \citep{vinyals2015pointer} and Transformers, typically learning one-pass constructive policies for static CO. LMTA targets the more complex SSCO setting with stochastic transitions and multi-stage resource planning, where hierarchy is essential. Within SSCO, WS-option \citep{feng2025sequential} provides an option-based baseline; in contrast, LMTA enables model-based, goal-conditioned planning with an SMDP-aware world model and subgoal-conditioned budget allocation. SSCO challenges general RL because actions are combinatorial, rewards are non-myopic, and intertemporal budget constraints require variable-duration planning, which fixed-stride HRL methods like Director cannot handle efficiently.

\section{Preliminaries}
\label{sec:preliminaries}

\paragraph{Markov and Semi-Markov Decision Processes}
A standard Markov Decision Process (MDP) is defined by a tuple $\mathcal{M} = (\mathcal{S}, \mathcal{A}, \mathcal{P}, \mathcal{R}, \gammaLL)$, where $\mathcal{S}$ is the state space, $\mathcal{A}$ is the action space, $\mathcal{P}: \mathcal{S} \times \mathcal{A} \rightarrow \Delta(\mathcal{S})$ is the transition probability function, $\mathcal{R}: \mathcal{S} \times \mathcal{A} \rightarrow \mathbb{R}$ is the reward function, and $\gammaLL \in [0,1)$ is the \emph{primitive-time} discount factor. The goal is to find a policy $\pi: \mathcal{S} \rightarrow \Delta(\mathcal{A})$ that maximizes the expected cumulative discounted reward $V^\pi(s) = \mathbb{E}_\pi\!\left[\sum_{t=0}^\infty \gammaLL^t\, \mathcal{R}(s_t, a_t)\ \middle|\ s_0=s\right]$.

Hierarchical reinforcement learning often induces a Semi-Markov Decision Process (SMDP) at the high level. In an SMDP, actions (often called options or subgoals) can take a variable number of primitive timesteps to complete. An SMDP is defined by a tuple $(\mathcal{S}, \mathcal{A}_{HL}, \mathcal{P}_\tau, \mathcal{R}_\tau, \gamma)$, where $\mathcal{A}_{HL}$ is the set of high-level actions and $\gamma\in[0,1)$ is a \emph{decision-time} discount applied per high-level decision.
When a high-level action $a^H \in \mathcal{A}_{HL}$ is taken in state $s$, it executes for a stochastic duration $\tau$, drawn from a distribution $p(\tau \mid s, a^H)$.
The system transitions to a new state $s'$ according to $\mathcal{P}_\tau(s' \mid s, a^H)$, and a cumulative \emph{primitive-time} discounted reward is received:
$R_\tau(s, a^H) \;=\; \mathbb{E}\!\left[\sum_{i=0}^{\tau-1} \gammaLL^{\,i}\, r_{t+i} \right]$,
where $r_{t+i}$ denotes the primitive reward during the execution of $a^H$.
The \emph{primitive-step-equivalent} SMDP Bellman backup would discount the continuation by $\gammaLL^{\,\tau}$, i.e.,
$V(s)=\mathbb{E}\!\left[R_\tau(s,a^H)+\gammaLL^{\,\tau}V(s')\right]$, which depends on the random duration $\tau$.
In our setting, for planning/backups we instead treat each high-level decision as one step with a fixed decision-time discount $\gamma$; this is an \emph{algorithmic approximation} that simplifies tree-search backups and stabilizes optimization, but induces an effective time preference defined per decision epoch rather than strict primitive-step equivalence.
The key challenge remains that both the transition dynamics and rewards depend on the variable duration $\tau$.

In the SSCO setting, the high level chooses a \textbf{subgoal} $z$ and a \textbf{budget} $b$. The low level then acts for a variable number of primitive steps (up to $b$), inducing a random duration $\tau$ between high-level decisions. Unlike standard HRL which often enforces a fixed high-level stride, SSCO requires allocating varying budgets, making the time between decisions dynamic. This invalidates fixed-stride assumptions of methods like Director \citep{hafner2022deep} and motivates an SMDP formulation.

\tparagraph{Model-Based Planning with Latent-Space Tree Search}
We consider model-based planning methods that combine tree search with a learned dynamics model. In particular, we follow the latent-space tree search paradigm of MuZero \citep{schrittwieser2020mastering}, which performs Monte Carlo Tree Search (MCTS) using learned representation, dynamics, and prediction functions:
\begin{itemize}[leftmargin=*, itemsep=0pt, topsep=1pt, parsep=0pt, partopsep=0pt]
    \item $\reprfunc: \mathcal{S} \rightarrow \mathcal{H}$ maps an observation to a latent state $h \in \mathcal{H}$.
    \item $\dynfunc: \mathcal{H} \times \mathcal{A} \rightarrow \mathcal{H} \times \mathcal{R}$ predicts the next latent state and macro reward. As in MuZero, the reward prediction is produced by the dynamics function, \ie, $\dynfunc$ outputs both $(h',R)$.
    \item $\predfunc: \mathcal{H} \rightarrow \Delta(\mathcal{A}) \times \mathcal{R}$ predicts a policy prior $\mathbf{p}$ and value $v$.
\end{itemize}
Tree search in the latent space produces an improved policy target, and we adopt this planning paradigm for our high-level controller.

\section{Method}
\label{sec:method}

In order to address the challenges of exponentially large combinatorial action spaces, stochastic and path-dependent dynamics, and long-horizon decision-making under intertemporal resource constraints, we propose LMTA, a hierarchical RL framework for sequential stochastic combinatorial decision-making. A key modeling requirement in this setting is to support high-level decisions that induce variable numbers of primitive steps, so planning must operate over SMDP transitions rather than fixed-stride macro steps. LMTA meets this requirement with three coupled components: (1) a high-level latent-space model-based tree search planner, (2) a unified multi-timescale SMDP world model that supports planning over variable-duration subgoals, and (3) a subgoal-conditioned budget policy that allocates resources across decision stages.

\subsection{High-Level Planning with Model-Based Search}
A central component of our framework is a latent-space model-based tree search planner following MuZero \citep{schrittwieser2020mastering} at the high level, which enables lookahead search over abstract subgoals. We provide network architecture details in Appendix~\ref{app:architectures}.
 This design choice is motivated by the unique demands of SSCO. Unlike traditional control problems where actions have immediate, localized effects, decisions in SSCO have long-lasting consequences under uncertainty. A model-based planner allows the agent to simulate potential sequences of subgoals and their likely outcomes, which is critical for effective resource allocation and strategic positioning.

% =========================
% === REPLACE IN MAIN TEXT: High-level SMDP reward definition ===
% =========================
\tparagraph{High-level SMDP.}
We index primitive (low-level) interaction by $t=0,1,\ldots$ with states $s'_t$.
We index high-level decisions by $k=0,1,\ldots$.
At high-level decision step $k$, the agent observes the high-level state $s_k$, selects a subgoal $z_k\in\mathcal{Z}$ and a budget $b_k\in\{0,\ldots,B_k\}$ (where $B_k$ is the remaining resource), and the low-level controller then executes primitive actions for a random duration $\tau_k\in\{0,\ldots,b_k\}$, determined by subgoal termination or budget exhaustion.
During this execution, primitive transitions follow $s'_t \to s'_{t+1}$ for $t=t_k,\ldots,t_k+\tau_k-1$, where $t_k$ denotes the primitive time at which the $k$-th high-level decision is taken (with $t_0=0$).
The next high-level decision occurs at primitive time $t_{k+1}\coloneqq t_k+\tau_k$, and the corresponding next high-level state is $s_{k+1}$.
The cumulative \emph{primitive-time} discounted reward attributed to the high-level action $a^H_k=(z_k,b_k)$ is $R_k \;=\; \sum_{i=0}^{\tau_k-1}\gammaLL^{\,i}\, r_{t_k+i}$. This induces a semi-Markov transition $s_k \xrightarrow[(\tau_k,R_k)]{\,a^H_k\,} s_{k+1}$ with variable duration $\tau_k$ and reward $R_k$.
For tree-search backups, we use a \emph{decision-time} discount factor $\gamma\in[0,1)$ applied per high-level decision (we separate notation from $\gammaLL$ to avoid confusion).

The high-level planner consists of three learned components, each tailored for SSCO:
\begin{enumerate}[itemsep=0pt, topsep=3pt, parsep=0pt,leftmargin=*]
    \item \textbf{Representation function} $\reprfunc: \mathcal{S} \rightarrow \mathcal{H}$: This function maps the high-dimensional, often graph-structured state $s$ of an SSCO problem into a compact latent representation $h$. For AIM and SOP, we use Graph Neural Networks (GNNs) that can effectively capture the topological structure and node features, producing a fixed-size vector that summarizes the global state of the system.

    \item \textbf{Dynamics function} $\dynfunc: \mathcal{H} \times \mathcal{Z} \rightarrow (\mathcal{H}, \mathcal{R})$:
This function predicts the next \emph{decision-time} latent state $h_{k+1}$ (the latent component is denoted by $g_\theta^h(h,z)$) and the cumulative reward $R_k \in \mathcal{R}$ resulting from executing a high-level subgoal $z_k$.
Since our high-level planner branches only over $z$ (for tractable search in SSCO), the budget $b_k$ is selected \emph{after} planning by a subgoal- and state-conditioned policy $b_\psi(\cdot\mid h_k,z_k)$.
Accordingly, $\dynfunc$ is trained to model the \emph{budget-marginalized} SMDP transition induced by the learned budget rule and low-level controller:
\begin{multline}
\mathcal{P}^{\pi^L,b_\psi}(s_{k+1},R_k \mid s_k,z_k)
\;\triangleq\;
\mathbb{E}_{b_k\sim b_\psi(\cdot\mid h_k,z_k)} \\
\left[\mathcal{P}(s_{k+1},R_k \mid s_k,(z_k,b_k),\pi^L)\right],
\end{multline}
so that $g_\theta^h(h_k,z_k)$ approximates the post-subgoal latent under the endogenous distribution $b_\psi(\cdot\mid h_k,z_k)$, and $R_\theta(s_k,z_k)$ approximates the corresponding macro-return.
\footnote{
Conditioning $\dynfunc$ directly on $b$ is possible, but would increase the tree-search branching factor from $|\mathcal{Z}|$ to $|\mathcal{Z}|\cdot|\mathcal{B}|$ (substantially increasing MCTS cost) in SSCO where budgets are discrete and state-dependent.
Crucially, budget is \emph{not ignored}: it is chosen as a second-stage decision conditioned on the planned work and context via $b_\psi(\cdot\mid h_k,z_k)$.
Thus, from the planner's perspective, choosing $z$ induces a stochastic macro-transition that is \emph{marginalized} over $b$; any remaining variability is treated as part of the SMDP stochasticity and is captured by the modeling-error term in Appendix~\ref{app:theory}.}
The abstract subgoals are represented as learnable embeddings, allowing their semantic meaning to be discovered end-to-end during training. The subgoal set $\mathcal{Z}=\{z_1,\ldots,z_M\}$ is a finite
learned dictionary of embeddings, not a geometric subspace of the
latent state space $\mathcal{H}$.

    \item \textbf{Prediction function} $\predfunc:\mathcal{H} \rightarrow \mathcal{P} \times \mathcal{V}$: From a latent state $h_k$, this function outputs:
    \begin{itemize}[itemsep=0pt, topsep=2pt, parsep=0pt, leftmargin=10pt]
        \item A \textbf{policy prior} $\mathbf{p}_k \in \mathcal{P}$ over subgoals $\mathcal{Z}$.
        \item A \textbf{value} $v_k \in \mathcal{V}$ estimating the expected future return from $h_k$, where $\mathcal{V} \subset \mathbb{R}$.
    \end{itemize}
    These predictions guide MCTS, focusing planning on promising subgoals and enabling efficient evaluation without rolling out to episode end.
\end{enumerate}

\tparagraph{Planning process.}
 The HL action is $a^H_k=(z_k,b_k)$ with $\mathcal{A}_{HL}=\mathcal{Z}\times\mathcal{B}$. The planner operates as follows:
\begin{enumerate}[itemsep=0pt, topsep=0pt, parsep=0pt, leftmargin=*]
    \item Encode the current HL state: $h_k = h_\theta(s_k)$.
    \item Run MCTS over subgoals $z \in \mathcal{Z}$ using the learned latent dynamics and prediction models ($g_\theta$ and $f_\theta$).
    \item Obtain the search policy $\pi_k$ (from visit counts) and select $z_k \sim \pi_k$.
    \item Sample the budget from the subgoal-conditioned budget policy: $b_k \sim b_\psi(\cdot \mid h_k, z_k)$.
\end{enumerate}

\subsection{Unified Multi-Timescale World Model Learning}
\label{ssec:unified_mts}

A central difficulty in our setting is that the high-level policy lives in an SMDP: executing a subgoal and its budget from a state leads to a stochastic, variable number of primitive steps before the next high-level decision. We would like a \emph{single} world model that supports both (i) micro-scale MDP dynamics at the level of primitive steps and (ii) macro-scale dynamics at the level of subgoals that may run for many steps. In addition, this model should be \emph{planning ready}: given a latent state and a candidate subgoal, the planner should be able to predict the post-subgoal outcome in one model evaluation, without rolling out the low-level policy inside the search. At inference time, this SMDP transition model is used for
high-level search only. The low-level controller does not perform online model rollouts; instead, it benefits indirectly from the shared encoder, the learned latent geometry, and the selected subgoal embedding.

Our approach is to learn a \emph{unified latent space} in which temporal scale is encoded geometrically. The dynamics network ($\dynfunc:\mathcal{H}\times\mathcal{Z}\to\mathcal{H}\times\mathcal{R}$) implements a one-step SMDP transition
$\dynfunc(h_\theta(s), z) = \big(g_\theta^h(h_\theta(s),z), R_\theta(s,z)\big)$,
where $g_\theta^h(h_\theta(s),z)$ summarizes the post-subgoal latent state and $R_\theta(s,z)$ is the predicted cumulative reward over the (random) subgoal duration. We define the latent distance of a macro transition as
$d_\theta(s,z) \triangleq d\big(h_\theta(s),\, g_\theta^h(h_\theta(s), z)\big)$,
where $d(\cdot,\cdot)$ is the chordal distance between unit-normalized latents on the sphere. Since all latents are $\ell_2$-normalized, we use the equivalent form $d(u,v)=\|u-v\|_2$ for clarity. We score a subgoal by $\sigma_{\theta,\phi}(s,z) \;\triangleq\; d_\theta(s,z) + m_\phi(z)$,
with $m_\phi(z)$ a learned margin head that is regularized to avoid unbounded growth. In this design, the \emph{size} of the macro transition encodes the temporal scale associated with $z$, while the shared encoder $h_\theta$ also supports per-step micro dynamics.

We structure the training objective around three forces, inspired by attractive and repulsive dynamics in contrastive learning \citep{hadsell2006dimensionality}:
\begin{enumerate}[itemsep=0pt, topsep=0pt, parsep=0pt, leftmargin=*]
    \item \textbf{Micro-scale Pull.} A \emph{low-level cap} penalizes latent jumps between consecutive primitive states that exceed a small threshold, so that one primitive step corresponds to a bounded latent distance.
    \item \textbf{Macro-scale Push.} A \emph{high-level margin} enforces that the latent displacement produced by a macro transition for subgoal $z$ is at least a learned margin $m_\phi(z)$, preventing long-duration subgoals from collapsing to negligible moves.
    \item \textbf{Temporal Order.} A budget-aware ranking loss aligns the score $\sigma_{\theta,\phi}(s,z)$ with the effective duration of executed subgoals.
\end{enumerate}
Together, these terms shape a multi-timescale geometry: micro dynamics are locally smooth, while macro transitions reflect variable, agent-chosen temporal scopes.

\tparagraph{Formal objective.}
Let $\tau(s,z,b)\in\{0,\dots,b\}$ denote the realized number of primitive steps when executing the HL action $(z,b)$ from decision-time state $s$ (termination or budget exhaustion).\footnote{This is the stopping time of the
realized SMDP transition induced by the current policy and legality
constraints, not an unconstrained completion time.}
To avoid requiring paired rollouts from the same decision point, we form supervision from \emph{executed} HL transitions sampled from the experience distribution.
Concretely, let $(s_i,z_i,b_i)$ and $(s_j,z_j,b_j)$ be two HL transitions drawn (typically independently) from the replay distribution, with realized durations
$\tau_i \coloneqq \tau(s_i,z_i,b_i)$ and $\tau_j \coloneqq \tau(s_j,z_j,b_j)$.
We define the pairwise label
$Y \;\triangleq\; \mathrm{sign}\!\big(\tau_i-\tau_j\big)\;\in\;\{-1,0,1\}$.
We optimize three complementary losses that encourage (i) low-level MDP consistency, (ii) high-level under-displacement, and (iii) duration-aware ordering, respectively.
We use $(x)_+ \triangleq \max\{0,x\}$, and let $\capconst>0$ denote the low-level cap threshold that upper-bounds the per-step latent displacement.
\begin{subequations}\label{eq:unified-terms}
\begin{flalign}
& \mathcal{L}_{\mathrm{cap}}(\theta)
\mathrel{:=}\;
\underbrace{\mathbb{E}_{(s'_t,s'_{t+1})}
\big[(\,d(h_\theta(s'_t),h_\theta(s'_{t+1}))-\capconst\,)_+^{\,2}\big]}_{\text{Low-Level MDP Consistency (\emph{Pull})}}, \label{eq:unified-terms-cap} && \\[2pt]
& \mathcal{L}_{\mathrm{push}}(\theta,\phi)
\mathrel{:=}\;
\underbrace{\mathbb{E}_{(s,z)}
\big[(\,m_\phi(z)-d_\theta(s,z)\,)_+\big]}_{\text{High-Level Under-Displacement (\emph{Push})}}, \nonumber && \\[2pt]
& \mathcal{L}_{\mathrm{order}}(\theta,\phi)
\mathrel{:=}\;
\underbrace{
\mathbb{E}_{(s_i,z_i,b_i;\,s_j,z_j,b_j)}
\big[\,(\,1 - Y\,\Delta \sigma\,)_+\,\big]
}_{\text{Budget-Aware Order Consistency}}, \nonumber && \\
& \Delta \sigma
\coloneqq \sigma_{\theta,\phi}(s_i,z_i)-\sigma_{\theta,\phi}(s_j,z_j). \nonumber &&
\end{flalign}
\end{subequations}

Here, $\mathcal{L}_{\mathrm{cap}}$ is computed on primitive transitions $(s'_t,s'_{t+1})$ from the LL replay buffer, while $\mathcal{L}_{\mathrm{push}}$ and $\mathcal{L}_{\mathrm{order}}$ use HL states and executed subgoals collected during self-play.
For clarity, we defer the full definition of unified objective to \cref{eq:unified-loss} in Appendix~\ref{app:notation}.

This unified objective is designed to impose a meaningful temporal structure on the latent space. The \textbf{Low-Level MDP Consistency} term acts as a geometric regularizer, controlling the expected per-step latent motion. As derived in \textbf{Lemma~\ref{lem:macro-micro-one-sided}} (Appendix~\ref{app:theory}), minimizing this loss allows us to upper-bound the expected total latent path length of a low-level trajectory by a quantity proportional to its expected duration $\mathbb{E}[\tau]$. This establishes the foundational link between geometry and time.

The key insight is how these geometric constraints force temporal awareness to emerge. The \textbf{High-Level Under-Displacement} term encourages the latent displacement $d_\theta(s,z)$ to meet or exceed the learned margin $m_\phi(z)$. By combining this lower bound on displacement with the upper bound from the cap, \textbf{Proposition~\ref{prop:expected-duration-lb}} proves that $m_\phi(z)$ becomes a lower bound for the subgoal's expected duration $\mathbb{E}[\tau \mid s,z]$ under mild conditions. The planner can therefore use this geometric quantity as a learnable proxy for temporal cost, without introducing an explicit duration predictor in the world model.

Finally, the \textbf{Budget-Aware Order Consistency} term calibrates this structure for decision-making. We prove in \textbf{Theorem~\ref{thm:order-consistency-weak}} that minimizing this ranking loss yields a scorer $\sigma_{\theta,\phi}$ that orders subgoals consistently with their effective, budget-aware mean durations. The full theoretical analysis in Appendix~\ref{app:theory} culminates in a regret bound (\textbf{Theorem~\ref{thm:planning-regret}}) connecting model quality to planning performance over macro-actions.

\tparagraph{Relation to prior HRL and multi-timescale models.}
This unified objective yields what we call a \emph{Multi-Timescale SMDP world model}: a single latent space simultaneously respects micro-scale MDP dynamics and macro-scale SMDP dynamics, with temporal scale emerging from geometry rather than from a fixed slow stride. This contrasts with:
\begin{itemize}[itemsep=0pt, topsep=0pt, parsep=0pt, leftmargin=*]
\item \textbf{Model-free options and budget-only HRL} (\eg, WS-option), which primarily modulate the \emph{extent} of execution (termination/budget) but provide no explicit high-level intent. Consequently, the low level is not directed toward a concrete \emph{subtask} (\emph{what} to pursue), and the high level lacks an explicit transition model for reasoning about stochastic, path-dependent SSCO evolution under variable-duration decisions.
\item \textbf{Fixed-stride hierarchical world models} (\eg, Director, MTS-WM), where slow dynamics are defined at a fixed interval, so the high level operates with fixed-duration macro steps. In contrast, our macro transitions compress a variable number of micro steps into one latent move, and their length adapts to the subgoal and environment.
\end{itemize}

In the appendix, we show that under mild assumptions, this objective suffices to derive a geometry-duration coupling, a duration lower bound from the learned margins, and an order-consistent scorer, and we relate these properties to planning regret (\cref{lem:macro-micro-one-sided,prop:expected-duration-lb,thm:order-consistency-weak,thm:planning-regret}). Empirically, we find that this multi-timescale geometry is essential for making latent-space tree search planning effective in SSCO.

\subsection{Subgoal-Conditioned Budget Allocation}
SSCO problems involve budget constraints that must be allocated across sequential decisions. We model budget selection as a subgoal-conditioned policy over discrete allocations.
Specifically, at HL step $k$ we parameterize
$b_\psi(\cdot \mid h_k, z_k)\in \Delta(\{0,1,\ldots,B_k\})$,
where $B_k$ denotes the remaining budget at step $k$, and the distribution is conditioned on the current latent state and the selected subgoal. This encourages context-aware resource allocation that can reflect each subgoal's temporal scale.

\tparagraph{Budget head as a learned policy (actor--critic).}
We model budget selection as a stochastic policy over discrete allocations, conditioned on the current HL latent state and the \emph{selected subgoal}. While we use a standard policy-gradient actor--critic objective similar in form to \citet{feng2025sequential}, our key difference is \emph{task-conditioned budgeting}: the agent first commits to \emph{what} to do via $z_k$, and then allocates \emph{how much resource} to spend via $b_k \sim b_\psi(\cdot \mid h_k,z_k)$. This coupling lets budgets reflect the planned abstract task and its context, rather than being predicted as a subgoal-agnostic scalar.

\tparagraph{Search-backed return for budgeting.}
We train the budget policy with a transition-conditioned target
$\hat G^{(b)}_k = R_k + \gamma\, v^{\mathrm{TS}}(s_{k+1})$,
where $v^{\mathrm{TS}}(s_{k+1})$ is the next-step search-backed root value from MCTS.
See Appendix~\ref{app:budget-rl} for full details.

\tparagraph{High-level training objective.}
We train the high-level planner with MuZero-style search-backed policy/value/reward losses, augmented with the MTS--SMDP geometric constraints and a budget actor--critic objective. Full definitions are provided in Appendix~\ref{app:hl_objective}.

\begin{figure*}[t!]
    \centering
    \begin{subfigure}{0.33\textwidth}
        \includegraphics[width=\linewidth]{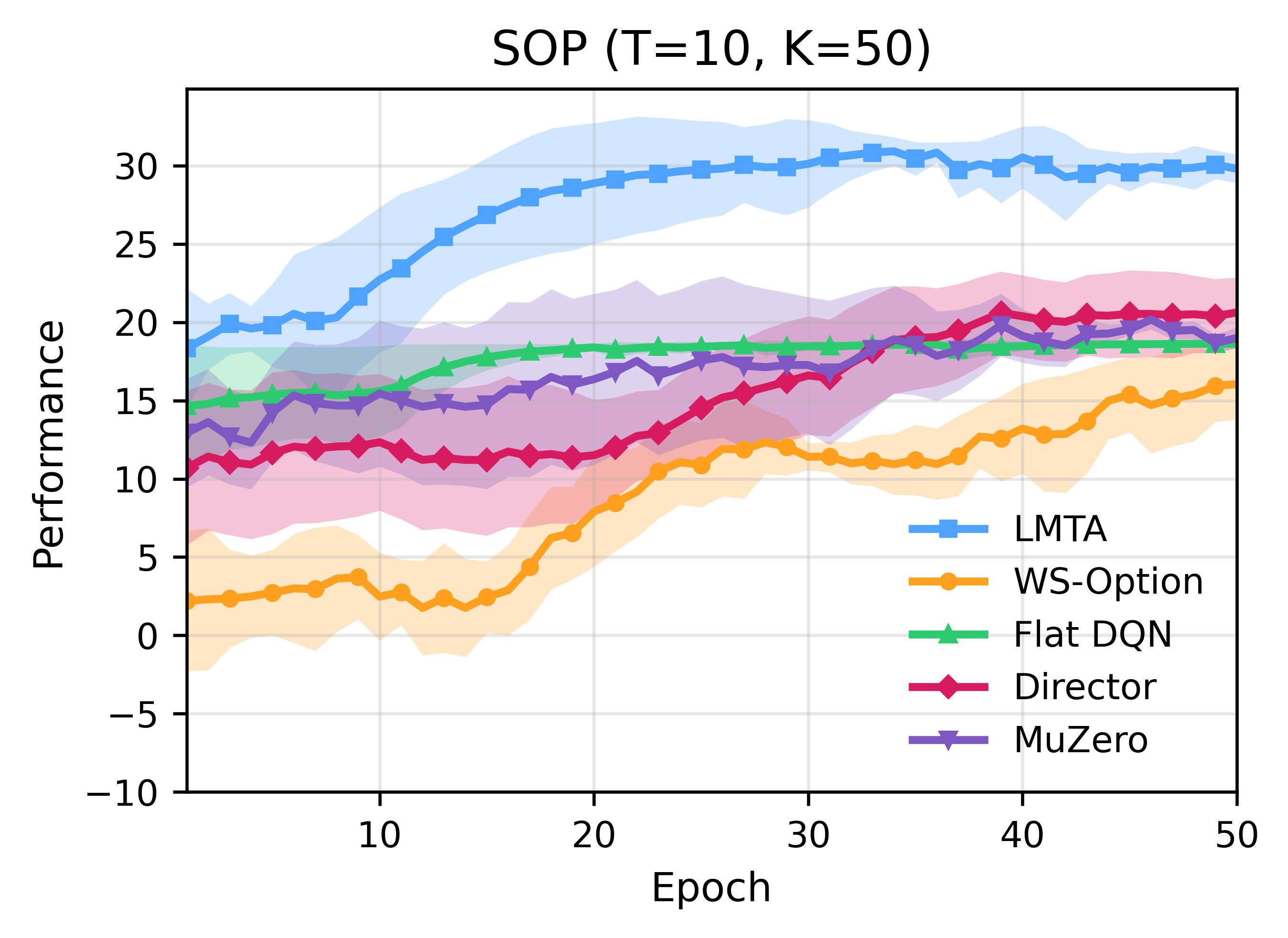}
        \caption{SOP $(T=10, K=50)$}
        \label{fig:sop_k50}
    \end{subfigure}
    \begin{subfigure}{0.33\textwidth}
        \includegraphics[width=\linewidth]{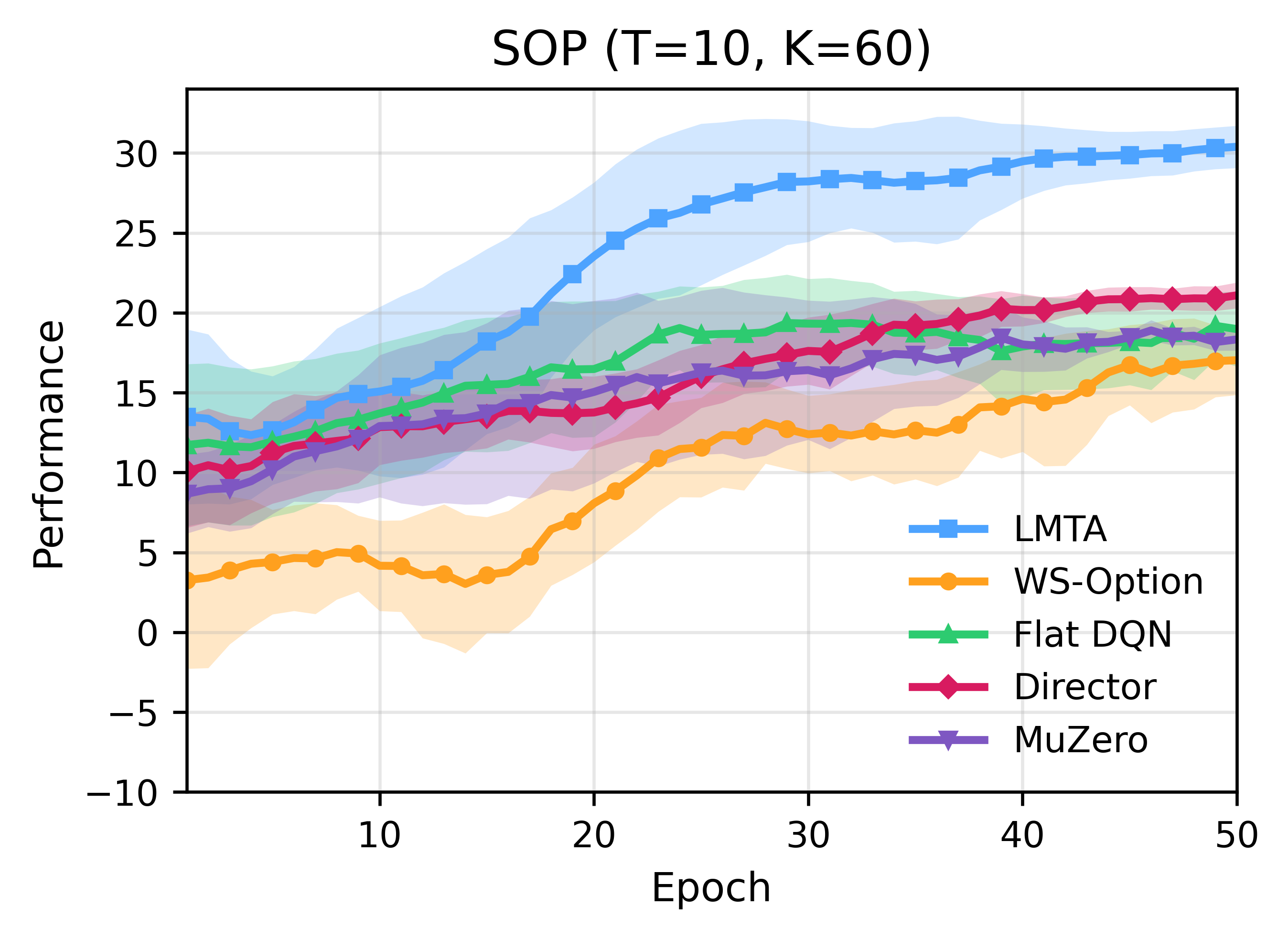}
        \caption{SOP $(T=10, K=60)$}
        \label{fig:sop_k60}
    \end{subfigure}
    \begin{subfigure}{0.33\textwidth}
        \includegraphics[width=\linewidth]{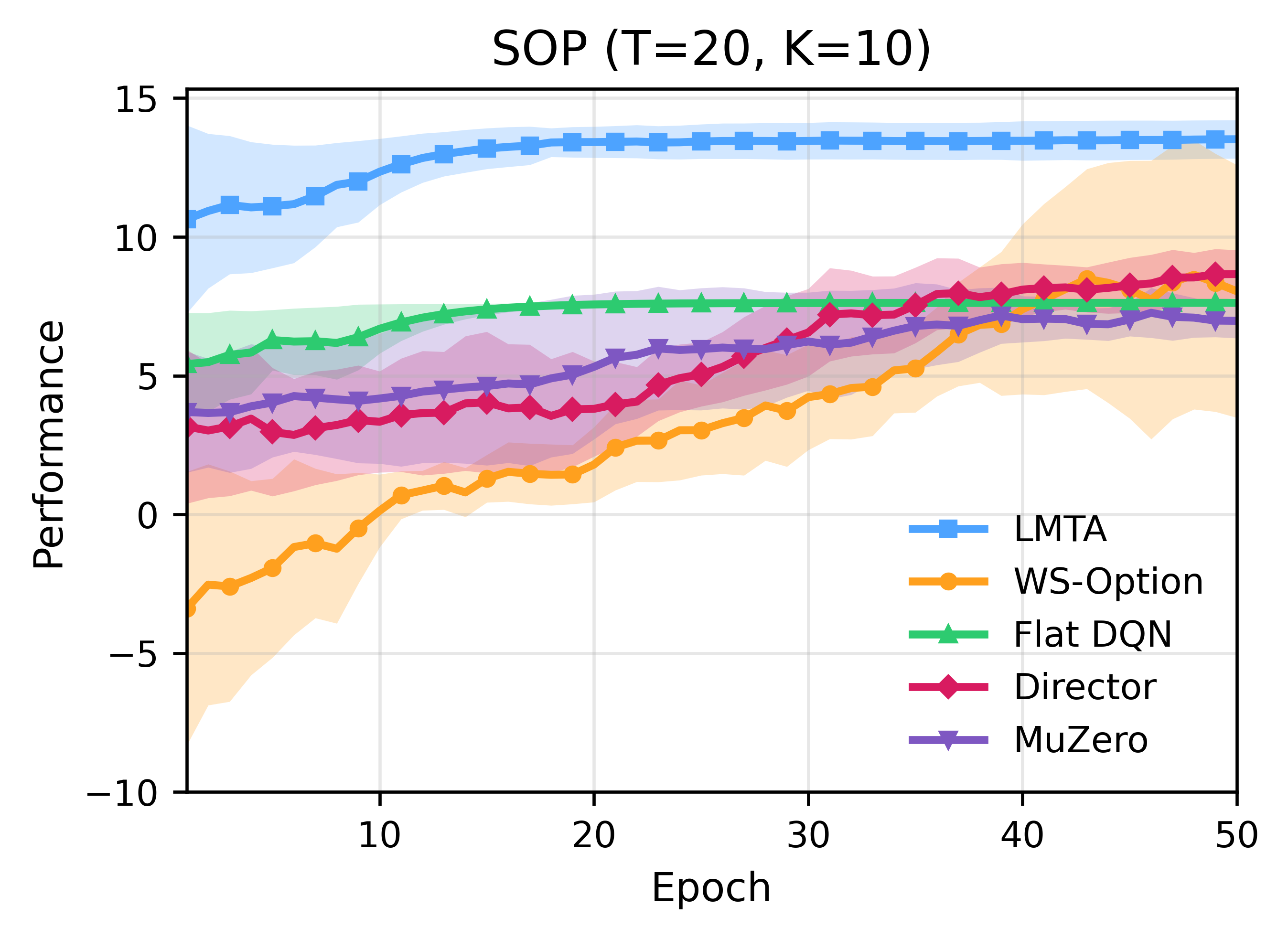}
        \caption{SOP $(T=20, K=10)$}
        \label{fig:sop_t20}
    \end{subfigure}\\
    \begin{subfigure}{0.33\textwidth}
        \includegraphics[width=\linewidth]{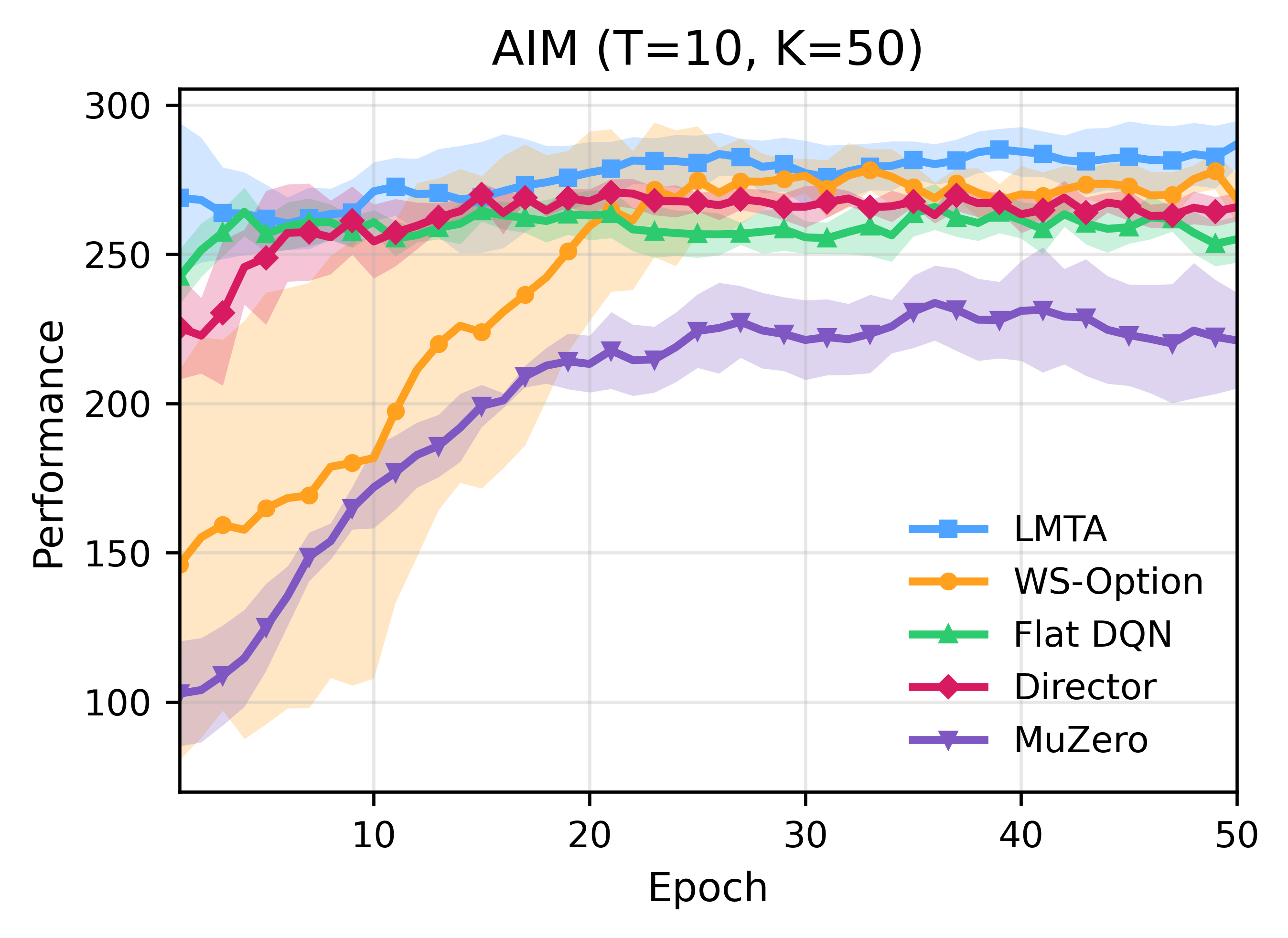}
        \caption{AIM $(T=10, K=50)$}
        \label{fig:aim_k50}
    \end{subfigure}
    \begin{subfigure}{0.33\textwidth}
        \includegraphics[width=\linewidth]{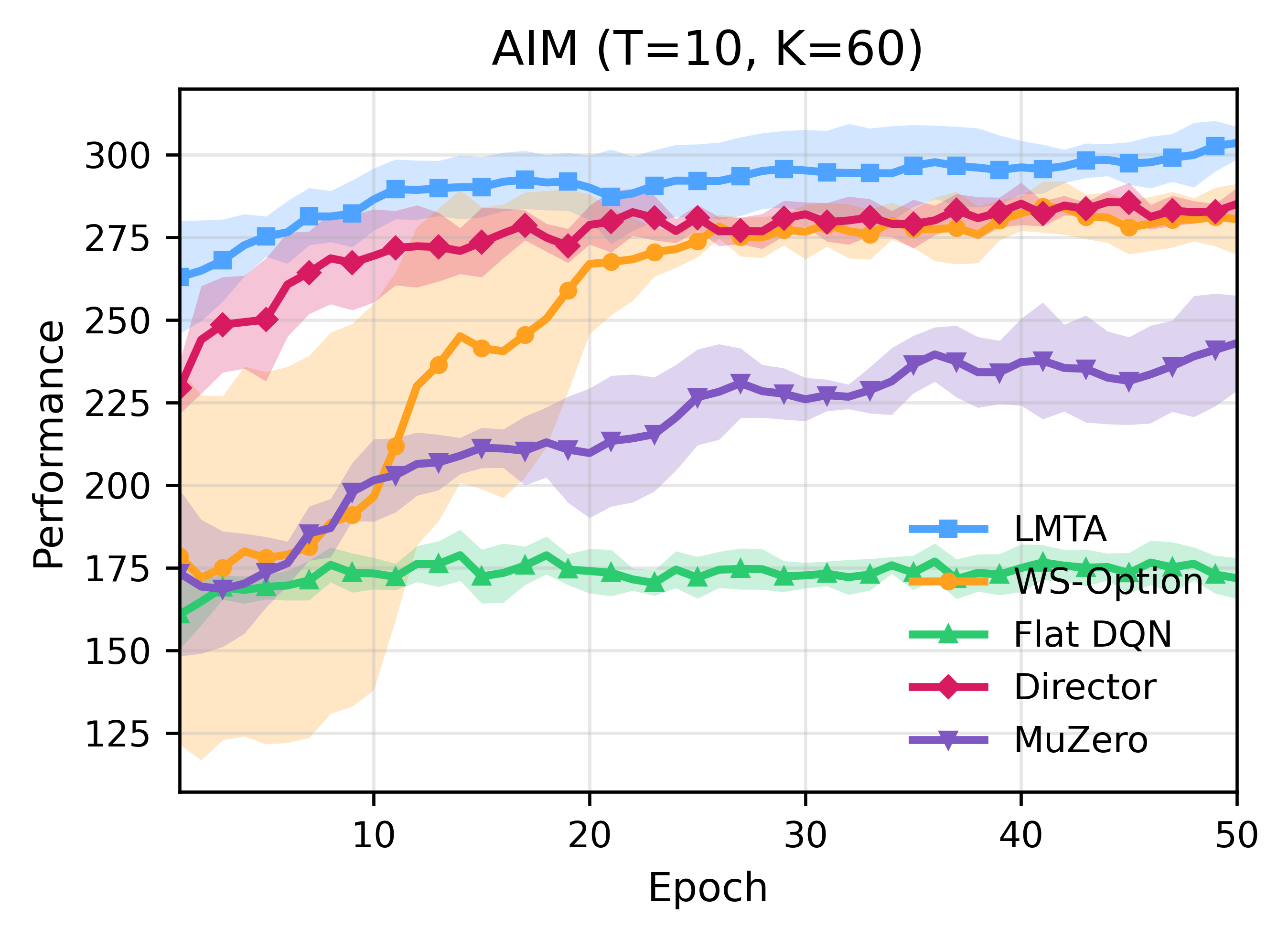}
        \caption{AIM $(T=10, K=60)$}
        \label{fig:aim_k60}
    \end{subfigure}
    \begin{subfigure}{0.33\textwidth}
        \includegraphics[width=\linewidth]{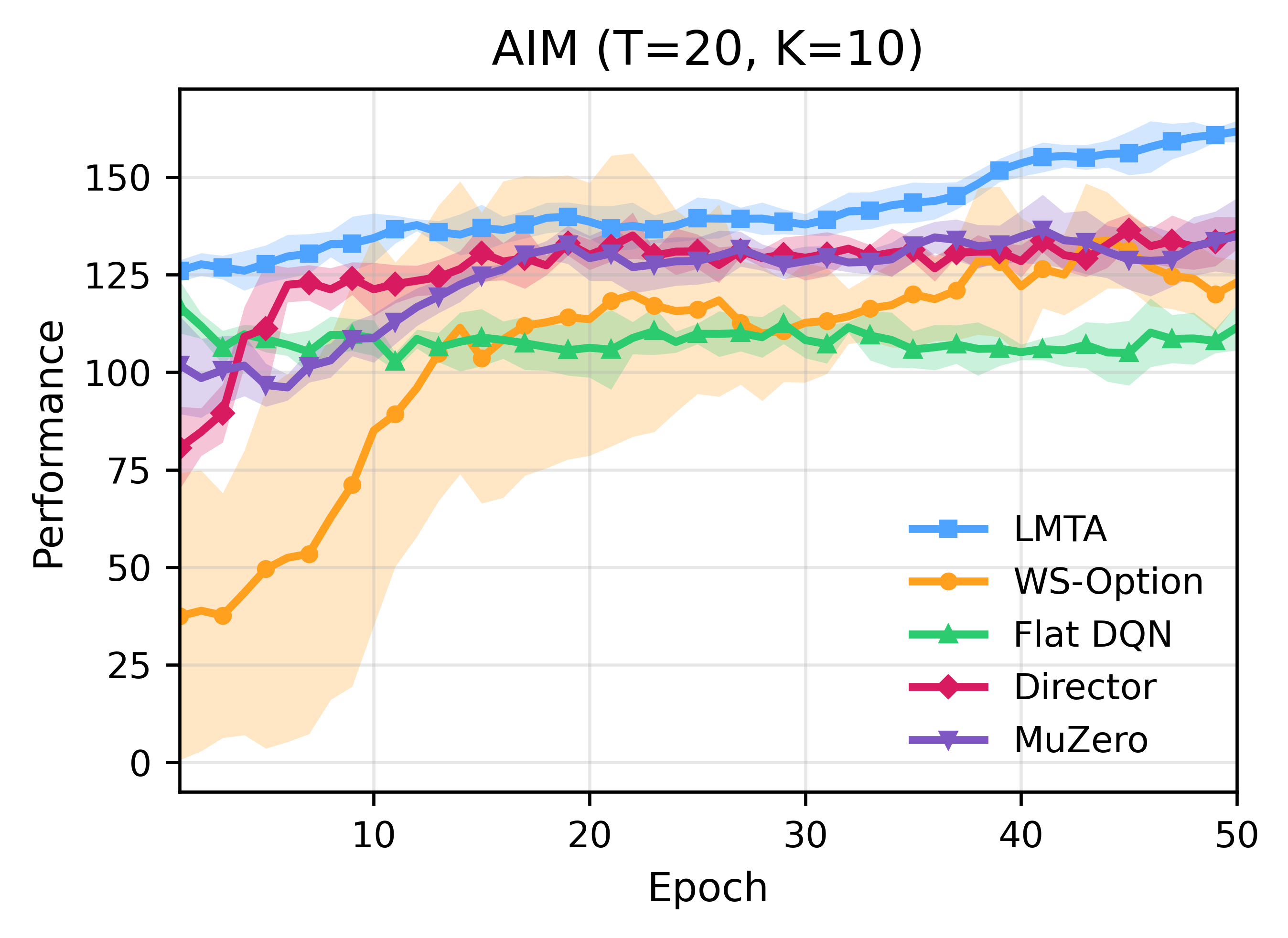}
        \caption{AIM $(T=20, K=10)$}
        \label{fig:aim_t20}
    \end{subfigure}

    \caption{Training curves comparing LMTA against WS-Option, Flat DQN, Director, and Flat MuZero across AIM and SOP settings.}
    \label{fig:learning_curves}
\end{figure*}

\subsection{Low-Level Policy for Combinatorial Actions}
\label{ssec:ll_policy}
Once the high-level planner selects a subgoal $z_k$ and allocates a corresponding budget $b_k$, the low-level policy translates this directive into a sequence of primitive combinatorial actions. At this level, the agent must select from a vast, discrete action space under local information, and a purely myopic policy would fail to align its actions with the long-term plan.

Conditioning on $z_k$ is therefore important. The subgoal embedding communicates strategic intent and the implicit temporal scale learned by the multi-timescale world model, guiding the low-level policy on \emph{what} to achieve. We implement the low-level policy as a subgoal-conditioned action-value network trained with a DQN objective. A GNN processes the current graph state, concatenating $z_k$ to each node's representation to compute Q-values for each primitive action (for example, node selection) in the context of the subgoal. 

The network minimizes the Bellman error over transitions in a low-level replay buffer $\mathcal{D}_{LL}$:
\begin{equation}
\mathcal{L}_{LL}
= \mathbb{E}_{(s'_t, z, a_t, r_t, s'_{t+1}) \sim \mathcal{D}_{LL}}
\left[ \left( Q(s'_t, a_t \mid z) - y_t \right)^2 \right], \nonumber
\end{equation}
with targets computed using a target network $Q'$ and the \emph{primitive-time} discount $\gammaLL$: $y_t = r_t + \gammaLL \max_{a'} Q'(s'_{t+1}, a' \mid z)$.
During execution, the low-level policy uses this action-value function to generate primitive actions that pursue $z_k$, acting for up to $b_k$ steps and translating the high-level plan into a concrete combinatorial solution.

\begin{table}[t]
   \centering
   \scriptsize
   \setlength{\tabcolsep}{2.4pt}
   \renewcommand{\arraystretch}{1.08}
   \caption{AIM results on $N = 500$ graphs for SSCO-specific baselines and domain heuristics. Bold denotes the best mean. LMTA improves over WS-option with $p \le 0.05$.}
   \label{tab:result_aim}
   \begin{tabular}{@{}lcccc@{}}
       \toprule
       Method &
       \makecell{$T=10$\\$K=50$} &
       \makecell{$T=10$\\$K=60$} &
       \makecell{$T=10$\\$K=70$} &
       \makecell{$T=20$\\$K=10$} \\
       \midrule
       \textbf{LMTA} &
       \textbf{286.11$\pm$4.05} &
       \textbf{303.50$\pm$2.52} &
       \textbf{324.15$\pm$2.38} &
       \textbf{161.71$\pm$1.39} \\
       WS-option &
       268.23$\pm$3.75 &
       280.44$\pm$5.46 &
       301.53$\pm$9.10 &
       123.02$\pm$2.88 \\
       Flat DQN &
       255.56$\pm$4.15 &
       171.80$\pm$3.16 &
       243.46$\pm$3.78 &
       111.46$\pm$2.95 \\
       average-degree &
       268.16$\pm$4.29 &
       292.83$\pm$4.68 &
       311.35$\pm$3.91 &
       137.26$\pm$2.71 \\
       average-score &
       272.26$\pm$4.38 &
       293.81$\pm$4.75 &
       313.74$\pm$3.02 &
       150.36$\pm$3.01 \\
       normal-degree &
       127.27$\pm$2.43 &
       147.09$\pm$2.81 &
       165.82$\pm$2.15 &
       31.41$\pm$1.05 \\
       normal-score &
       134.31$\pm$2.59 &
       155.74$\pm$2.99 &
       172.95$\pm$2.33 &
       34.43$\pm$1.12 \\
       static-degree &
       266.51$\pm$4.27 &
       287.66$\pm$4.61 &
       296.18$\pm$3.95 &
       117.26$\pm$2.33 \\
       static-score &
       269.21$\pm$4.31 &
       291.01$\pm$4.69 &
       310.82$\pm$4.00 &
       130.09$\pm$2.65 \\
       \bottomrule
   \end{tabular}
\end{table}

\begin{table}[t]
   \centering
   \scriptsize
   \setlength{\tabcolsep}{4.0pt}
   \renewcommand{\arraystretch}{1.08}
    \caption{SOP results on $N = 500$ graphs for SSCO-specific baselines and domain heuristics. Bold denotes the best mean. LMTA improves over WS-option with $p \le 0.05$.}
   \label{tab:result_rp}
   \begin{tabular}{@{}lcccc@{}}
       \toprule
       Method &
       \makecell{$T=10$\\$K=50$} &
       \makecell{$T=10$\\$K=60$} &
       \makecell{$T=10$\\$K=70$} &
       \makecell{$T=20$\\$K=10$} \\
       \midrule
       \textbf{LMTA} &
       \textbf{29.79$\pm$0.46} &
       \textbf{30.39$\pm$0.67} &
       \textbf{31.60$\pm$1.94} &
       \textbf{13.51$\pm$0.45} \\
       WS-option &
       16.08$\pm$1.32 &
       17.03$\pm$0.99 &
       12.99$\pm$1.25 &
       8.04$\pm$2.41 \\
       Flat DQN &
       18.44$\pm$0.18 &
       18.99$\pm$1.06 &
       15.90$\pm$0.67 &
       7.62$\pm$0.22 \\
       GA &
       12.35$\pm$0.21 &
       14.51$\pm$0.27 &
       15.23$\pm$0.22 &
       7.91$\pm$0.18 \\
       Greedy &
       17.17$\pm$0.22 &
       21.91$\pm$0.25 &
       23.58$\pm$0.16 &
       4.89$\pm$0.15 \\
       \bottomrule
   \end{tabular}
\end{table}

\section{Experiments}
\label{sec:experiments}

\subsection{Experimental Setup}

\paragraph{Domains}
We evaluate LMTA on two canonical SSCO domains, using larger and more challenging problem instances than those used in prior work. To ensure a direct and fair comparison, the environments and problem parameters are scaled up but remain consistent in principle with \citet{feng2025sequential}\footnote{Full implementation details for both environments are provided in Appendix~\ref{app:envs}.}.
\begin{itemize}[itemsep=0pt, topsep=0pt, parsep=0pt, leftmargin=*]
   \item \textbf{Adaptive Influence Maximization (AIM):} A stochastic propagation problem on graphs, for which we use instances with $N=500$ nodes.
   \item \textbf{Stochastic Orienteering Problem (SOP):} A stochastic route planning problem, for which we test on instances with $N=500$ nodes.
\end{itemize}
\tparagraph{Additional domain.} We also evaluate on the \textbf{Power-2500} benchmark, which stresses larger-scale combinatorial structure; due to space, we report its full setup and results in Appendix~\ref{app:additional_results}.

\tparagraph{Evaluation Protocol}
To ensure a robust assessment, we evaluate all methods across a matrix of problem settings, varying both the episode horizon ($T$) and the total resource budget ($K$). Specifically, we test on configurations of $(T, K) \in \{(10, 50), (10, 60), (10, 70),  (20, 10)\}$.
\tparagraph{Statistical reporting.}
We run 10 random seeds per configuration and report \textbf{mean $\pm$ standard error of the mean (s.e.m.)}. $p$-values are computed from two-sided Welch $t$-tests on seed means.

\tparagraph{Implementation details and hyperparameters.}
Unless otherwise stated, we use a shared set of training hyperparameters across the domains, summarized in
Table~\ref{tab:impl_hparams_shared} (Appendix~\ref{app:architectures}) and  Table~\ref{tab:impl_hparams_domain} (Appendix~\ref{app:architectures}).
Hyperparameter sensitivity sweeps are provided in Appendix~\ref{app:hparam}.

\tparagraph{Baselines}
We compare LMTA against: (i) the SSCO-specific WS-option~\citep{feng2025sequential}, (ii) fixed-stride hierarchical model-based RL (Director~\citep{hafner2022deep}), (iii) a flat latent-space planning baseline (MuZero~\citep{schrittwieser2020mastering}), and (iv) a flat model-free baseline (Flat DQN), alongside strong domain heuristics for AIM and SOP.
Full baseline definitions and implementation details are provided in Appendix~\ref{app:baselines}.

\subsection{Comparison with Baselines}
As shown in Tables~\ref{tab:result_aim} and~\ref{tab:result_rp},
LMTA outperforms the SSCO-specific baselines and domain heuristics
across the tested configurations. We observed the performance gap widen as the problem complexity (for example, budget $K$) increases, highlighting the benefit of planning over variable-duration abstractions supported by the multi-timescale world model.

Figure~\ref{fig:learning_curves} complements the tabular comparison by including Director and Flat MuZero, and further illustrates the advantages of our approach: LMTA not only converges to a higher final performance but also shows greater sample efficiency and stability throughout training, whereas WS-option often exhibits larger variance and slower convergence, especially in the SOP domain, which is consistent with the difficulty of credit assignment for model-free high-level policies in SMDPs. The Flat DQN baseline sometimes learns quickly yet consistently plateaus at a suboptimal level, highlighting the importance of hierarchical abstraction for effective long-horizon planning in these complex settings. Although WS-option is a strong baseline in AIM, its performance degrades markedly on SOP, exposing a limitation of budget-only HRL in tasks requiring spatial reasoning: SOP is essentially a routing problem where the value of an action depends on the global path, so a scalar budget conveys \emph{how long} to travel but not \emph{where} to go, leaving the low-level policy to explore myopically. In contrast, LMTA’s subgoals can capture \emph{target regions} or \emph{structural patterns}, 
\eg, community clusters in AIM (Fig.~\ref{fig:strategy_analysis}a) and spatially dense neighborhoods in SOP (Fig.~\ref{fig:strategy_analysis}b, dashed circle), providing the spatial context that, in our analysis, helps the MCTS planner assemble globally coherent routes.

Beyond the main comparisons, Appendix~\ref{app:additional_results} evaluates AIM at the $N=200$ benchmark scale used by \citet{feng2025sequential} and reports a same-protocol scaling curve over $N\in\{200,300,400,500\}$. These results show that LMTA's advantage is not specific to the larger $N=500$ instances and remains consistent as graph size increases. Appendix~\ref{app:additional_results} also reports complementary checks on Power-2500, tiny AIM instances where exhaustive enumeration of the stochastic decision tree is tractable, and Sampled MuZero~\citep{hubert2021learning}; these results respectively support LMTA's robustness on a larger graph, its closeness to the exact optimum when exact solution is tractable, and the need for variable-duration abstraction beyond primitive-action sampling. Finally, Appendix~\ref{app:wallclock} reports matched compute sweeps showing that LMTA remains strong even with little or no online search.

\subsection{Assessing the Learned Policies}
To rigorously dissect the contributions of LMTA's core architectural components, we conducted a comprehensive ablation study. We progressively build up from an option-based baseline (A) to the full LMTA model (E), quantifying the performance gain at each step. The results, presented in Table \ref{tab:ablation_studies} for both the AIM ($N=500, K=70$) and SOP ($N=500, K=70$) domains, support our design choices and highlight the synergy between the framework's components.

\begin{table}[h!]
\centering
\begin{subfigure}{0.48\textwidth}
    \centering
    \small
    \caption{AIM Ablation ($N=500, K=70$).}
    \label{tab:ablation_results_aim}
    \begin{tabular}{lc}
        \toprule
        \textbf{Algorithm Variant} & \textbf{Avg. Reward} \\
        \midrule
        A: Baseline & 301.53 \sem{9.10} \\
        B: + LL Subgoal-Cond. & 309.52 \sem{3.71} \\
        C: + Budget Subgoal-Cond. & 314.65 \sem{3.80} \\
        D: + MTS (Fixed Margin) & 322.04 \sem{2.91} \\
        \textbf{E: LMTA (Full Model)} & \textbf{324.15 \sem{2.38}} \\
        \midrule
        F: LMTA-MF (Model-Free) & 307.29 \sem{5.62} \\
        \bottomrule
    \end{tabular}
\end{subfigure}
\hfill %
\begin{subfigure}{0.48\textwidth}
    \centering
    \small
    \caption{SOP Ablation ($N=500, K=70$).}
    \label{tab:ablation_results_sop}
    \begin{tabular}{lc}
        \toprule
        \textbf{Algorithm Variant} & \textbf{Avg. Reward} \\
        \midrule
        A: Baseline & 12.99 \sem{1.25} \\
        B: + LL Subgoal-Cond. & 17.03 \sem{1.98} \\
        C: + Budget Subgoal-Cond. & 22.75 \sem{1.05} \\
        D: + MTS (Fixed Margin) & 28.62 \sem{1.19} \\
        \textbf{E: LMTA (Full Model)} & \textbf{31.60 \sem{1.94}} \\
        \midrule
        F: LMTA-MF (Model-Free) & 15.89 \sem{1.91} \\
        \bottomrule
    \end{tabular}
\end{subfigure}
\caption{Ablation study results.}
\label{tab:ablation_studies}
\end{table}

\tparagraph{From Budget-Only to Goal-Conditioned Execution}
Our starting point is a strong option HRL baseline akin to \textit{WS-option} (A), which uses a model-free high-level policy to allocate a budget but does not communicate a strategic subgoal. In our next variant (B), we introduce a MuZero planner that generates subgoals and conditions the low-level policy on them, while keeping the budget allocation independent of the subgoal. The performance gain from A to B demonstrates the immediate value of providing the low-level policy with strategic direction (\emph{what} to do), even when the resource allocation (\emph{how long} to act) is not yet aligned with that strategy. Note that at this stage, the MuZero planner is trained without the MTS objective.

\tparagraph{Aligning Resources with Strategy}
In variant C, we take the logical next step by also conditioning the budget allocation on the selected subgoal. The performance improvement from B to C shows the benefit of this alignment. When the agent can allocate more resources to more ambitious subgoals and fewer to simple ones, its planning becomes more efficient and effective. At this stage, we have a complete goal-conditioned HRL agent, but its world model remains a standard predictive model.

\tparagraph{The Critical Role of the MTS World Model}
A clear performance improvement occurs when moving from variant C to D, where we introduce the \textit{MTS objective with a fixed margin}. For this, we use the average budget times of the low level fixed margin $\capconst$. This variant uses our proposed push-pull dynamic to structure the latent space, but with a single, fixed temporal scale for all subgoals. This improvement underscores our central claim: a generic application of model-based planning may be insufficient for SMDPs. Explicitly structuring the latent space for multi-timescale reasoning is the critical component that unlocks effective planning over temporally abstract actions.

Moving from variant D to the full LMTA model (E), we replace the fixed margin with the proposed dynamic, learnable margin predictor. The resulting performance gain highlights the value of adaptive learning strategies. Allowing the agent to discover that different subgoals have different intrinsic temporal scales provides the final layer of strategic flexibility required to address these complex, multi-stage optimization problems.

\tparagraph{The Benefit of High-Level Planning}
Finally, to isolate the contribution of model-based planning itself, we evaluate \textit{LMTA-MF (Model-Free)} (F). This variant retains the full hierarchical and goal-conditioned structure of LMTA but replaces the MuZero planner with a reactive, model-free DQN policy. Its performance is clearly inferior to the full, planning-based model (E). This result confirms that in stochastic combinatorial domains, the foresight provided by the model-based planner is a key driver of performance, enabling the agent to find more robust and effective long-term strategies.

\section{Conclusion}
\label{sec:conclusion}
We introduce LMTA, a model-based hierarchical framework for sequential stochastic combinatorial decision-making. LMTA learns a multi-timescale SMDP world model that makes latent macro-transition magnitudes reflect subgoal-specific temporal scales, enabling tree-search planning over variable-duration abstractions without explicit duration prediction. Combined with a subgoal-conditioned budget policy for resource allocation, LMTA outperforms strong baselines on challenging SSCO benchmarks. These results suggest that SMDP-aware multi-timescale world models are a promising foundation for model-based HRL in other long-horizon, resource-constrained planning problems.

\section*{Acknowledgements}

We acknowledge CSC – IT Center for Science, Finland, for awarding this project access to the LUMI supercomputer, owned by the EuroHPC Joint Undertaking, hosted by CSC (Finland) and the LUMI consortium through CSC. We acknowledge the computational resources provided by the Aalto Science-IT project. We acknowledge funding from Research Council of Finland (353198, 357301).

\section*{Impact Statement}
This work develops general-purpose methodology for planning and learning in sequential stochastic combinatorial optimization, with experiments including Adaptive Influence Maximization (AIM) as a benchmark abstraction of diffusion processes on networks.
Such methods can have beneficial applications, for example improving the allocation of limited resources in public health messaging, emergency response, or information campaigns aimed at increasing awareness of verified guidance.
However, AIM-style optimization can also be misused to amplify misinformation, enable targeted persuasion, or exacerbate representational harms when deployed on human social networks.
Our experiments use synthetic or benchmark instances and do not constitute deployment guidance; nevertheless, we acknowledge the dual-use nature of influence maximization and encourage future work to incorporate safeguards such as policy constraints, auditing and transparency mechanisms, and domain-specific oversight when applying these techniques in real-world sociotechnical settings.

\bibliography{example_paper}
\bibliographystyle{icml2026}

%%%%%%%%%%%%%%%%%%%%%%%%%%%%%%%%%%%%%%%%%%%%%%%%%%%%%%%%%%%%%%%%%%%%%%%%%%%%%%%
%%%%%%%%%%%%%%%%%%%%%%%%%%%%%%%%%%%%%%%%%%%%%%%%%%%%%%%%%%%%%%%%%%%%%%%%%%%%%%%
% APPENDIX
%%%%%%%%%%%%%%%%%%%%%%%%%%%%%%%%%%%%%%%%%%%%%%%%%%%%%%%%%%%%%%%%%%%%%%%%%%%%%%%
%%%%%%%%%%%%%%%%%%%%%%%%%%%%%%%%%%%%%%%%%%%%%%%%%%%%%%%%%%%%%%%%%%%%%%%%%%%%%%%
\newpage
\appendix
\onecolumn
\section*{Appendix Contents}
\vspace{5mm}
\setlength{\parindent}{0pt}
\setlength{\parskip}{2mm}

\textbf{A \quad Theoretical Analysis for the MTS-SMDP World Model} \dotfill \pageref{app:theory}

\textbf{B \quad Background on Latent-Space Tree Search Planning in LMTA} \dotfill \pageref{app:tree_search_background}

\textbf{C \quad Algorithm} \dotfill \pageref{app:algorithm}

\textbf{D \quad Environments} \dotfill \pageref{app:envs}

\textbf{E \quad Baseline Details} \dotfill \pageref{app:baselines}

\textbf{F \quad Network Architecture Details} \dotfill \pageref{app:architectures}

\textbf{G \quad Training Details for the Budget Actor–Critic} \dotfill \pageref{app:budget-rl}

\textbf{H \quad Additional Ablations on MTS Loss Terms} \dotfill \pageref{app:loss_term_ablation_avg_reward}

\textbf{I \quad Hyper-parameter Sensitivity} \dotfill \pageref{app:hparam}

\textbf{J \quad Zero-Shot Generalization to Large Graphs} \dotfill \pageref{app:gen_large_graphs}

\textbf{K \quad Qualitative Analysis of Subgoal Specialization} \dotfill \pageref{app:qualitative}

\textbf{L \quad Validation of the Multi-Timescale World Model via Kendall's Tau} \dotfill \pageref{app:kendall_validation}

\textbf{M \quad Calibration of Value--Geometry Smoothness} \dotfill \pageref{app:calibration}

\textbf{N \quad Additional Results} \dotfill \pageref{app:additional_results}

\textbf{O \quad Computational Cost} \dotfill \pageref{app:wallclock}

\clearpage

% =========================
% ====== SECTION B ========
% =========================
% =========================
% ====== APPENDIX A (UPDATED): Theoretical Analysis for the MTS-SMDP World Model ======
% =========================

\section{Theoretical Analysis for the MTS-SMDP World Model}
\label{app:theory}

This appendix provides complete assumptions and detailed theoretical
analysis. For completeness, we restate the full unified objective in
Appendix~\ref{app:notation}.
All latent outputs of $h_\theta$ and $g_\theta$ are $\ell_2$-normalized to lie on the unit sphere $\mathbb{S}^{d-1}$, and $d(\cdot,\cdot)$ denotes the chordal distance on $\mathbb{S}^{d-1}$, as defined in the main text.
We use $\mathbf{1}\{\cdot\}$ for indicator functions and $\{\mathcal{F}_i\}_{i\ge 0}$ for the filtration up to step $i$.

\subsection{Notation and Unified Loss}
\label{app:notation}

Let $h_\theta:\mathcal{S}\to\mathbb{R}^d$ and $g_\theta^h:\mathbb{R}^d\times\mathcal{Z}\to\mathbb{R}^d$ be the shared encoder and HL transition head; both outputs are $\ell_2$-normalized on forward pass to lie on $\mathbb{S}^{d-1}$.
We write the chordal distance on the unit sphere as $d(\cdot,\cdot)$ (defined in the main text). Define the macro-step latent displacement
\[
d_\theta(s'_t,z)\;=\;d\!\big(h_\theta(s'_t),\,g_\theta^h(h_\theta(s'_t),z)\big),
\qquad
\sigma_{\theta,\phi}(s'_t,z)\;=\;d_\theta(s'_t,z)\,+\,m_\phi(z),
\]
where $m_\phi$ is the learnable margin head (with an $\ell_2$ regularizer).

Let $\capconst>0$ denote the per-step cap used in the Pull term; since $d(\cdot,\cdot)$ is chordal distance on $\mathbb{S}^{d-1}$, we have $d(u,v)\in[0,2]$ and thus $\capconst\in(0,2]$ in principle.

Let $(z,b)$ denote the chosen HL subgoal and budget. Executing $(z,b)$ from a high-level decision point corresponding to some primitive time $t$ induces a primitive rollout segment
\[
(s'_t, s'_{t+1},\ldots, s'_{t+\tau}),
\]
with realized duration $\tau=\tau(s'_t,z,b)\in\{0,\ldots,b\}$ (termination or budget exhaustion).

For two executed HL transitions $(s_i,z_i,b_i)$ and $(s_j,z_j,b_j)$ drawn from the decision-time experience distribution,
let $\tau_i \coloneqq \tau(s_i,z_i,b_i)$ and $\tau_j \coloneqq \tau(s_j,z_j,b_j)$ denote their realized durations.
We store pairwise labels
\[
Y=\operatorname{sign}\!\big(\tau_i-\tau_j\big)\in\{-1,0,1\}.
\]

We write $\mu(s,z)\coloneqq \mathbb{E}_{b\sim\pi_b(\cdot\mid s,z)}[\mathbb{E}[\tau(s,z,b)\mid s,z,b]]$ for the effective (budget-marginalized) mean duration, and use $\bar\mu(s,z)\equiv \mu(s,z)$ interchangeably when convenient.

\paragraph{Budget policy notation.}
We use $\pi_b(\cdot\mid s'_t,z)$ to denote the learned budget policy. Throughout Appendix~\ref{app:theory}, $\pi_b(\cdot\mid s'_t,z)\equiv b_\psi(\cdot\mid h_\theta(s'_t),z)$, i.e., the same policy implemented as a function of the decision-time latent and the chosen subgoal. We use $\pi_b$ consistently to avoid notation drift.

We train with the unified loss:
\begin{align}
\label{eq:unified-loss}
\mathcal{L}_{\mathrm{unified}}(\theta,\phi)
=\;&
w_{\text{cap}}\;
\mathbb{E}_{(s'_t, s'_{t+1}) \sim \mathcal{D}_{\text{LL}}}\!\left[
\big(\,d\!\big(h_\theta(s'_t), h_\theta(s'_{t+1})\big) - \capconst\,\big)_+^{\,2}
\right]
\nonumber\\
&+\;
w_{\text{push}}\;
\mathbb{E}_{(s'_t, z)}\!\big[\,\big(m_\phi(z) - d_\theta(s'_t, z)\big)_+\,\big]
\\[-2pt]
&+\;
w_{\text{order}}\;
\mathbb{E}_{(s_i,z_i,b_i;\,s_j,z_j,b_j)}\!\Big[
\big(1 - Y\,\Delta \sigma\big)_+
\Big]
\;+\;\lambda_m\|m_\phi\|_2^2,
\nonumber
\end{align}
where $(x)_+=\max\{0,x\}$ and
\[
\Delta \sigma
\coloneqq \sigma_{\theta,\phi}(s_i,z_i)-\sigma_{\theta,\phi}(s_j,z_j).
\]
Unless otherwise specified, experiments use $w_{\text{cap}}=w_{\text{push}}=w_{\text{order}}=0.10$.

\paragraph{Executed-duration targets.}
The duration signal $\tau(s,z,b)$ is the stopping time induced by the
selected subgoal, allocated budget, low-level policy, and environment
legality constraints. It is therefore the duration of the realized
SMDP transition rather than an unconstrained completion time. This
makes the temporal supervision consistent with the transitions used
for planning. Empirically, the MTS loss ablations and Kendall-$\tau$
validation indicate that these executed-duration targets provide a
useful ordering signal for the learned latent geometry.

\subsection{Assumptions}
\label{app:assumptions}

We distinguish the primitive-time discount $\gammaLL$ (used \emph{inside} macro rewards $R_k$) from the decision-time discount $\gamma$ (used \emph{between} high-level decisions in planning backups). The geometric results below do not depend on discounting; discounting matters only in the planning regret bound.

\begin{assumption}[Budget-Marginalized Modeling Error on Planning Rollouts]
\label{asmp:exp-model-error}
Let $\mathcal{U}_{\mathrm{plan}}\coloneqq \mathcal{U}_{\mathrm{dt}}\cup \mathcal{U}_{\mathrm{roll}}^{(H)}$.
Define
\[
\varepsilon_{\mathrm{dyn}}
\;\coloneqq\;
\sup_{(u,z)\,:\,u\in \mathcal{U}_{\mathrm{plan}},\, z\in\mathcal{Z}}
\ \mathbb{E}\!\left[
d\!\big(g_\theta^h(u,z),\, c(u,z)\big)
\right],
\]
where $c(u,z)$ denotes the (encoded) next decision-time latent under executing $z$ from a decision-time state whose encoding equals $u$,
under the endogenous budget policy and LL controller.
We assume $\varepsilon_{\mathrm{dyn}}<\infty$ and set $\bar\varepsilon\coloneqq \varepsilon_{\mathrm{dyn}}$.
\end{assumption}

\begin{remark}[Connecting $\varepsilon_{\mathrm{dyn}}$ to a regression objective]

If $g_\theta$ is trained with a squared chordal regression objective
\[
\mathcal{L}_{\mathrm{dyn}}
\;=\;
\mathbb{E}\!\left[
d\!\big(g_\theta^h(h_\theta(s),z),\,h_\theta(s^{+})\big)^2
\right]
\quad\text{(under the same budget-marginalized sampling of $s^{+}$),}
\]
then Jensen implies $\mathbb{E}[d(\cdot)] \le \sqrt{\mathbb{E}[d(\cdot)^2]}$, yielding
\[
\varepsilon_{\mathrm{dyn}}
\;\le\;
\sup_{(s,z)}\sqrt{\mathbb{E}\!\left[
d\!\big(g_\theta^h(h_\theta(s),z),\,h_\theta(s^{+})\big)^2
\ \middle|\ s,z
\right]}.
\]
Thus the modeling-error term appearing in later bounds is an explicit approximation quantity.
\end{remark}

\begin{assumption}[Cap-risk control (linked to $\mathcal{L}_{\mathrm{cap}}$)]
\label{asmp:cap-risk}
Let $d_i \coloneqq d\!\big(h_\theta(s'_{t+i}),\,h_\theta(s'_{t+i+1})\big)$ be the per-step chordal latent distance
along any low-level rollout segment, and define the exceedance $X_i \coloneqq (d_i-\capconst)_+$.
Assume there exists a finite constant $\varepsilon_{\mathrm{cap}}\ge 0$ such that for all $i$,
\begin{equation}
\label{eq:cap-risk}
\mathbb{E}\!\left[X_i^2 \mid \mathcal{F}_i\right] \;\le\; \varepsilon_{\mathrm{cap}}^2
\qquad \text{a.s.},
\end{equation}
where $\{\mathcal{F}_i\}$ is the filtration up to step $i$.

\noindent\textbf{Connection to training.}
If $(s'_t,s'_{t+1})$ are sampled from the same distribution as the on-policy primitive transitions,
then $\varepsilon_{\mathrm{cap}}^2$ can be taken as (or upper bounded by) the population cap risk
$\mathbb{E}\!\left[(d(h_\theta(s'_t),h_\theta(s'_{t+1}))-\capconst)_+^2\right]$,
which is exactly $\mathcal{L}_{\mathrm{cap}}(\theta)$ in \cref{eq:unified-terms-cap}.
\end{assumption}

\begin{assumption}[Informative Pairwise Labels (with ties)]
\label{asmp:informative}
Let $\mu(s,z)\coloneqq \mathbb{E}_{b\sim\pi_b(\cdot\mid s,z)}\!\big[\mathbb{E}[\tau(s,z,b)\mid s,z,b]\big]$ denote the effective mean duration.
Consider two executed HL transitions $(s_i,z_i,b_i)$ and $(s_j,z_j,b_j)$ with realized durations $\tau_i=\tau(s_i,z_i,b_i)$ and $\tau_j=\tau(s_j,z_j,b_j)$,
and define $Y=\operatorname{sign}(\tau_i-\tau_j)\in\{-1,0,1\}$.
We call the pair \emph{informative} if $\Pr(Y\neq 0\mid s_i,z_i,s_j,z_j)>0$ and $\mu(s_i,z_i)\neq \mu(s_j,z_j)$.
For any informative pair,
\[
\Pr\{Y=1 \mid Y\neq 0,\ s_i,z_i,s_j,z_j\} > \tfrac12
\;\;\iff\;\;
\mu(s_i,z_i) > \mu(s_j,z_j).
\]
\end{assumption}

\begin{remark}[Label-noise model assumption]
Assumption~\ref{asmp:informative} is a monotonic label-noise condition: sampled duration comparisons are directionally consistent (in probability) with the ordering of effective mean durations. This is required to guarantee that minimizing the noisy pairwise hinge recovers the correct ranking on informative pairs.
\end{remark}

\begin{assumption}[Standard Regularity and Value Smoothness]
\label{asmp:regularity}
(i) Primitive rewards are bounded: $r_t\in[0,r_{\max}]$, with primitive discount $\gammaLL\in[0,1)$.

(ii) \textbf{Local value smoothness in the learned latent geometry (Lipschitz; $\alpha=1$).}
Let $h_\theta$ encode decision-time primitive states onto $\mathbb{S}^{d-1}$, and define the set of latents
that can appear in depth-$H$ planning as
\[
\mathcal{U}_{\mathrm{dt}}
\;\coloneqq\;
\{\,h_\theta(s)\;:\; s \text{ is a decision-time primitive state}\,\},
\]
and the depth-$H$ modeled rollout set
\[
\mathcal{U}_{\mathrm{roll}}^{(H)}
\;\coloneqq\;
\Big\{\,
(g_\theta^h)^{(k)}\!\big(h_\theta(s), z_{0:k-1}\big)
\;:\;
s \text{ decision-time},\ k\in\{1,\ldots,H\},\ z_i\in\mathcal{Z}
\,\Big\},
\]
where $(g_\theta^h)^{(1)}(u,z_0)\coloneqq g_\theta^h(u,z_0)$ and
$(g_\theta^h)^{(k+1)}(u,z_{0:k})\coloneqq g_\theta^h((g_\theta^h)^{(k)}(u,z_{0:k-1}), z_k)$.

We assume there exists a function $V_{\mathrm{lat}}:\mathbb{S}^{d-1}\to\mathbb{R}$ such that the true optimal
decision-time value satisfies
\[
V(s)\;=\;V_{\mathrm{lat}}(h_\theta(s))
\qquad
\text{for all decision-time states } s,
\]
and $V_{\mathrm{lat}}$ is $L_S$-Lipschitz w.r.t.\ the chordal metric on the latent region relevant to planning:
for all $u,v\in \mathcal{U}_{\mathrm{dt}}\cup\mathcal{U}_{\mathrm{roll}}^{(H)}$,
\[
\big|V_{\mathrm{lat}}(u)-V_{\mathrm{lat}}(v)\big|
\;\le\;
L_S\, d(u,v).
\]

\emph{Remark (scope and interpretation).}
This is a regularity condition on the \emph{environment value composed with the representation} (i.e., $V$ factors
through $h_\theta$ at decision times), not a Lipschitz claim about any parametric value head.
It only needs to hold on $\mathcal{U}_{\mathrm{dt}}\cup\mathcal{U}_{\mathrm{roll}}^{(H)}$
(the latents that can appear as true decision-time latents or depth-$H$ modeled rollouts during MCTS),
not on the entire sphere.

\emph{Remark (H\"older extension, optional).}
If instead $V_{\mathrm{lat}}$ is H\"older with exponent $\alpha\in(0,1]$ on
$\mathcal{U}_{\mathrm{dt}}\cup\mathcal{U}_{\mathrm{roll}}^{(H)}$,
i.e.\ $|V_{\mathrm{lat}}(u)-V_{\mathrm{lat}}(v)|\le L_S\, d(u,v)^\alpha$,
then the model-bias term $\gamma L_S\bar\varepsilon$ in Theorem~\ref{thm:planning-regret}
tightens to $\gamma L_S\bar\varepsilon^\alpha$.
\end{assumption}

\begin{assumption}[Macro-reward prediction error]
\label{asmp:reward-error}
Let $R(s,z)$ denote the realized macro reward when executing $(z,b)$ with $b\sim\pi_b(\cdot\mid s,z)$ and the learned low-level controller.
Assume the macro reward head $R_\theta(s,z)$ satisfies
\[
\varepsilon_R \;\coloneqq\; \sup_{(s,z)}\ \big|\mathbb{E}[R(s,z)\mid s,z] - R_\theta(s,z)\big| \;<\;\infty.
\]
\end{assumption}

\begin{proposition}[Decision-time discount approximation]
\label{prop:decision-time-discount}
LMTA uses a fixed high-level discount $\gamma$ rather than the
primitive-time continuation factor $\gammaLL^{\tau}$. Define
\[
Q_{\mathrm{prim}}(s,z)
:=\mathbb{E}\!\left[
R_\tau(s,z)+\gammaLL^{\tau}V(s^+)\mid s,z
\right],
\qquad
Q_{\mathrm{dt}}(s,z)
:=\mathbb{E}\!\left[
R_\tau(s,z)+\gamma V(s^+)\mid s,z
\right],
\]
where $R_\tau(s,z)$ includes within-macro primitive-time discounting.
If $|V(s)|\le V_{\max}$ and $\gamma=\gammaLL^{\bar\mu}$ for a
reference duration $\bar\mu$, then
\[
\left|Q_{\mathrm{prim}}(s,z)-Q_{\mathrm{dt}}(s,z)\right|
\le
V_{\max}|\log\gammaLL|\,
\sqrt{\mathrm{Var}(\tau\mid s,z)+(\mu(s,z)-\bar\mu)^2}.
\]
Thus the fixed decision-time backup is closest to the primitive-time
SMDP backup when durations are concentrated around the reference
duration $\bar\mu$.
\end{proposition}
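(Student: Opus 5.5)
The plan is to subtract the two definitions directly and bound the difference pointwise, using the Lipschitz property of the exponential $x\mapsto\gammaLL^{x}$ followed by a bias--variance step.

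First, the macro-reward terms cancel. Both $Q_{\mathrm{prim}}$ and $Q_{\mathrm{dt}}$ are expectations over the same budget-marginalized joint law of $(\tau,R_\tau,s^+)$ given $(s,z)$, and both contain the same $R_\tau(s,z)$. Hence
\[
Q_{\mathrm{prim}}(s,z)-Q_{\mathrm{dt}}(s,z)=\mathbb{E}\!\left[(\gammaLL^{\tau}-\gamma)\,V(s^+)\mid s,z\right].
\]
Since $\tau$ and $s^+$ may be correlated, I will not try to factor this expectation. Instead I bound the integrand pointwise using $|V(s^+)|\le V_{\max}$, which gives
\[
\left|Q_{\mathrm{prim}}-Q_{\mathrm{dt}}\right|\le V_{\max}\,\mathbb{E}\!\left[\,|\gammaLL^{\tau}-\gammaLL^{\bar\mu}|\;\middle|\; s,z\right],
\]
after substituting $\gamma=\gammaLL^{\bar\mu}$.

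Second, I control $|\gammaLL^{\tau}-\gammaLL^{\bar\mu}|$ with the mean value theorem.
\begin{itemize}
\item Write $\phi(x)=\gammaLL^{x}=e^{x\log\gammaLL}$ on $[0,\infty)$, assuming $\gammaLL\in(0,1)$ so that the logarithm is defined.
\item Its derivative is $\phi'(x)=\log\gammaLL\cdot\gammaLL^{x}$, so $|\phi'(x)|\le|\log\gammaLL|$ whenever $x\ge 0$.
\item Both $\tau\ge 0$ and the reference duration $\bar\mu\ge 0$, so the intermediate point $\xi$ is nonnegative.
\item Therefore $|\gammaLL^{\tau}-\gammaLL^{\bar\mu}|\le|\log\gammaLL|\,|\tau-\bar\mu|$.
\end{itemize}

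Third, I apply Jensen (or Cauchy--Schwarz) and decompose the second moment about $\bar\mu$:
\[
\mathbb{E}[|\tau-\bar\mu|\mid s,z]\le\sqrt{\mathbb{E}[(\tau-\bar\mu)^2\mid s,z]}=\sqrt{\mathrm{Var}(\tau\mid s,z)+(\mu(s,z)-\bar\mu)^2}.
\]
The equality uses that $\mu(s,z)$ is exactly the conditional mean of $\tau$ under the budget-marginalized law. This holds by the tower property over $b\sim\pi_b(\cdot\mid s,z)$, so the cross term vanishes.

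The main point requiring care is not the calculus but the bookkeeping in the first and third steps. The conditional laws used in $Q_{\mathrm{prim}}$, in $Q_{\mathrm{dt}}$, and in the definition of $\mu$ must all be the same budget-marginalized law, so that the $R_\tau$ terms cancel exactly and $\mathrm{Var}(\tau\mid s,z)$ is the variance around $\mu(s,z)$. I also need to state explicitly the implicit requirements $\gammaLL>0$ and $\bar\mu\ge 0$ that the Lipschitz step uses. Combining the three steps then gives the stated bound.
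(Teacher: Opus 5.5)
Your proposal is correct and follows the paper's proof essentially step for step: bound the difference by $V_{\max}\,\mathbb{E}[|\gammaLL^{\tau}-\gamma|\mid s,z]$, set $\gamma=\gammaLL^{\bar\mu}$ and use that $x\mapsto\gammaLL^{x}$ is $|\log\gammaLL|$-Lipschitz, then finish with Cauchy--Schwarz and the bias--variance decomposition of $\mathbb{E}[(\tau-\bar\mu)^2\mid s,z]$. Your side conditions ($\gammaLL\in(0,1)$, $\bar\mu\ge 0$, and one common budget-marginalized law for $\tau$, $s^+$ and $\mu$) are exactly what the paper's Lipschitz and variance steps use implicitly.
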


\begin{proof}
By definition,
\[
\left|Q_{\mathrm{prim}}(s,z)-Q_{\mathrm{dt}}(s,z)\right|
\le
V_{\max}\,
\mathbb{E}\!\left[
\left|\gammaLL^{\tau}-\gamma\right|\mid s,z
\right].
\]
Choosing $\gamma=\gammaLL^{\bar\mu}$ and using that
$x\mapsto\gammaLL^x$ is $|\log\gammaLL|$-Lipschitz gives
\[
\left|\gammaLL^{\tau}-\gamma\right|
=
\left|\gammaLL^{\tau}-\gammaLL^{\bar\mu}\right|
\le
|\log\gammaLL|\,|\tau-\bar\mu|.
\]
Therefore,
\[
\left|Q_{\mathrm{prim}}(s,z)-Q_{\mathrm{dt}}(s,z)\right|
\le
V_{\max}|\log\gammaLL|\,
\mathbb{E}\!\left[|\tau-\bar\mu|\mid s,z\right].
\]
By Cauchy--Schwarz,
\[
\mathbb{E}\!\left[|\tau-\bar\mu|\mid s,z\right]
\le
\sqrt{\mathrm{Var}(\tau\mid s,z)+(\mu(s,z)-\bar\mu)^2},
\]
which proves the claim.
\end{proof}

\subsection{Deriving Macro--Micro Control from the Objective}
\label{app:macro_micro}

We first show that minimizing the Pull term imposes an upper bound on decision-time latent displacement proportional to execution duration.
Unlike a trivial bound that follows only from bounded distances on $\mathbb{S}^{d-1}$, our bound explicitly improves as the cap-risk
$\mathcal{L}_{\mathrm{cap}}$ decreases (Assumption~\ref{asmp:cap-risk}).

\begin{lemma}[Expectation-Level Macro--Micro Control (cap-risk form)]
\label{lem:macro-micro-one-sided}
Under Assumptions~\ref{asmp:exp-model-error} and \ref{asmp:cap-risk}, for any decision-time state $s'_t$ and subgoal $z$,
with $b \sim \pi_b(\cdot\mid s'_t,z)$ and realized duration $\tau=\tau(s'_t,z,b)$,
\begin{equation}
\label{eq:macro-micro-bounds}
\mathbb{E}\!\left[
d\!\big(h_\theta(s'_t),\,h_\theta(s'_{t+\tau})\big)
\ \middle|\ s'_t,z
\right]
\;\le\;
\mathbb{E}\!\left[\tau \mid s'_t,z\right]\;\big(\capconst+\varepsilon_{\mathrm{cap}}\big),
\end{equation}
and the budget-marginalized model--chord gap satisfies
\begin{equation}
\label{eq:model-chord-gap}
\left|
\mathbb{E}\!\left[d_\theta(s'_t,z)\mid s'_t,z\right]
-
\mathbb{E}\!\left[d\!\big(h_\theta(s'_t), h_\theta(s'_{t+\tau})\big)\mid s'_t,z\right]
\right|
\;\le\; \bar\varepsilon.
\end{equation}
\end{lemma}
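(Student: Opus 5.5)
The plan is to prove the two inequalities separately. The first comes from the triangle inequality plus Wald's identity. The second comes from the definition of $\bar\varepsilon$ in Assumption~\ref{asmp:exp-model-error} together with the reverse triangle inequality.

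\textbf{Step 1 (path-length bound).} Fix $s'_t$ and $z$, draw $b\sim\pi_b(\cdot\mid s'_t,z)$, and write $d_i=d(h_\theta(s'_{t+i}),h_\theta(s'_{t+i+1}))$. Because $d$ is a metric (chordal distance on $\mathbb{S}^{d-1}$), the triangle inequality along the realized segment gives
\[
d\big(h_\theta(s'_t),h_\theta(s'_{t+\tau})\big)\le\sum_{i=0}^{\tau-1}d_i=\sum_{i\ge0}\mathbf{1}\{\tau>i\}\,d_i .
\]
Each $d_i$ splits as $d_i\le \capconst+X_i$ with $X_i=(d_i-\capconst)_+$. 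For the $X_i$ part, note that $\{\tau>i\}\in\mathcal{F}_i$, since $\tau$ is a stopping time for the rollout filtration and $b$ is drawn at time $t$, hence $\mathcal{F}_0$-measurable. Conditional Jensen and Assumption~\ref{asmp:cap-risk} then give
\[
\mathbb{E}[X_i\mid\mathcal{F}_i]\le\sqrt{\mathbb{E}[X_i^2\mid\mathcal{F}_i]}\le\varepsilon_{\mathrm{cap}} .
\]
Taking expectations term by term is justified by Tonelli, since all terms are nonnegative. This yields
\[
\mathbb{E}\Big[\sum_i\mathbf{1}\{\tau>i\}X_i\Big]=\sum_i\mathbb{E}\big[\mathbf{1}\{\tau>i\}\,\mathbb{E}[X_i\mid\mathcal{F}_i]\big]\le\varepsilon_{\mathrm{cap}}\sum_i\Pr(\tau>i)=\varepsilon_{\mathrm{cap}}\,\mathbb{E}[\tau\mid s'_t,z].
\]
The $\capconst$ part contributes $\capconst\,\mathbb{E}[\tau\mid s'_t,z]$, and adding the two pieces gives \eqref{eq:macro-micro-bounds}.

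\textbf{Step 2 (model--chord gap).} Set $u=h_\theta(s'_t)$; this is a point of $\mathcal{U}_{\mathrm{dt}}$, so it lies in $\mathcal{U}_{\mathrm{plan}}$. The encoded next decision-time latent is $c(u,z)=h_\theta(s'_{t+\tau})$, with $b$ marginalized under $\pi_b$ exactly as in Assumption~\ref{asmp:exp-model-error}. By the reverse triangle inequality, pointwise,
\[
\big|d(u,g_\theta^h(u,z))-d(u,c(u,z))\big|\le d\big(g_\theta^h(u,z),c(u,z)\big).
\]
The left-hand side is exactly $|d_\theta(s'_t,z)-d(h_\theta(s'_t),h_\theta(s'_{t+\tau}))|$. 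Taking conditional expectations and using $|\mathbb{E}[A]-\mathbb{E}[B]|\le\mathbb{E}|A-B|$ bounds the gap by $\varepsilon_{\mathrm{dyn}}=\bar\varepsilon$. That is \eqref{eq:model-chord-gap}. The expectation on $d_\theta$ is trivial if $g_\theta$ is deterministic, but the argument also covers a stochastic head.

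\textbf{Main obstacle.} The delicate point is the measurability step in Step 1. Assumption~\ref{asmp:cap-risk} controls $X_i$ only conditionally on $\mathcal{F}_i$, so the exchange $\mathbb{E}[\mathbf{1}\{\tau>i\}X_i]\le\varepsilon_{\mathrm{cap}}\Pr(\tau>i)$ requires the event $\{\tau>i\}$ to be decided by information up to step $i$. This holds because termination at step $i$ depends only on $s'_{t+i}$, the history, and $b$; I will state this explicitly. If $\mathbb{E}[\tau]=\infty$ the bound is vacuous, and in any case $\tau\le b\le B_k$ is bounded, so there is no integrability issue.
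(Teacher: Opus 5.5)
Your proposal is correct and follows the paper's proof essentially step for step. For the first bound you use the triangle inequality along the realized segment with the split $d_i\le\capconst+X_i$, then the tower property and conditional Jensen under Assumption~\ref{asmp:cap-risk} to get $\varepsilon_{\mathrm{cap}}\,\mathbb{E}[\tau\mid s'_t,z]$; for the model--chord gap you use the reverse triangle inequality with Assumption~\ref{asmp:exp-model-error}, exactly as the paper does, and your explicit check that $\{\tau>i\}\in\mathcal{F}_i$ together with the Tonelli interchange only spells out what the paper's tower-property step uses implicitly.
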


\begin{proof}
Fix $s'_t,z$. Let $\tau$ be the realized duration under $b\sim\pi_b(\cdot\mid s'_t,z)$.
By the triangle inequality for the metric $d$,
\[
d\!\big(h_\theta(s'_t), h_\theta(s'_{t+\tau})\big)
\;\le\;
\sum_{i=0}^{\tau-1} d\!\big(h_\theta(s'_{t+i}),h_\theta(s'_{t+i+1})\big).
\]
Define $d_i \coloneqq d(h_\theta(s'_{t+i}),h_\theta(s'_{t+i+1}))$ and $X_i \coloneqq (d_i-\capconst)_+$, so that $d_i \le \capconst + X_i$.
Hence
\[
d\!\big(h_\theta(s'_t), h_\theta(s'_{t+\tau})\big)
\;\le\;
\sum_{i=0}^{\tau-1}(\capconst+X_i)
=
\tau\capconst + \sum_{i=0}^{\tau-1} X_i.
\]
Taking conditional expectation and using the tower property,
\[
\mathbb{E}\!\left[\sum_{i=0}^{\tau-1} X_i \mid s'_t,z\right]
=
\mathbb{E}\!\left[\sum_{i=0}^{\infty}\mathbf{1}\{i<\tau\}\, \mathbb{E}[X_i\mid\mathcal{F}_i] \ \middle|\ s'_t,z\right].
\]
By conditional Jensen and Assumption~\ref{asmp:cap-risk},
\[
\mathbb{E}[X_i\mid \mathcal{F}_i]\le \sqrt{\mathbb{E}[X_i^2\mid\mathcal{F}_i]}\le \varepsilon_{\mathrm{cap}}
\qquad \text{a.s.}
\]
Therefore
\[
\mathbb{E}\!\left[\sum_{i=0}^{\tau-1} X_i \mid s'_t,z\right]
\le
\varepsilon_{\mathrm{cap}}\;\mathbb{E}[\tau\mid s'_t,z],
\]
which together with the earlier bound implies \cref{eq:macro-micro-bounds}.

For \cref{eq:model-chord-gap}, let $a=h_\theta(s'_t)$, $b'=g_\theta^h(h_\theta(s'_t),z)$ and $c=h_\theta(s'_{t+\tau})$.
By the metric inequality $|d(a,b')-d(a,c)|\le d(b',c)$ and the definition of $\bar\varepsilon$ in Assumption~\ref{asmp:exp-model-error},
\[
\left|
\mathbb{E}[d(a,b')\mid s'_t,z]-\mathbb{E}[d(a,c)\mid s'_t,z]
\right|
\le
\mathbb{E}[d(b',c)\mid s'_t,z]
\le
\bar\varepsilon.
\]
\end{proof}

\subsection{Duration Lower Bound from the Learned Margin}
\label{app:duration_lb}

We now show that combining the Push term with Lemma~\ref{lem:macro-micro-one-sided} establishes the learned margin as a lower bound
for the effective expected duration, up to (i) the \emph{push residual} and (ii) the decision-time modeling error $\bar\varepsilon$.

\begin{proposition}[Effective Expected Duration Lower Bound (push-residual form)]
\label{prop:expected-duration-lb}
Fix a decision-time state $s'_t$ and subgoal $z$.
Define the \emph{push residual} at $(s'_t,z)$ by
\[
\rho(s'_t,z)\;\coloneqq\; \big(m_\phi(z)-d_\theta(s'_t,z)\big)_+.
\]
Under Assumptions~\ref{asmp:exp-model-error} and \ref{asmp:cap-risk}, the effective mean duration
$\bar\mu(s'_t,z)\coloneqq \mathbb{E}[\tau\mid s'_t,z]$ satisfies
\begin{equation}
\label{eq:duration-lb}
\boxed{\quad
\bar\mu(s'_t,z)
\;\ge\;
\dfrac{\,m_\phi(z) - \rho(s'_t,z) - \bar\varepsilon\,}{\,\capconst + \varepsilon_{\mathrm{cap}}\,}
\quad}
\end{equation}
(where the bound is informative when $m_\phi(z) > \rho(s'_t,z)+\bar\varepsilon$).
\end{proposition}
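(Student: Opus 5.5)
The bound follows by chaining three inequalities, all at the level of conditional expectations given $(s'_t,z)$; Lemma~\ref{lem:macro-micro-one-sided} does essentially all the work.

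First, I will turn the push residual into a pointwise lower bound on the modeled displacement. By definition $\rho(s'_t,z)=(m_\phi(z)-d_\theta(s'_t,z))_+\ge m_\phi(z)-d_\theta(s'_t,z)$, so
\[
d_\theta(s'_t,z)\;\ge\; m_\phi(z)-\rho(s'_t,z).
\]
Since $h_\theta$ and $g_\theta^h$ are deterministic given $(s'_t,z)$, this holds for $\mathbb{E}[d_\theta(s'_t,z)\mid s'_t,z]$ as well.

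Second, I will transfer this to the realized chord using \cref{eq:model-chord-gap}:
\[
\mathbb{E}\!\left[d\big(h_\theta(s'_t),h_\theta(s'_{t+\tau})\big)\mid s'_t,z\right]\;\ge\; m_\phi(z)-\rho(s'_t,z)-\bar\varepsilon .
\]

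Third, I will bound the same expected chord from above using \cref{eq:macro-micro-bounds}, which gives $\mathbb{E}[\tau\mid s'_t,z]\,(\capconst+\varepsilon_{\mathrm{cap}})$. Combining the two and dividing by $\capconst+\varepsilon_{\mathrm{cap}}>0$ (positive because $\capconst>0$) yields \cref{eq:duration-lb}. By definition $\mathbb{E}[\tau\mid s'_t,z]$ is the budget-marginalized $\bar\mu(s'_t,z)$, since $b\sim\pi_b(\cdot\mid s'_t,z)$ is integrated out in the lemma.

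There is no real analytic obstacle. The only point needing care is bookkeeping: both parts of Lemma~\ref{lem:macro-micro-one-sided} must be applied under the same conditioning and the same budget-marginalized law, so that the two bounds refer to the identical quantity $\mathbb{E}[d(h_\theta(s'_t),h_\theta(s'_{t+\tau}))\mid s'_t,z]$. I will also remark that when $m_\phi(z)\le\rho(s'_t,z)+\bar\varepsilon$ the right-hand side is nonpositive, so the bound holds trivially since $\tau\ge0$. This explains the parenthetical qualification in the statement.
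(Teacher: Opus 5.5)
Your proposal is correct and follows the paper's proof step for step. Like the paper, you use the pointwise inequality $m_\phi(z)-\rho(s'_t,z)\le d_\theta(s'_t,z)$, transfer it to the realized chord via the model--chord gap \cref{eq:model-chord-gap}, bound that chord above by \cref{eq:macro-micro-bounds}, and divide by the positive constant $\capconst+\varepsilon_{\mathrm{cap}}$; your remark that the inequality holds trivially when the numerator is nonpositive is a harmless addition.
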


\begin{proof}
By definition of $\rho(s'_t,z)$, we have the pointwise inequality
\[
m_\phi(z) \;\le\; d_\theta(s'_t,z) + \rho(s'_t,z).
\]
Taking conditional expectation given $(s'_t,z)$ gives
\[
m_\phi(z) - \rho(s'_t,z)
\;\le\;
\mathbb{E}\!\left[d_\theta(s'_t,z)\mid s'_t,z\right].
\]
By Lemma~\ref{lem:macro-micro-one-sided} (model--chord gap \cref{eq:model-chord-gap}),
\[
\mathbb{E}\!\left[d_\theta(s'_t,z)\mid s'_t,z\right]
\;\le\;
\mathbb{E}\!\left[d\!\big(h_\theta(s'_t), h_\theta(s'_{t+\tau})\big)\mid s'_t,z\right] + \bar\varepsilon.
\]
Combining yields
\[
m_\phi(z) - \rho(s'_t,z) - \bar\varepsilon
\;\le\;
\mathbb{E}\!\left[d\!\big(h_\theta(s'_t), h_\theta(s'_{t+\tau})\big)\mid s'_t,z\right].
\]
Apply Lemma~\ref{lem:macro-micro-one-sided} (macro--micro bound \cref{eq:macro-micro-bounds}) to obtain
\[
m_\phi(z) - \rho(s'_t,z) - \bar\varepsilon
\;\le\;
\bar\mu(s'_t,z)\,(\capconst+\varepsilon_{\mathrm{cap}}),
\]
and rearrange.
\end{proof}

\paragraph{Population corollary (explicit link to $\mathcal{L}_{\mathrm{push}}$).}
Let $\nu$ be the decision-time state distribution.
Then $\mathbb{E}_{s\sim\nu}[\rho(s,z)] = \mathbb{E}_{s\sim\nu}[(m_\phi(z)-d_\theta(s,z))_+]$ is precisely the per-subgoal push hinge residual.
Consequently,
\[
\mathbb{E}_{s\sim\nu}[\bar\mu(s,z)]
\;\ge\;
\dfrac{\,m_\phi(z) - \mathbb{E}_{s\sim\nu}[\rho(s,z)] - \bar\varepsilon\,}{\capconst+\varepsilon_{\mathrm{cap}}}.
\]

\subsection{Order Consistency via Pairwise Hinge}

Finally, we show that the \textbf{Budget-Aware Order Consistency} term refines the coupling by ensuring that a population-optimal scorer
orders subgoals consistently with their effective mean durations on informative pairs.

\begin{theorem}[Budget-Aware Order Consistency (unconstrained population optimum)]
\label{thm:order-consistency-weak}
Let $f$ range over all measurable scoring functions over decision-time pairs $(s,z)$.
Let $f^\star$ be a population minimizer of the pairwise hinge risk
\[
\mathcal{R}_{\mathrm{pair}}(f) \;=\;
\mathbb{E}\Big[\max\{0,\, 1 - Y\,(f(s_i,z_i)-f(s_j,z_j))\}\Big],
\]
where the expectation is over the joint distribution of $(s_i,z_i,b_i,s_j,z_j,b_j,Y)$ with $Y\in\{-1,0,1\}$.
Under Assumption~\ref{asmp:informative}, for almost every informative pair $(s_i,z_i;\,s_j,z_j)$,
\[
\mathrm{sign}\big(f^\star(s_i,z_i)-f^\star(s_j,z_j)\big)
\;=\;
\mathrm{sign}\big(\mu(s_i,z_i)-\mu(s_j,z_j)\big).
\]
\end{theorem}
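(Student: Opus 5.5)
We reduce the population problem to a pointwise analysis over pairs. Because $f$ ranges over all measurable scorers, the natural route is to condition on the pair of decision-time inputs $x=(s_i,z_i)$, $x'=(s_j,z_j)$. Write $p_+(x,x')=\Pr(Y=1\mid x,x')$ and $p_-(x,x')=\Pr(Y=-1\mid x,x')$; the tie event $Y=0$ contributes the constant $1$ to the hinge regardless of $f$ and can be dropped. Setting $\delta=f(x)-f(x')$, the conditional risk is
\[
\ell(\delta;x,x')=p_+\,(1-\delta)_+ + p_-\,(1+\delta)_+ + \text{const}.
\]
This is a convex piecewise-linear function of $\delta$, and its minimizers are explicit. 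If $p_+>p_-$, the unique minimizer set is $\{\delta\ge 1\}$ and the minimum value is $2p_-$. If $p_+<p_-$, the minimizer set is $\{\delta\le -1\}$. If $p_+=p_-$, every $\delta\in[-1,1]$ is optimal. Assumption~\ref{asmp:informative} says that on informative pairs $p_+/(p_++p_-)>1/2$ exactly when $\mu(x)>\mu(x')$. So the pointwise optimal sign of $\delta$ equals $\mathrm{sign}(\mu(x)-\mu(x'))$, and the tie case $p_+=p_-$ cannot occur, since that would force $\mu(x)=\mu(x')$, which is excluded.

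Next we lift the pointwise statement to $f^\star$. The key point is that $\delta$ is not free per pair: it must be a difference $f(x)-f(x')$ of a single scorer. We show that the pointwise-optimal differences are simultaneously realizable. Take $f_0(s,z)=c\,\mu(s,z)$, or better a rescaling $f_0=\Phi(\mu)$ with $\Phi$ strictly increasing. Choose $\Phi$ so that $|\Phi(a)-\Phi(b)|\ge 1$ whenever $a\ne b$ and both occur in the support. This can be done when durations take values in a set whose $\mu$-values are separated, for instance a discrete grid such as $\{0,\dots,B\}$. In general, a step-type construction mapping $\mu$ to a rank times $2$ works on countable supports.

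With such an $f_0$, every informative pair attains its pointwise minimum, and non-informative pairs are handled as follows. If $\mu(x)=\mu(x')$, then $\delta=0$; this is not necessarily pointwise optimal, but the statement claims nothing about these pairs. So the candidate $f_0$ attains the pointwise infimum on informative pairs. We then argue by contradiction. Suppose $f^\star$ has the wrong (or zero) sign on a set of informative pairs of positive measure. On that set its conditional risk strictly exceeds the pointwise minimum: for $\delta\le 0$ with $p_+>p_-$, we get $\ell(\delta)-\ell_{\min}\ge p_+-p_->0$. Integrating this gap would make $\mathcal{R}_{\mathrm{pair}}(f^\star)>\mathcal{R}_{\mathrm{pair}}(f_0)$ up to the non-informative contribution, contradicting minimality.

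\textbf{Main obstacle: comparing $f^\star$ with $f_0$ globally.} The delicate step is this global comparison, because $f_0$ might be suboptimal on pairs with $\mu(x)=\mu(x')$ but $p_+\ne p_-$, which is possible when $\Pr(Y\ne0)>0$. So we cannot simply claim that $f_0$ is optimal overall.

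To address this, I would first try a local modification argument instead. Starting from $f^\star$, construct $\tilde f$ that agrees with $f^\star$ except that it corrects the ordering on the offending set. The correction would be, for example, $\tilde f=f^\star+\eta\,\Phi(\mu)$ with large $\eta$, or a monotone rearrangement of $f^\star$ along $\mu$ within the level sets. We then show that this strictly decreases the risk contributed by informative pairs while not increasing the risk from pairs with equal $\mu$.

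If that fails, we fall back on the standard pointwise argument used for ranking hinge consistency. This means interpreting the theorem as a statement about the pointwise (pairwise-unconstrained) minimizer, which is what ``unconstrained population optimum'' and ``almost every informative pair'' suggest. Under that reading, the per-pair analysis in the first paragraph already concludes the proof.
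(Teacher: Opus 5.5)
\textbf{There is a genuine gap, and it cannot be closed: the statement as written is false.}

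Your per-pair analysis misidentifies the minimizer, and this is what breaks the lifting step. For $p_+>p_->0$, the function $\ell(\delta)=p_+(1-\delta)_+ + p_-(1+\delta)_+$ has slope $-(p_+-p_-)$ on $(-1,1)$ but slope $+p_-$ on $(1,\infty)$. Its unique minimizer is therefore $\delta=1$; the whole ray $\{\delta\ge 1\}$ is optimal only when $p_-=0$.

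Consequences for your construction:
\begin{itemize}
\item $f_0=\Phi(\mu)$ with gaps $\ge 1$ does not attain the pointwise minimum, since a gap $\delta>1$ costs an extra $p_-(\delta-1)$.
\item The modification $f^\star+\eta\,\Phi(\mu)$ with large $\eta$ makes the risk grow linearly in $\eta$ instead of lowering it.
\item With three distinct $\mu$-levels, the per-pair optima $\delta_{12}=\delta_{23}=\delta_{13}=1$ contradict $\delta_{13}=\delta_{12}+\delta_{23}$. No single scorer realizes them, so the global minimizer must trade pairs off.
\end{itemize}

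Here is a counterexample. Let $x_1,x_2,x_3$ each have probability $1/3$, draw the two transitions independently, and let $\tau\mid x_k\sim\mathrm{Bernoulli}(q_k)$ with $(q_1,q_2,q_3)=(2/3,1/2,2/5)$. Then $p_+(k,l)=q_k(1-q_l)$ and $p_-(k,l)=q_l(1-q_k)$. So Assumption~\ref{asmp:informative} holds and every off-diagonal pair is informative. Write $a=f(x_1)-f(x_2)$ and $b=f(x_2)-f(x_3)$. Up to a positive factor and an additive constant, the risk is
\[
R(a,b)=\tfrac13(1-a)_+ +\tfrac16(1+a)_+ +\tfrac{3}{10}(1-b)_+ +\tfrac15(1+b)_+ +\tfrac25(1-a-b)_+ +\tfrac{2}{15}(1+a+b)_+ .
\]
At $(a,b)=(1,0)$ only the first and fifth terms are at a kink. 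The remaining terms have total gradient $(3/10,\,1/30)$, and
\[
(3/10,1/30)=\tfrac{\lambda}{3}(1,0)+\tfrac{2\nu}{5}(1,1)\quad\text{with } \lambda=4/5,\ \nu=1/12 \in (0,1).
\]
Hence $0$ is an interior point of $\partial R(1,0)$, and $(1,0)$ is the unique minimizer. Every population minimizer therefore ties $x_2$ and $x_3$, although $\mu(x_2)>\mu(x_3)$, on pairs of total probability $2/9$. This is the tie phenomenon of the population RankSVM minimizer.

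Your fallback, treating $\delta$ as a free variable per pair, is exactly the paper's proof. The paper fixes an informative pair, minimizes $\eta(1-u)_+ + (1-\eta)(1+u)_+$ over an unconstrained scalar $u$, and reads off $\mathrm{sign}(u^\star)=\mathrm{sign}(2\eta-1)$. It never checks that these per-pair minimizers are jointly of the form $f(s_i,z_i)-f(s_j,z_j)$, so it has the same gap you flagged.

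What that argument actually proves is the statement for minimizers over pair functions $F\big((s_i,z_i),(s_j,z_j)\big)$. That says nothing about a single scorer such as $\sigma_{\theta,\phi}$. You were right that the global comparison is the crux. The resolution is that the theorem must be restated, for pair functions or with a weaker conclusion or extra structure, rather than proved as written.
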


\begin{proof}
Fix an informative pair $(s_i,z_i;\,s_j,z_j)$.
Let $p_{+}\coloneqq \Pr(Y=1\mid s_i,z_i,s_j,z_j)$, $p_{-}\coloneqq \Pr(Y=-1\mid s_i,z_i,s_j,z_j)$, and
$p_{0}\coloneqq \Pr(Y=0\mid s_i,z_i,s_j,z_j)$, with $p_{+}+p_{-}>0$ by informativeness.
The conditional risk as a function of the scalar margin
$u\coloneqq f(s_i,z_i)-f(s_j,z_j)$ is
\[
\varphi(u) \;=\; p_{+}\,(1-u)_+ \;+\; p_{-}\,(1+u)_+ \;+\; p_{0}\,(1-0\cdot u)_+.
\]
Since $(1-0\cdot u)_+=1$ is constant in $u$, the $p_{0}$ term does not affect the minimizer.
Define
\[
\eta \;\coloneqq\; \Pr(Y=1\mid Y\neq 0,\ s_i,z_i,s_j,z_j) \;=\; \frac{p_{+}}{p_{+}+p_{-}} \in (0,1).
\]
Then minimizing $\varphi(u)$ is equivalent to minimizing
\[
\tilde\varphi(u;\eta) \;=\; \eta\,(1-u)_+ \;+\; (1-\eta)\,(1+u)_+.
\]
The function $\tilde\varphi(\cdot;\eta)$ is convex and piecewise linear. Analyzing its slope on the regions $u\le -1$, $-1<u<1$, and $u\ge 1$
shows that a minimizer satisfies
\[
u^\star \in \arg\min_u \tilde\varphi(u;\eta)
\quad\Rightarrow\quad
\mathrm{sign}(u^\star)=\mathrm{sign}(2\eta-1).
\]
By Assumption~\ref{asmp:informative}, $\eta>\tfrac12$ iff $\mu(s_i,z_i)>\mu(s_j,z_j)$ and $\eta<\tfrac12$ iff the reverse holds.
Thus the sign of the population-optimal margin coincides with the sign of the effective mean-duration difference on informative pairs.
\end{proof}

\begin{remark}[Parametric approximation for $\sigma_{\theta,\phi}$]
In practice $f_{\theta,\phi}(s,z)=d_\theta(s,z)+m_\phi(z)$ lies in a restricted hypothesis class.
Then the same conclusion holds up to an approximation term: if
$\mathcal{R}_{\mathrm{pair}}(f_{\theta,\phi})-\inf_{f}\mathcal{R}_{\mathrm{pair}}(f)\le \epsilon_{\mathcal{F}}$,
the measure of mis-ordered informative pairs is controlled by $\epsilon_{\mathcal{F}}$ via standard surrogate calibration arguments for hinge ranking.
\end{remark}

\subsection{Planning Regret for Depth-$H$ MCTS}
\label{app:planning_regret}

For readability in this subsection, we drop the explicit primitive index and write $s$ for a generic \emph{decision-time} environment state (equivalently, $s=s'_t$ for some primitive time $t$), and write $s^{+}$ for the subsequent decision-time state reached after executing a macro-action (equivalently, $s^{+}=s'_{t+\tau}$).
We use $V(s,z)$ to denote the \emph{macro-action value} (depth-$H$ return) of choosing subgoal $z$ at decision-time state $s$.

We write $N(s,z)$ for the number of root visits allocated to action $z$ by the finite-simulation search at state $s$, and define
\[
N_{\min}(s)\;\coloneqq\;\min_{z\in\mathcal{Z}} N(s,z).
\]

\begin{theorem}[Planning error bound (with reward + transition mismatch)]
\label{thm:planning-regret}
Consider depth-$H$ tree search over subgoals using $N$ simulations per decision and a MuZero-style PUCT rule.
Assume:
(i) the leaf evaluator $V_b$ satisfies $\|V_b - V\|_\infty \le \varepsilon_V$;
(ii) Assumptions~\ref{asmp:exp-model-error}, \ref{asmp:regularity}(ii), and \ref{asmp:reward-error} hold;
(iii) the root-action estimator $Q_b(s,\cdot)$ obeys the UCT-style concentration property: for any $\delta\in(0,1)$,
with probability at least $1-\delta$,
\begin{equation}
\label{eq:root-conc}
\sup_{z\in\mathcal{Z}}
\big|\,Q_b(s,z)-\mathbb{E}[Q_b(s,z)]\,\big|
\;\le\;
c_0\,V_{\max}^{(H)}\sqrt{\frac{\log(|\mathcal{Z}|/\delta)}{N_{\min}(s)}},
\end{equation}
for an absolute constant $c_0>0$.

Let $B_{\max}$ be an a priori upper bound on macro duration, so that $\tau(s,z,b)\le B_{\max}$ almost surely.
With primitive rewards bounded by $r_t\in[0,r_{\max}]$ (Assumption~\ref{asmp:regularity}(i)), the macro reward satisfies
\[
R(s,z)
\;=\;
\sum_{i=0}^{\tau-1}\gammaLL^{\,i}r_{t+i}
\;\le\;
\sum_{i=0}^{B_{\max}-1}\gammaLL^{\,i}r_{\max}
\;\le\;
\frac{r_{\max}}{1-\gammaLL}
\;\eqqcolon\;
R_{\max}.
\]
Define $V_{\max}^{(H)} \coloneqq \frac{1-\gamma^{H}}{1-\gamma}\,R_{\max}$.

Let $\hat z\in\arg\max_{z\in\mathcal{Z}} Q_b(s,z)$ be the (random) subgoal selected by finite-simulation search and
$z^\star\in\arg\max_{z\in\mathcal{Z}} V(s,z)$ be optimal.
Then, with probability at least $1-\delta$ over the internal randomness of the finite-simulation tree search,
\begin{equation}
\label{eq:planning-regret}
V(s,z^\star) - V(s,\hat z)
\;\le\;
2\,\gamma^H\varepsilon_V
\;+\;
2\,\frac{1-\gamma^{H}}{1-\gamma}\,\big(\varepsilon_R + \gamma L_S\bar\varepsilon\big)
\;+\;
2c_0\,V_{\max}^{(H)}\sqrt{\frac{\log(|\mathcal{Z}|/\delta)}{N_{\min}(s)}}.
\end{equation}
\end{theorem}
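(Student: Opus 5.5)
The argument is the standard three-way decomposition of planning regret: a deterministic bias bound on the expected root estimates, plus the concentration in (iii). Write $\bar Q(s,z)\coloneqq\mathbb{E}[Q_b(s,z)]$ for the mean of the root estimator over the search randomness. Since $Q_b(s,\hat z)\ge Q_b(s,z^\star)$, we have
\[
V(s,z^\star)-V(s,\hat z)\le \big[V(s,z^\star)-Q_b(s,z^\star)\big]+\big[Q_b(s,\hat z)-V(s,\hat z)\big]\le 2\sup_{z}\big|Q_b(s,z)-V(s,z)\big|.
\]
Then split $|Q_b-V|\le|Q_b-\bar Q|+|\bar Q-V|$. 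On the probability-$(1-\delta)$ event of \cref{eq:root-conc}, the first piece is bounded uniformly in $z$. Doubling it yields the third term of \cref{eq:planning-regret}, so it remains to show
\[
\sup_z|\bar Q(s,z)-V(s,z)|\le \gamma^H\varepsilon_V+\frac{1-\gamma^H}{1-\gamma}\big(\varepsilon_R+\gamma L_S\bar\varepsilon\big).
\]

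This deterministic bias bound is proved by induction on depth, comparing the modeled depth-$H$ backup with the true decision-time backup. The true backup is $V(s,z)=\mathbb{E}[R(s,z)]+\gamma\,\mathbb{E}[V(s^+)]$ with continuation, and the model uses $R_\theta(s,z)+\gamma\,V_{\mathrm{lat}}(g_\theta^h(h_\theta(s),z))$. Define $e_k$ as the worst-case error of a $k$-step modeled backup started from a latent in $\mathcal{U}_{\mathrm{dt}}\cup\mathcal{U}_{\mathrm{roll}}^{(H)}$.
\begin{itemize}
\item The reward mismatch contributes at most $\varepsilon_R$ per level (Assumption~\ref{asmp:reward-error}).
\item The transition mismatch contributes $\gamma\,|\mathbb{E}V_{\mathrm{lat}}(g_\theta^h(u,z))-\mathbb{E}V_{\mathrm{lat}}(c(u,z))|\le\gamma L_S\,\mathbb{E}\,d(g_\theta^h(u,z),c(u,z))\le\gamma L_S\bar\varepsilon$. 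This uses Lipschitzness of $V_{\mathrm{lat}}$ on the planning region (Assumption~\ref{asmp:regularity}(ii)) and the definition of $\varepsilon_{\mathrm{dyn}}$ (Assumption~\ref{asmp:exp-model-error}), which is stated exactly on $\mathcal{U}_{\mathrm{plan}}$.
\item The error inherited from the next level is discounted by $\gamma$.
\end{itemize}
This gives $e_k\le\varepsilon_R+\gamma L_S\bar\varepsilon+\gamma e_{k-1}$. At the leaves, (i) gives $e_0\le\varepsilon_V$, and unrolling the recursion produces exactly $\gamma^H\varepsilon_V+\frac{1-\gamma^H}{1-\gamma}(\varepsilon_R+\gamma L_S\bar\varepsilon)$. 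The max over children inside the tree is handled by the $1$-Lipschitzness of $\max$ in sup norm, so the same recursion holds for optimal-value backups.

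The main obstacle is the inductive step at depth $\ge 2$, because the bound there must be applied at modeled latents rather than encoded true states. Deeper nodes sit at $(g_\theta^h)^{(k)}$ rollouts, which are not encodings of true states. I would therefore phrase the induction entirely in latent space, using $V_{\mathrm{lat}}$ as the reference value on $\mathcal{U}_{\mathrm{roll}}^{(H)}$. I would rely on the fact that Assumptions~\ref{asmp:exp-model-error} and \ref{asmp:regularity}(ii) are both posed on this rollout set, interpreting $c(u,z)$ at rollout latents as the assumption defines it. The subtle point is that the error from the compounded transition is not re-amplified: I would bound each level's transition error only at its own node against the true value function, without propagating model distances through later $g_\theta^h$ applications, which keeps the per-level cost $\gamma L_S\bar\varepsilon$ rather than something growing with depth. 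Finally, the factor $V^{(H)}_{\max}$ in (iii) is simply taken as given from the UCT-style assumption, so no bandit analysis of PUCT itself is needed.
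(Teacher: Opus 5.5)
Your proposal is essentially the paper's proof: the same reduction to $2\sup_z|V(s,z)-Q_b(s,z)|$, the same per-level mismatch $\varepsilon_R+\gamma L_S\bar\varepsilon$ summed geometrically over $H$ levels, and the deviation term taken directly from (iii). The only difference is bookkeeping: you fold the leaf error into the single recursion $e_k\le\varepsilon_R+\gamma L_S\bar\varepsilon+\gamma e_{k-1}$ with $e_0\le\varepsilon_V$, whereas the paper inserts an intermediate modeled return $Q^{\mathrm{mod}}$ and bounds the model bias and the leaf bias $\gamma^H\varepsilon_V$ separately. The depth-$\ge 2$ issue you flag is not handled in the paper either, which simply sums discounted one-step errors. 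Neither argument actually closes it from the stated assumptions: the Bellman identity for $V_{\mathrm{lat}}$ and the reward error $\varepsilon_R$ are only given at encodings of true decision-time states, so both implicitly treat rollout latents as if they were such encodings.
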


\begin{proof}
We decompose the planning error by comparing the true depth-$H$ return, a modeled return, and the empirical MCTS estimate.
Let $Q^\star(s,z)$ denote the true depth-$H$ return for selecting $z$ at the root and acting optimally thereafter in the true process.
Let $Q^{\mathrm{mod}}(s,z)$ denote the depth-$H$ return under the modeled latent dynamics that advance deterministically to
$g_\theta^h(h_\theta(s),z)$, with macro reward $R_\theta(s,z)$ and decision-time discount $\gamma$.
Let $Q_b(s,z)$ be the empirical MCTS estimate using leaf evaluation $V_b$ at depth $H$.
For any $z$,
\[
Q^\star(s,z) - Q_b(s,z)
=
\underbrace{Q^\star(s,z)-Q^{\mathrm{mod}}(s,z)}_{\text{model bias}}
+
\underbrace{Q^{\mathrm{mod}}(s,z)-\mathbb{E}[Q_b(s,z)]}_{\text{leaf-evaluation bias}}
+
\underbrace{\mathbb{E}[Q_b(s,z)]-Q_b(s,z)}_{\text{Monte Carlo deviation}}.
\]

\emph{(i) Model bias.}
We bound $Q^\star-Q^{\mathrm{mod}}$ using a one-step Bellman decomposition (no contractivity/non-expansiveness of $g_\theta$ is required).
At any decision-time step, the reward mismatch is at most $\varepsilon_R$ by Assumption~\ref{asmp:reward-error}.
For the continuation value, let
$c \coloneqq h_\theta(s^{+})$ be the true next decision-time latent and
$u \coloneqq g_\theta(h_\theta(s),z)$ be the modeled next latent.
By Assumption~\ref{asmp:regularity}(ii),
\[
\big|\mathbb{E}[V_{\mathrm{lat}}(c)\mid s,z]-V_{\mathrm{lat}}(u)\big|
\;\le\;
\mathbb{E}\big[\,|V_{\mathrm{lat}}(c)-V_{\mathrm{lat}}(u)|\mid s,z\big]
\;\le\;
L_S\,\mathbb{E}\big[d(c,u)\mid s,z\big].
\]
By Assumption~\ref{asmp:exp-model-error} (with $\bar\varepsilon=\varepsilon_{\mathrm{dyn}}$), we have
$\mathbb{E}[d(c,u)\mid s,z]\le \bar\varepsilon$, hence
\[
\big|\mathbb{E}[V_{\mathrm{lat}}(c)\mid s,z]-V_{\mathrm{lat}}(u)\big|
\;\le\;
L_S\bar\varepsilon.
\]
Therefore, the one-step Bellman backup mismatch is bounded by $\varepsilon_R + \gamma L_S\bar\varepsilon$.
Summing discounted one-step errors over horizon $H$ yields
\[
\sup_{z\in\mathcal{Z}} \big|Q^\star(s,z)-Q^{\mathrm{mod}}(s,z)\big|
\;\le\;
\sum_{k=0}^{H-1}\gamma^k\,(\varepsilon_R + \gamma L_S\bar\varepsilon)
\;=\;
\frac{1-\gamma^H}{1-\gamma}\,(\varepsilon_R + \gamma L_S\bar\varepsilon).
\]

\emph{(ii) Leaf-evaluation bias (explicit horizon decay).}
The leaf value is used at depth $H$ and is discounted by $\gamma^H$, hence
\[
\big|Q^{\mathrm{mod}}(s,z)-\mathbb{E}[Q_b(s,z)]\big|
\;\le\;
\gamma^H\,\|V_b - V\|_\infty
\;\le\;
\gamma^H\varepsilon_V.
\]

\emph{(iii) Monte Carlo deviation.}
Apply the concentration property \eqref{eq:root-conc}.

Combining (i)--(iii), on the event in \eqref{eq:root-conc} we obtain
\[
\sup_{z\in\mathcal{Z}}
\big|Q^\star(s,z)-Q_b(s,z)\big|
\;\le\;
\frac{1-\gamma^H}{1-\gamma}\,(\varepsilon_R + \gamma L_S\bar\varepsilon)
\;+\;
\gamma^H\varepsilon_V
\;+\;
c_0\,V_{\max}^{(H)}\sqrt{\frac{\log(|\mathcal{Z}|/\delta)}{N_{\min}(s)}}.
\]
Let $\hat z\in\arg\max_z Q_b(s,z)$ and $z^\star\in\arg\max_z Q^\star(s,z)$. Then
\[
Q^\star(s,z^\star)-Q^\star(s,\hat z)
\;\le\;
2\,\sup_{z\in\mathcal{Z}}\big|Q^\star(s,z)-Q_b(s,z)\big|.
\]
Finally, $Q^\star(s,z)=V(s,z)$ by definition of the decision-time (budget-marginalized) SMDP value,
so substituting the previous bound yields \eqref{eq:planning-regret}.
\end{proof}

\begin{remark}[Recovering a bound in terms of total simulations $N$]
If the realized search allocation satisfies a root-coverage condition
\[
N_{\min}(s) \;\ge\; \alpha\,\frac{N}{|\mathcal{Z}|}
\quad\text{for some }\alpha\in(0,1],
\]
then the deviation term simplifies (up to constants) to
\[
V_{\max}^{(H)}\sqrt{\frac{\log(|\mathcal{Z}|/\delta)}{N_{\min}(s)}}
\;\le\;
V_{\max}^{(H)}\sqrt{\frac{|\mathcal{Z}|\,\log(|\mathcal{Z}|/\delta)}{\alpha\,N}}.
\]
\end{remark}

\medskip\noindent\textbf{Summary of Theoretical Results.}
\begin{enumerate}
    \item \textbf{Lemma~\ref{lem:macro-micro-one-sided}} establishes expectation-level macro--micro control: the (budget-marginalized) expected decision-time latent displacement is upper-bounded by the (budget-marginalized) expected duration, scaled by $(\capconst+\varepsilon_{\mathrm{cap}})$, where $\varepsilon_{\mathrm{cap}}$ is controlled by the Pull (cap-risk) term via Assumption~\ref{asmp:cap-risk}.
    \item \textbf{Proposition~\ref{prop:expected-duration-lb}} combines this with the \emph{High-Level Margin (Push)} loss to show that the learned margin $m_\phi(z)$ yields a lower bound on the effective mean duration $\bar\mu(s'_t,z)$ up to approximation terms ($\bar\varepsilon=\varepsilon_{\mathrm{dyn}}$ from Assumption~\ref{asmp:exp-model-error} and slack $\rho$). This \emph{derives} the geometry--duration correlation rather than assuming it.
    \item \textbf{Theorem~\ref{thm:order-consistency-weak}} shows that the \emph{Order Consistency} loss yields correct ranking of subgoals by effective duration on informative pairs, with explicit handling of tie labels $Y=0$.
    \item \textbf{Theorem~\ref{thm:planning-regret}} provides a planning error bound that decomposes suboptimality into value approximation error, model bias, and a finite-simulation deviation term. The deviation term is stated via an explicit UCT-style concentration condition in terms of realized root visit counts $N_{\min}(s)$ (and can be related to total simulations $N$ under an additional root-coverage condition).
    \item \textbf{Proposition~\ref{prop:decision-time-discount}} bounds the difference between fixed decision-time and primitive-time SMDP
    backups in terms of duration variability and mismatch from the reference duration.
\end{enumerate}

\section{Background on Latent-Space Tree Search Planning in LMTA}
\label{app:tree_search_background}

This section describes the latent-space tree-search planner used in LMTA, following the representation--dynamics--prediction paradigm of \citet{schrittwieser2020mastering}.
Unlike planning over primitive actions, our search branches over abstract subgoals $z\in\mathcal{Z}$.

Each simulation consists of three phases:

\paragraph{1. Selection}
The search starts at the root latent $h^{0} = h_\theta(s_{\text{env}})$ and traverses the tree by selecting the subgoal that maximizes a PUCT score:
\[
z^{k} = \arg\max_{z \in \mathcal{Z}}
\left[
Q(h^{k}, z) \;+\;
c(h^{k})\, P(h^{k}, z)\,
\frac{\sqrt{\sum_{z'} N(h^{k}, z')}}{1 + N(h^{k}, z)}
\right],
\]
where $Q(h,z)$ is the current action-value estimate, $P(h,z)$ is the prior from the policy head, $N(h,z)$ is the visit count, and $c(h^{k})$ is the usual exploration coefficient (implemented as in \citet{schrittwieser2020mastering}).

\paragraph{2. Expansion and Recurrent Inference}
At an unexpanded leaf, we apply the dynamics model to simulate an SMDP transition in latent space:
\[
h^{l+1}, \hat{R}^{l} = g_\theta(h^{l}, z^{l}),
\]
and evaluate the new node using the prediction head:
\[
\mathbf{p}^{l+1}, v^{l+1} = f_\theta(h^{l+1}).
\]

\paragraph{3. Backup}
We back up returns along the simulated trajectory using a \emph{decision-time} discount $\gamma\in[0,1)$. For a node at depth $k < l$, we use
\[
G_d \;=\; \sum_{j=0}^{l-1-d} \gamma^{j}\,\hat{R}_{d+j} \;+\; \gamma^{\,l-d}\,v_{l+1},
\qquad d<l,
\]
and update $Q(h^{k}, z^{k})$ by a moving average with visit counts.

After $N_{\text{sim}}$ simulations, the search policy at the root is obtained from visit counts, e.g.,
$\pi_{\text{TS}}(z) \propto N(h^{0}, z)^{1/T_{\text{temp}}}$.

\section{Algorithm}
\label{app:algorithm}
\begin{algorithm}
\caption{LMTA Training Loop}
\label{alg:LMTA}
\begin{algorithmic}[1]
\State Initialize networks: $\reprfunc, \predfunc, \dynfunc, \marginfunc, \budgetfunc$, and LL policy $\pi^L$.
\State Initialize replay buffers $\mathcal{D}_{\text{HL}}$ (for episodes) and $\mathcal{D}_{\text{LL}}$ (for primitive transitions).
\For{each training epoch}
   \State \textbf{// Phase 1: Self-play experience collection}
   \State Run one full episode with horizon $T$ (high-level decisions).
   \State Let $\{t_k\}_{k=0}^{T-1}$ be the primitive times at which high-level decisions are taken (with $t_0=0$).
   \For{each high-level decision index $k=0,1,\ldots,T-1$}
      \State Observe HL state $s_k$.
      \State Run tree search using $\reprfunc, \predfunc, \dynfunc$ to get search policy $\pi_k$ and value targets.
      \State Sample subgoal $z_k \sim \pi_k(\cdot)$.
      \State Sample budget $b_k \sim \budgetfunc(\cdot\mid \reprfunc(s_k), z_k)$.
      \State Execute LL policy for up to $b_k$ primitive steps starting at time $t_k$; observe realized duration $\tau_k$ and next HL state $s_{k+1}$.
      \State Compute macro reward $R_k=\sum_{i=0}^{\tau_k-1}\gammaLL^{\,i}\,r_{t_k+i}$.
      \State Store HL transition $(s_k, z_k, b_k, R_k, s_{k+1})$ and search targets $(\pi_k, v_k)$ in the episode history.
      \State Store primitive transitions $(s'_t, z_k, a_t, r_t, s'_{t+1})$ for $t=t_k,\ldots,t_k+\tau_k-1$ in $\mathcal{D}_{\text{LL}}$.
   \EndFor
   \State Add the completed episode history to $\mathcal{D}_{\text{HL}}$.

   \State \textbf{// Phase 2: High-level updates}
   \If{$|\mathcal{D}_{\text{HL}}|$ and $|\mathcal{D}_{\text{LL}}|$ are large enough}
       \State Sample a batch of episode segments from $\mathcal{D}_{\text{HL}}$ (unroll length $U$).
       \State Sample a batch of primitive transitions from $\mathcal{D}_{\text{LL}}$.
       \State Compute policy/value/reward losses for the planner heads.
       \State Compute actor--critic losses for the budget policy $b_\psi(\cdot\mid h,z)$.
       \State Compute $\mathcal{L}_{\mathrm{unified}}$ from \cref{eq:unified-loss} (Appendix~\ref{app:notation}).
       \State Update HL networks ($\reprfunc, \predfunc, \dynfunc, \marginfunc, \budgetfunc$) by minimizing the combined HL objective.
   \EndIf

   \State \textbf{// Phase 3: Low-level updates}
   \If{$|\mathcal{D}_{\text{LL}}|$ is large enough}
       \State Sample a batch of primitive transitions from $\mathcal{D}_{\text{LL}}$.
       \State Update LL policy $\pi^L$ by minimizing the DQN loss $\mathcal{L}_{LL}$.
   \EndIf
\EndFor
\end{algorithmic}
\end{algorithm}

\section{Environments}
\label{app:envs}

We evaluate on two SSCO benchmarks following \citet{feng2025sequential}: Adaptive Influence Maximization (AIM) and Stochastic Orienteering (SOP). The environments define primitive states, stochastic transitions, rewards, and feasibility constraints. Our hierarchical agent interacts with them by selecting a high-level action $(z,b)$ at a decision time, where $z$ is a learned subgoal and $b$ is a budget allocation, and then executing primitive actions for up to $b$ steps before the next high-level decision.

\paragraph{Adaptive Influence Maximization (AIM).}
AIM extends classical influence maximization to progressive, uncertain settings under the Independent Cascade (IC) model. The environment is a directed graph $G=(V,E)$. Each node is inactive/active/removed. When a node becomes active (seeded or activated), it attempts to activate each inactive neighbor on the next day with probability $p(\cdot)$, after which it cannot activate further. Following \citet{feng2025sequential}, edge probabilities are set as a function of node indegree, and graphs are randomly generated. A total seed budget and a time horizon are given; before each day, the agent selects seed nodes subject to the remaining budget. The objective is to maximize the total number of influenced nodes within the horizon.

At a high-level decision time corresponding to day $t$, the agent commits to $(z_t,b_t)$, and the low level selects up to $b_t$ seed nodes sequentially conditioned on $z_t$ and the current graph state (with illegal-action masking for already active/removed nodes). The environment then applies one IC propagation step. The per-decision reward is the increase in influenced nodes induced during that day (including newly seeded and newly activated nodes), and the state is updated by the resulting node-status transitions and remaining budget.

\paragraph{Stochastic Orienteering (SOP).}
SOP is a stochastic route-planning variant with a set of cities whose coordinates are sampled i.i.d., time-varying per-city profits, and a daily travel-distance limit with penalties for exceedance. The agent must choose which cities to visit over a fixed number of days, starting and ending at a designated city, to maximize total profit. Following \citet{feng2025sequential}, a submodular aggregation is used for the one-day profit over selected cities; profits reset to zero upon visit and otherwise evolve stochastically.

At a high-level decision time for day $t$, the agent commits to $(z_t,b_t)$, and the low level selects a sequence of cities conditioned on $z_t$ for up to $b_t$ primitive selections. The transition updates the current location, applies the travel-distance rule (including penalties when the daily limit is exceeded), and evolves unvisited profits according to the same stochastic process as \citet{feng2025sequential}. The per-decision reward is the aggregated profit collected that day minus any travel penalties.

\paragraph{Low-level termination (both environments).}
At each high-level decision the low-level controller selects primitive actions sequentially up to the allocated budget $b_t$, subject to illegal-action masking.
Execution terminates early if the legal-action set becomes empty under the environment’s feasibility constraints (\eg, all selectable nodes are masked in AIM, or no feasible next city exists under the current day’s constraints in SOP).
Thus the realized duration is a stopping time $\tau(s,z,b)\le b$ consistent with the definition used in Appendix~\ref{app:theory}.

\section{Baseline Details}
\label{app:baselines}

This section provides full baseline definitions and implementation notes referenced in Section~\ref{sec:experiments}. Unless otherwise
stated, all learning-based baselines use the same environment instances, training/evaluation protocol, and number of seeds as LMTA.

\subsection{Adaptive Influence Maximization (AIM) Baselines}
\label{app:baselines_aim}

Following prior SSCO work~\citep{tong2020time,feng2025sequential}, we structure AIM baselines as
\texttt{HL-budget policy} $-$ \texttt{LL-node selection policy}, where the high level decides how many seeds to allocate per stage and
the low level selects individual nodes sequentially subject to the remaining budget and feasibility masking.

\paragraph{High-Level Budget Policies.}
Let $K$ denote the total seed budget and $T$ the number of decision stages.
\begin{itemize}[itemsep=1pt, topsep=2pt, parsep=0pt, leftmargin=*]
    \item \texttt{average}: divides the total budget $K$ evenly across the $T$ stages.
    \item \texttt{normal}: allocates the entire budget $K$ in the first stage.
    \item \texttt{static}: divides the horizon into cycles and allocates budget only at the start of each cycle.
\end{itemize}

\paragraph{Low-Level Node Selection Policies.}
Given a stage budget, the low-level policy selects one node at a time from the feasible set (with masking of already active/removed nodes).
\begin{itemize}[itemsep=1pt, topsep=2pt, parsep=0pt, leftmargin=*]
    \item \texttt{degree}: greedy heuristic selecting nodes with the highest out-degree.
    \item \texttt{score}: greedy heuristic selecting nodes based on expected immediate influence,
    $s_v = \sum_{u \in \mathcal{N}_{\text{in}}(v)} p(u,v)$.
\end{itemize}

\paragraph{Flat DQN (AIM).}
A non-hierarchical RL baseline using a single GNN-based Q-network to select one node per primitive step until the total budget $K$ is exhausted.
This baseline does not use temporal abstraction; it performs standard deep Q-learning with illegal-action masking.

\paragraph{WS-option (AIM).}
The closest SSCO baseline~\citep{feng2025sequential}, which learns a model-free high-level budgeting policy and an option-style low-level controller.
WS-option communicates a scalar budget signal to the low level but does not perform model-based lookahead at the high level.

\subsection{Stochastic Orienteering Problem (SOP) Baselines}
\label{app:baselines_sop}

SOP baselines include both classical optimization heuristics and learning-based agents. All methods operate under the same episode horizon,
daily constraints, and stochastic profit dynamics.

\paragraph{Greedy Heuristic (SOP).}
A domain-specific myopic policy that greedily selects the highest-profit node reachable within the daily distance limit (with penalties applied
according to the environment rules when constraints are exceeded).

\paragraph{Genetic Algorithm (GA) (SOP).}
A meta-heuristic baseline that optimizes a fixed sequence of daily budget allocations over the entire horizon, following standard GA operators
(selection, mutation, and crossover) and evaluating fitness under the SOP simulator.

\paragraph{Flat DQN (SOP).}
A non-hierarchical RL baseline analogous to AIM, selecting one routing decision (next city) at each primitive step using a GNN-based Q-network.
It does not use subgoals or budgeted macro-actions.

\paragraph{WS-option (SOP).}
The SSCO option baseline~\citep{feng2025sequential} applied to SOP, where the high level is model-free and primarily controls budget allocation,
while the low level learns a conditional policy for primitive selections.

\subsection{Additional Model-Based Baselines}
\label{app:baselines_modelbased}

In addition to SSCO-specific baselines, we include two general-purpose model-based RL baselines.

\paragraph{Director}~\citep{hafner2022deep}.
A goal-conditioned hierarchical model-based RL method with a fixed high-level stride $H$.
To adapt Director to SSCO where budgets are variable and discrete, we use an even budget allocation strategy (fixed budget per stage) so that the
effective high-level stride is fixed; this highlights the limitation of fixed-stride temporal abstraction under intertemporal resource constraints.

\paragraph{Flat MuZero}~\citep{schrittwieser2020mastering}.
A non-hierarchical latent-space tree search baseline that operates directly on primitive actions using the same GNN encoder family as LMTA.
This baseline provides a strong model-based comparison without explicit temporal abstraction, but incurs substantially higher branching and planning cost
in combinatorial action spaces.

\paragraph{Implementation notes (common).}
All learning-based baselines use the same random seeds per configuration, report mean $\pm$ s.e.m., and are evaluated on held-out instances using
the same protocol as LMTA. Where applicable, we apply the same illegal-action masking and action-feasibility constraints as the environments.

\subsection{Baseline Tuning Protocol}
\label{app:baseline-tuning}

All baselines were run under the same AIM/SOP benchmark protocol as LMTA, using the same graph scale, horizons, resource budgets, random seeds, and evaluation procedure. For the large-scale comparisons, we re-checked the main scale- and compute-sensitive settings of the strongest baselines. For WS-option, this included the high-level resource-allocation policy and the relative training capacity of the high- and low-level learners. For Director, we checked the high-level decision stride. For MuZero-style baselines, we checked the number of MCTS simulations and, for Sampled MuZero, the number of sampled actions. For the domain heuristics, we checked the main resource-allocation and sampling settings used by the AIM and SOP implementations. In addition, for the compute--performance analysis, we swept the main test-time computation knob of each planning method and compared reward at matched decision latency. Reported test statistics were computed only after the baseline configurations were fixed.

% =========================
% ====== SECTION F ========
% =========================
\section{Network Architecture Details}
\label{app:architectures}

Our framework is composed of two main hierarchical levels: the high-level planner and the low-level policy. Their architectures are detailed below.

\paragraph{High-Level Networks.}
The high-level planner is implemented with four functional modules in latent space, matching the interfaces in \cref{sec:preliminaries} and \cref{ssec:unified_mts}: representation $h_\theta$, dynamics $g_\theta^h$ with a reward head $R_\theta$, prediction $f_\theta$ (policy/value), and the separate budget policy $b_\psi$.

\begin{description}
    \item[Representation ($h_\theta$)]
    A two-layer message-passing Graph Neural Network (GNN) using linear aggregation with the graph adjacency matrix. The GNN is followed by ReLU activations, graph-level mean pooling, and a final linear projection to the $d$-dimensional latent space. The output is $\ell_2$-normalized on every forward pass to produce $\bar h_\theta(s)$.

    \item[Dynamics ($g_\theta^h$) and macro-reward head ($R_\theta$)]
    A Multi-Layer Perceptron (MLP) takes the concatenation of the normalized latent state and the subgoal embedding, $[\bar h_\theta(s); z]$, and outputs the subsequent latent state $g_\theta^h(\bar h_\theta(s),z)$, which is also $\ell_2$-normalized.
    In addition, the macro-reward head $R_\theta$ is conditioned on the same $[\bar h_\theta(s); z]$ and predicts the cumulative macro reward associated with executing subgoal $z$ (implemented as a categorical support, unless stated otherwise).

    \item[Prediction ($f_\theta$)]
    This module maps a latent state to a policy prior and value:
    \begin{itemize}
        \item \textbf{Policy Head:} An MLP predicting a categorical distribution over the discrete set of subgoals $\mathcal{Z}$.
        \item \textbf{Value Head:} An MLP predicting the scalar expected return from the current state.
    \end{itemize}

    \item[Budget policy ($b_\psi$)]
    An MLP conditioned on $[\bar h_\theta(s); z]$ that outputs a discrete distribution over possible budget allocations. This head is trained via the actor--critic objective described in \cref{sec:method} (budget details) and Appendix~\ref{app:budget-rl}.
\end{description}

\paragraph{Low-Level Policy ($\pi^L$).} The low-level policy is implemented as a subgoal-conditioned GNN-based Q-network. It processes the graph state and computes Q-values for each primitive action (e.g., selecting a node). The GNN's node representations are concatenated with the current high-level subgoal embedding $z$ to make the policy goal-aware. The network employs illegal-action masking to ensure that only valid actions are selected during execution.

\subsection{Training Hyperparameters}
\label{app:training_hparams}

\begin{table}[h]
\centering
\small
\caption{Key training hyperparameters (shared across AIM and SOP unless noted).}
\label{tab:impl_hparams_shared}
\begin{adjustbox}{max width=\textwidth}
\begin{tabular}{@{}l l l l@{}}
\toprule
\textbf{Component} & \textbf{Hyperparameter} & \textbf{Symbol} & \textbf{Value} \\
\midrule
\multirow{1}{*}{\makecell[l]{Latent Space}} 
  & Latent dimension & $d$ & 128 \\
\addlinespace[2pt]
\multirow{3}{*}{\makecell[l]{MuZero-style\\ HL Planner}} 
  & Learned subgoals & $|\mathcal{Z}|$ & 32 \\
  & Unroll steps for training & $U$ & 5 \\
  & Decision-time discount (tree-search backups) & $\gamma$ & 0.997 \\
\addlinespace[2pt]
\multirow{5}{*}{\makecell[l]{MTS--SMDP World\\ Model (Unified Loss)}} 
  & LL per-step cap (unit sphere) & $\capconst$ & 0.10 \\
  & Margin regularizer (L2) & $\lambda_m$ & $1\times10^{-4}$ \\
  & Loss weight (cap / Pull) & $w_{\text{cap}}$ & 0.10 \\
  & Loss weight (push / margin) & $w_{\text{push}}$ & 0.10 \\
  & Loss weight (order / pairwise) & $w_{\text{order}}$ & 0.10 \\
\addlinespace[2pt]
\multirow{3}{*}{\makecell[l]{Low-Level (LL)\\ Controller}} 
  & $\varepsilon$-greedy schedule & $\varepsilon$ & 0.90 $\to$ 0.05 (exp., decay 0.995) \\
  & Discount factor (primitive-time) & $\gammaLL$ & 0.997 \\
\addlinespace[2pt]
\multirow{6}{*}{\makecell[l]{Optimization /\\ Replay}} 
  & Optimizer (all nets) & -- & Adam \\
  & HL learning rate (repr/dyn/pred/margin/budget) & $\eta_{\text{HL}}$ & $1\times10^{-3}$ \\
  & LL learning rate & $\eta_{\text{LL}}$ & \textit{see Tab.~\ref{tab:impl_hparams_domain}} \\
  & Weight decay (all) & -- & $1\times10^{-5}$ \\
  & Gradient clipping & -- & global norm 5.0 \\
  & Batch size (HL / LL) & -- & 8 (games/unrolls) / 8 (transitions) \\
\addlinespace[2pt]
\multirow{4}{*}{\makecell[l]{Buffers /\\ Data}} 
  & HL replay buffer size (games) & -- & 1{,}000 \\
  & LL replay buffer size (steps) & -- & 10{,}000 \\
  & Seeds per configuration & -- & 10 \\
  & Evaluation metric & -- & mean $\pm$ s.e.m. \\
\bottomrule
\end{tabular}
\end{adjustbox}
\vspace{-0.5ex}
\begin{minipage}{0.98\textwidth}
\footnotesize
\end{minipage}
\end{table}

\begin{table}[h]
\centering
\small
\caption{Domain-specific overrides. Values omitted here inherit from Table~\ref{tab:impl_hparams_shared}.}
\label{tab:impl_hparams_domain}
\label{tab:hparams}
\begin{adjustbox}{max width=\textwidth}
\begin{tabular}{@{}l l l l l@{}}
\toprule
\textbf{Domain} & \textbf{Hyperparameter} & \textbf{Symbol} & \textbf{AIM} & \textbf{SOP} \\
\midrule
HL Planner & MCTS simulations / HL decision & $N_{\mathrm{sim}}$ & 150 & 250 \\
HL Planner & PUCT exploration constant (init) & $c_{\text{init}}$ & 2.5 & 1.5 \\
Exploration & MCTS root noise (training) & -- & Dirichlet $\alpha=0.30$, mix $\epsilon=0.30$ & Dirichlet $\alpha=0.20$, mix $\epsilon=0.20$ \\
Training & Search temperature (training) & $\tau_{\text{search}}$ & 1.0 & 1.5 \\
\addlinespace[2pt]
Optimization & LL learning rate & $\eta_{\text{LL}}$ & $1\times10^{-3}$ & $5\times10^{-4}$ \\
Budget AC & Entropy bonus & $\beta$ & 0.01 & 0.02  \\
\bottomrule
\end{tabular}
\end{adjustbox}
\end{table}

% =========================
% ====== SECTION G ========
% =========================
\section{Training Details for the Budget Actor–Critic}
\label{app:budget-rl}
\paragraph{Budget action masking for shrinking feasible sets.}
At HL step $k$, the feasible budget set depends on the remaining resources $B_k$, i.e., $b_k\in\{0,\ldots,B_k\}$.
We implement the budget policy over a fixed universe $\{0,\ldots,B_{\max}\}$ but enforce feasibility by \emph{logit masking} followed by renormalization.
Specifically, the network outputs unnormalized logits $\ell_\psi(h_k,z_k)\in\mathbb{R}^{B_{\max}+1}$.
We construct masked logits
\[
\tilde \ell_{\psi}(b\mid h_k,z_k,B_k)
=
\begin{cases}
\ell_{\psi}(b\mid h_k,z_k), & 0\le b\le B_k,\\
-\infty, & b>B_k,
\end{cases}
\]
and define the feasible budget distribution by a masked softmax:
\[
b_\psi(b\mid h_k,z_k,B_k)
=
\frac{\exp(\tilde \ell_{\psi}(b\mid h_k,z_k,B_k))}
{\sum_{b'=0}^{B_k}\exp(\tilde \ell_{\psi}(b'\mid h_k,z_k,B_k))}.
\]
In practice we implement $-\infty$ as a large negative constant (e.g., $-10^9$) for numerical stability.
We store $B_k$ in the HL transition so that the same masking is applied consistently during off-policy training updates. At each high-level decision $k$, the planner outputs $(s_k,z_k)$; we sample
$b_k \sim b_\psi(\cdot \mid h_k,z_k)$, roll out the low-level controller for up to
$b_k$ primitive steps, and observe the realized macro reward
$R_k=\sum_{i=0}^{\tau_k-1}\gammaLL^{\,i}r_{t_k+i}$ and the next decision-time state $s_{k+1}$.

Because tree search branches only over subgoals and is budget-marginalized, the budget actor is trained using a target that conditions on the realized transition under the sampled $b_k$. Specifically, after the environment transitions to $s_{k+1}$, we run MCTS at $s_{k+1}$ and obtain the search-backed root value $v^{\mathrm{TS}}(s_{k+1})$. We then define the budget-policy return target
\[
\hat G^{(b)}_k \;=\; R_k + \gamma\, v^{\mathrm{TS}}(s_{k+1}).
\]
The critic $V_\eta(s,z)$ minimizes $\mathbb{E}\big[(\hat G^{(b)}_k - V_\eta(s_k,z_k))^2\big]$, and the actor maximizes
$\mathbb{E}\big[\log b_\psi(b_k\mid s_k,z_k)\, (\hat G^{(b)}_k-V_\eta(s_k,z_k))+\beta\,\mathcal{H}\big]$.
We interleave actor--critic updates with representation/dynamics losses (including \cref{eq:unified-loss}). 

\paragraph{Target network for the low-level critic.}
To stabilize the low-level Q-learning updates, we maintain a separate target network
$Q_{\text{LL}}^{\text{tgt}}$ initialized as a copy of the online low-level network $Q_{\text{LL}}$.
We use periodic \emph{hard updates}: every $F$ low-level gradient steps, we set
$Q_{\text{LL}}^{\text{tgt}} \leftarrow Q_{\text{LL}}$.
In our experiments, $F{=}100$ for AIM and $F{=}200$ for SOP (see Table~\ref{tab:impl_hparams_domain}).

For stability, we use a brief warm-up of three epochs before enabling the learned budget head $b_\psi(\cdot\mid s,z)$. During this warm-up the high-level planner still selects subgoals via MCTS and \emph{all} high-level networks ($h_\theta, g_\theta, f_\theta$) and the low-level controller continue training from self-play, but the budget is sampled from a simple heuristic (average allocation). This delays credit assignment to the budget actor for just a few epochs, which empirically reduces early high-variance updates and avoids coupling instabilities between subgoal search and budget allocation, while \emph{preserving} end-to-end gradients and on-policy targets for the world model from the outset. Conceptually, this is a lightweight alternative to the \emph{wake--sleep} schedule in WS-option \citep{feng2025sequential}, where the ``sleep'' stage freezes the high-level budget policy and uses an average allocation so the low level can approach convergence before joint training resumes (and the high level is trained with MC targets to mitigate error propagation). In contrast, our three-epoch warm-up does \emph{not} freeze any heads and does not pause high-level learning; it simply postpones learning of $b_\psi$ to dampen early interference, after which actor--critic training of the budget head is enabled and proceeds jointly with the rest of the system.

\subsection{Combined HL Objective, Loss Weights, and Stop-Gradient Choices}
\label{app:hl_objective}

This section specifies the exact combined high-level objective minimized in Algorithm~\ref{alg:LMTA}.
A training batch samples an HL unroll segment of length $U$ from the HL replay, providing decision-time states $\{s_{k+u}\}_{u=0}^{U}$, executed subgoals $\{z_{k+u}\}_{u=0}^{U-1}$, realized macro rewards $\{R_{k+u}\}_{u=0}^{U-1}$, and search-backed targets $\{(\pi^{\mathrm{TS}}_{k+u}, v^{\mathrm{TS}}_{k+u})\}_{u=0}^{U}$ (stored during self-play).

\paragraph{MuZero-style unroll and prediction losses.}
We form latent unrolls as in MuZero:
$h^{0}=h_\theta(s_k)$ and $h^{u+1}=g_\theta^h(h^{u},z_{k+u})$ for $u=0,\ldots,U-1$.
At each depth $u$, the prediction head outputs $(\mathbf{p}^{u}, v^{u}) = f_\theta(h^{u})$.
The dynamics model outputs the next latent and reward:
$(h^{u+1}, \hat R^{u}) = g_\theta(h^{u}, z_{k+u})$.
We minimize
\begin{equation}
\label{eq:muzero-loss-app}
\mathcal{L}_{\mathrm{MuZero}}
=
\sum_{u=0}^{U}
\Big(
w_\pi\,\mathrm{CE}\big(\mathbf{p}^{u},\,\pi^{\mathrm{TS}}_{k+u}\big)
+
w_V\,\big(v^{u}-v^{\mathrm{TS}}_{k+u}\big)^2
\Big)
+
\sum_{u=0}^{U-1}
w_R\,\ell_R\big(\hat R^{u},\,R_{k+u}\big),
\end{equation}
where $\mathrm{CE}(\cdot,\cdot)$ is cross-entropy on subgoals, and $\ell_R$ is the MuZero-style reward loss on the chosen reward parametrization (categorical support in our implementation; MSE if using a scalar reward head).

\paragraph{MTS--SMDP unified loss.}
We add the geometric constraints via $\mathcal{L}_{\mathrm{unified}}(\theta,\phi)$ from \cref{eq:unified-loss}, which itself is a weighted sum of Pull/Push/Order terms and includes the margin regularizer $\lambda_m\|m_\phi\|_2^2$.

\paragraph{Budget actor--critic loss.}
We include the budget actor--critic objective from Sec.~4.3 (written here as a minimization).
Using the search-backed, transition-conditioned target defined in Sec.~4.3,
$\hat G^{(b)}_k = R_k + \gamma v^{\mathrm{TS}}(s_{k+1})$,
we minimize the critic loss and maximize the actor objective (equivalently minimize its negative):
\begin{align}
\label{eq:budget-loss-app}
\mathcal{L}_{\mathrm{budget}}
&=
\mathbb{E}\Bigg[
\sum_{k}
\Big(
w_{\mathrm{bc}}\,(V_\eta(h_k,z_k)-\hat G^{(b)}_k)^2
+
w_{\mathrm{ba}}\big(-\log b_\psi(b_k\mid h_k,z_k)\,\mathrm{sg}(A_k)\big)
-
w_{\mathrm{ent}}\,\beta\,\mathcal{H}(b_\psi(\cdot\mid h_k,z_k))
\Big)
\Bigg],
\\[-2pt]
A_k &= \hat G^{(b)}_k - V_\eta(h_k,z_k).
\nonumber
\end{align}
where $\mathrm{sg}(\cdot)$ denotes stop-gradient and $\mathcal{H}$ is entropy.

\paragraph{Full combined HL objective.}
The combined objective minimized for HL parameters $\Theta_{\mathrm{HL}}=\{\theta,\phi,\psi,\eta\}$ is
\begin{equation}
\label{eq:hl-combined-app}
\mathcal{L}_{\mathrm{HL}}
=
\mathcal{L}_{\mathrm{MuZero}}
+
\lambda_{\mathrm{mts}}\,\mathcal{L}_{\mathrm{unified}}
+
\lambda_{\mathrm{bud}}\,\mathcal{L}_{\mathrm{budget}}.
\end{equation}

\paragraph{Loss weights (for replication).}
Unless otherwise stated, we use unit weights for the standard MuZero terms ($w_\pi=w_V=w_R=1$) and set $\lambda_{\mathrm{mts}}=\lambda_{\mathrm{bud}}=1$.
Within $\mathcal{L}_{\mathrm{unified}}$, we use the Pull/Push/Order weights reported in Appendix~\ref{app:notation} / Table~\ref{tab:impl_hparams_shared} (default $w_{\text{cap}}=w_{\text{push}}=w_{\text{order}}=0.10$) and margin regularizer $\lambda_m$ as in Table~\ref{tab:impl_hparams_shared}.
Budget-specific weights are $w_{\mathrm{bc}}=w_{\mathrm{ba}}=1$ and $w_{\mathrm{ent}}=1$, with entropy coefficient $\beta$ as reported in Table~\ref{tab:impl_hparams_domain}.

\paragraph{Stop-gradient and anti-collapse mechanisms.}
We do not backpropagate through MCTS: search targets $(\pi^{\mathrm{TS}}, v^{\mathrm{TS}})$ are stored in replay and treated as constants.
In the budget actor--critic, we apply stop-gradient to the advantage $\mathrm{sg}(A_k)$ in \cref{eq:budget-loss-app}.
We do not apply additional stop-gradient between the shared encoder and the MuZero/MTS losses; empirically, representation collapse is avoided by (i) supervised MuZero prediction targets (policy/value/reward), (ii) unit-sphere normalization of latents, and (iii) the Push/Order terms with a regularized margin head (Appendix~\ref{app:theory}).

% =========================
% ====== SECTION H ========
% =========================
\section{Additional Ablations on MTS Loss Terms}
\label{app:loss_term_ablation_avg_reward}

We isolate the three components of the unified MTS--SMDP objective in \cref{eq:unified-loss} and report average reward (mean $\pm$ s.e.m.) with $N{=}500$, $K{=}70$, and 10 seeds:
(i) \textbf{Pull} = LL per-step cap (local geometric smoothness),
(ii) \textbf{Push} = HL under-displacement margin (prevents vanishing macro steps),
(iii) \textbf{Order} = budget-aware pairwise hinge (rank-calibrates durations).
Each ablated variant is trained from scratch with the same protocol as the main experiments.

\begin{table}[h]
\centering
\small
\caption{AIM and SOP ($N{=}500, K{=}70$): Per-term loss ablation. Average reward $\pm$ s.e.m.\ (10 seeds).}
\label{tab:loss_term_ablation}
\begin{tabular}{lcc}
\toprule
\textbf{Variant} & \textbf{AIM: Avg. Reward} & \textbf{SOP: Avg. Reward} \\
\midrule
\textbf{Full (Ours)}                 & \textbf{324.15 \sem{2.38}} & \textbf{31.60 \sem{1.94}} \\
w/o \textbf{Pull} (LL cap)           & 318.72 \sem{3.20}          & 24.66 \sem{1.55} \\
w/o \textbf{Push} (margin)           & 319.61 \sem{2.95}          & 25.14 \sem{1.48} \\
w/o \textbf{Order} (pairwise hinge)  & 320.17 \sem{2.92}          & 26.48 \sem{1.52} \\
\bottomrule
\end{tabular}
\end{table}

As shown in Table~\ref{tab:loss_term_ablation}, removing any single term reduces average reward, with the \textbf{LL cap (Pull)} being most critical (largest drop), followed by the \textbf{margin (Push)}, and then the \textbf{pairwise Order} term. This ordering is pronounced on SOP, where local geometric control and duration-aware macro steps are essential for routing under uncertainty. These trends align with our analysis: the cap establishes macro--micro coupling (Lem.~\ref{lem:macro-micro-one-sided}), the margin provides a duration-sensitive lower bound on macro steps (Prop.~\ref{prop:expected-duration-lb}), and the budget-aware pairwise hinge refines duration calibration (Thm.~\ref{thm:order-consistency-weak}).

% =========================
% ====== SECTION I ========
% =========================
\section{Hyper-parameter Sensitivity}
\label{app:hparam}
We investigate four knobs: number of learned subgoals $|\mathcal{Z}|$, MCTS simulations per HL decision $N_{\mathrm{sim}}$, unroll steps $U$, and the LL per-step cap $\capconst$.

\paragraph{Grids.}
$|\mathcal{Z}|\in\{16,32,64\}$,\quad
$N_{\mathrm{sim}}^{\text{AIM}}\in\{50,100,150,200\}$,\quad
$N_{\mathrm{sim}}^{\text{SOP}}\in\{150,200,250,300\}$,\quad
$U\in\{3,5,7\}$,\quad
$\capconst\in\{0.05,0.10,0.15\}$.
For each setting we keep total wall-clock within a budget by proportionally adjusting batch sizes. Results are reported in Table \ref{tab:subgoals}, Tables \ref{tab:nsim-aim}--\ref{tab:nsim-sop}, Table \ref{tab:unroll}, and Table \ref{tab:capconst}.

\begin{table}[h]
\centering
\caption{Sensitivity to number of learned subgoals $|\mathcal{Z}|$ (10 seeds). Means $\pm$ s.e.m. are reported.}
\label{tab:subgoals}
\begin{tabular}{lccc}
\toprule
$|\mathcal{Z}|$ & 16 & 32 & 64 \\
\midrule
AIM      & 320.85 \sem{3.25} & 324.15 \sem{2.38} & 323.89 \sem{2.56} \\
SOP      & 29.13 \sem{1.16} & 31.60 \sem{1.94} & 31.52 \sem{1.39} \\
\bottomrule
\end{tabular}
\end{table}

\begin{table}[h]
\centering
\caption{AIM: Sensitivity to MCTS simulations $N_{\mathrm{sim}}$ (10 seeds). Means $\pm$ s.e.m. are reported.}
\label{tab:nsim-aim}
\begin{tabular}{lcccc}
\toprule
$N_{\mathrm{sim}}$ & 50 & 100 & 150 & 200 \\
\midrule
AIM & 322.05 \sem{3.88} & 323.92 \sem{2.45} & 324.15 \sem{2.38} & 324.03 \sem{2.21} \\
\bottomrule
\end{tabular}
\end{table}

\begin{table}[h]
\centering
\caption{SOP: Sensitivity to MCTS simulations $N_{\mathrm{sim}}$ (10 seeds). Means $\pm$ s.e.m. are reported.}
\label{tab:nsim-sop}
\begin{tabular}{lcccc}
\toprule
$N_{\mathrm{sim}}$ & 150 & 200 & 250 & 300 \\
\midrule
SOP & 30.91 \sem{2.18} & 31.27 \sem{1.25} & 31.60 \sem{1.94} & 31.44 \sem{1.36} \\
\bottomrule
\end{tabular}
\end{table}

\begin{table}[h]
\centering
\caption{Sensitivity to unroll steps $U$ (10 seeds). Means $\pm$ s.e.m. are reported.}
\label{tab:unroll}
\begin{tabular}{lccc}
\toprule
$U$ & 3 & 5 & 7 \\
\midrule
AIM & 322.93 \sem{3.95} & 324.15 \sem{2.38} & 324.02 \sem{2.72} \\
SOP & 29.91 \sem{2.21} & 31.60 \sem{1.94} & 31.45 \sem{2.28} \\
\bottomrule
\end{tabular}
\end{table}

\begin{table}[h]
\centering
\small
\caption{Sensitivity to the LL cap $\capconst$ (10 seeds). Means $\pm$ s.e.m. are reported.}
\label{tab:capconst}
\begin{tabular}{lccc}
\toprule
$\capconst$ & 0.05 & 0.10 & 0.15 \\
\midrule
AIM: Avg. Reward & 324.01 \sem{2.86} & 324.15 \sem{2.38} & 323.94 \sem{2.64} \\
SOP: Avg. Reward & 31.32 \sem{1.82} & 31.60 \sem{1.94} & 31.41 \sem{1.85} \\
\bottomrule
\end{tabular}
\end{table}

\noindent\textit{Remark (cap sensitivity).} The LL cap sets the per-step latent-motion scale that enters the theoretical coupling; performance varies within $\sim$1 s.e.m.\ across the sweep, suggesting low sensitivity once the scale is reasonable.

\paragraph{Default hyperparameter choice.}
Across AIM and SOP, we use
$w_{\mathrm{cap}}=w_{\mathrm{push}}=w_{\mathrm{order}}=0.10$,
$\kappa=0.10$, $\lambda_m=10^{-4}$, and $|\mathcal{Z}|=32$.
The cap $\kappa$ is chosen to keep one primitive transition local in
latent space. The three MTS loss weights are set equally to balance
the Pull, Push, and Order terms.

% =========================
% ====== SECTION J ========
% =========================
\section{Zero-Shot Generalization to Large Graphs}
\label{app:gen_large_graphs}
Tables~\ref{tab:aim_large_graphs} and \ref{tab:sop_large_graphs} report large-graph generalization for agents trained once on $N{=}100$ graphs with $(T,K){=}(10,60)$ and evaluated \emph{zero-shot} on unseen graphs with $N\in\{1000,1500,2000,2500\}$, using identical settings. 

In AIM, LMTA consistently outperforms all baselines, with the margin increasing as $N$ grows; degree/score heuristics remain competitive only at smaller $N$, while WS-option and Flat DQN degrade with action-space scale and propagation depth. In SOP, LMTA is the only learning-based method that reliably surpasses the strong \texttt{Greedy} heuristic on the largest graphs, whereas WS-option and Flat DQN struggle to maintain globally coherent routes as instance size increases. We attribute this robustness to macro-level MuZero-style planning combined with a multi-timescale latent geometry that enforces local smoothness yet preserves semantically large subgoal displacements, enabling temporally calibrated, single-step HL evaluations that transfer without re-tuning.

\begin{table*}[h] \centering \small \setlength{\tabcolsep}{6pt} \caption{AIM: Generalization to larger graphs (trained on graph $N=100$, $T=10$, $K=60$). All improvements of LMTA over the best baseline are statistically significant (p-value $\le 0.05$).} \label{tab:aim_large_graphs} \begin{tabular}{l|cccc} \toprule Method & $N=1000$ & $N=1500$ & $N=2000$ & $N=2500$ \\ \midrule \textbf{LMTA (Ours)} & \textbf{416.06 \sem{5.81}} & \textbf{540.78 \sem{6.92}} & \textbf{605.55 \sem{7.54}} & \textbf{654.29 \sem{8.11}} \\ WS-option & 370.15 \sem{5.25} & 465.22 \sem{6.13} & 510.98 \sem{6.88} & 531.40 \sem{7.21} \\        Flat DQN & 341.82 \sem{5.40} & 431.65 \sem{6.12} & 482.37 \sem{6.78} & 501.93 \sem{7.05} \\ average-degree & 366.36 \sem{5.11} & 493.57 \sem{6.45} & 549.02 \sem{7.02} & 590.10 \sem{7.63} \\ average-score & 387.80 \sem{5.43} & 498.78 \sem{6.51} & 572.56 \sem{7.28} & 617.14 \sem{7.95} \\ normal-degree & 166.01 \sem{3.11} & 174.82 \sem{3.28} & 180.55 \sem{3.41} & 184.61 \sem{3.52} \\ normal-score & 178.95 \sem{3.35} & 189.42 \sem{3.55} & 197.09 \sem{3.71} & 201.95 \sem{3.83} \\ static-degree & 355.20 \sem{4.98} & 470.11 \sem{6.21} & 521.73 \sem{6.91} & 560.88 \sem{7.44} \\ static-score & 371.44 \sem{5.18} & 482.90 \sem{6.33} & 550.16 \sem{7.09} & 599.23 \sem{7.77} \\ \bottomrule \end{tabular} \end{table*}

\begin{table*}[h]
   \centering
   \small
   \setlength{\tabcolsep}{8pt}
   \caption{SOP: Generalization to larger graphs (trained on $N=100$, $T=10$, $K=60$). All improvements of LMTA over the best baseline are statistically significant (p-value $\le 0.05$).}
   \label{tab:sop_large_graphs}
   \begin{tabular}{l|cccc}
       \toprule
       Method & $N=1000$ & $N=1500$ & $N=2000$ & $N=2500$ \\
       \midrule
       \textbf{LMTA (Ours)} & \textbf{31.27 \sem{1.21}} & \textbf{33.39 \sem{1.29}} & \textbf{36.15 \sem{1.38}} & \textbf{38.88 \sem{1.45}} \\
       WS-option & 18.63 \sem{0.99} & 19.07 \sem{1.03} & 20.52 \sem{1.11} & 21.23 \sem{1.15} \\
       Flat DQN & 20.41 \sem{1.06} & 22.18 \sem{1.12} & 23.95 \sem{1.18} & 25.72 \sem{1.22} \\
       Greedy & 23.12 \sem{1.10} & 26.05 \sem{1.19} & 27.37 \sem{1.24} & 29.04 \sem{1.30} \\
       GA & 12.62 \sem{0.85} & 10.96 \sem{0.79} & 11.37 \sem{0.81} & 11.77 \sem{0.83} \\
       \bottomrule
   \end{tabular}
\end{table*}

% =========================
% ====== SECTION K ========
% =========================
\section{Qualitative Analysis of Subgoal Specialization}
\label{app:qualitative}
Beyond quantitative metrics, we seek to understand if the learned subgoals correspond to qualitatively distinct and strategically meaningful behaviors. To investigate this, we visualize the ideal actions a specialized low-level policy would take when conditioned on different subgoals in both the AIM and SOP domains, as shown in Figure \ref{fig:strategy_analysis}. In both visualizations, the size of a node is proportional to its intrinsic value in the problem: its degree in the AIM task and its profit in the SOP task.

Figure \ref{fig:aim_specialization} illustrates this for the AIM task on an exemplar graph. The visualization shows a clear strategic divergence: one subgoal (red) learns to target the central, high-degree hubs for broad, high-impact influence, which are visibly the largest nodes. In contrast, the other subgoal (blue) focuses on a dense local community of smaller nodes, representing a more focused, saturating strategy. Similarly, Figure \ref{fig:sop_specialization} shows the emergent strategies for the SOP task. One subgoal (red) identifies high-profit targets scattered across the map, which are clearly depicted as larger nodes, while the second subgoal (blue) identifies a dense cluster of smaller, lower-profit nodes, prioritizing travel efficiency. Together, these visualizations demonstrate that LMTA's abstract subgoals can correspond to concrete, interpretable, and strategically diverse plans.
\begin{figure*}[h]
   \centering
   \begin{subfigure}{0.48\textwidth}
       \centering
       \includegraphics[width=\linewidth]{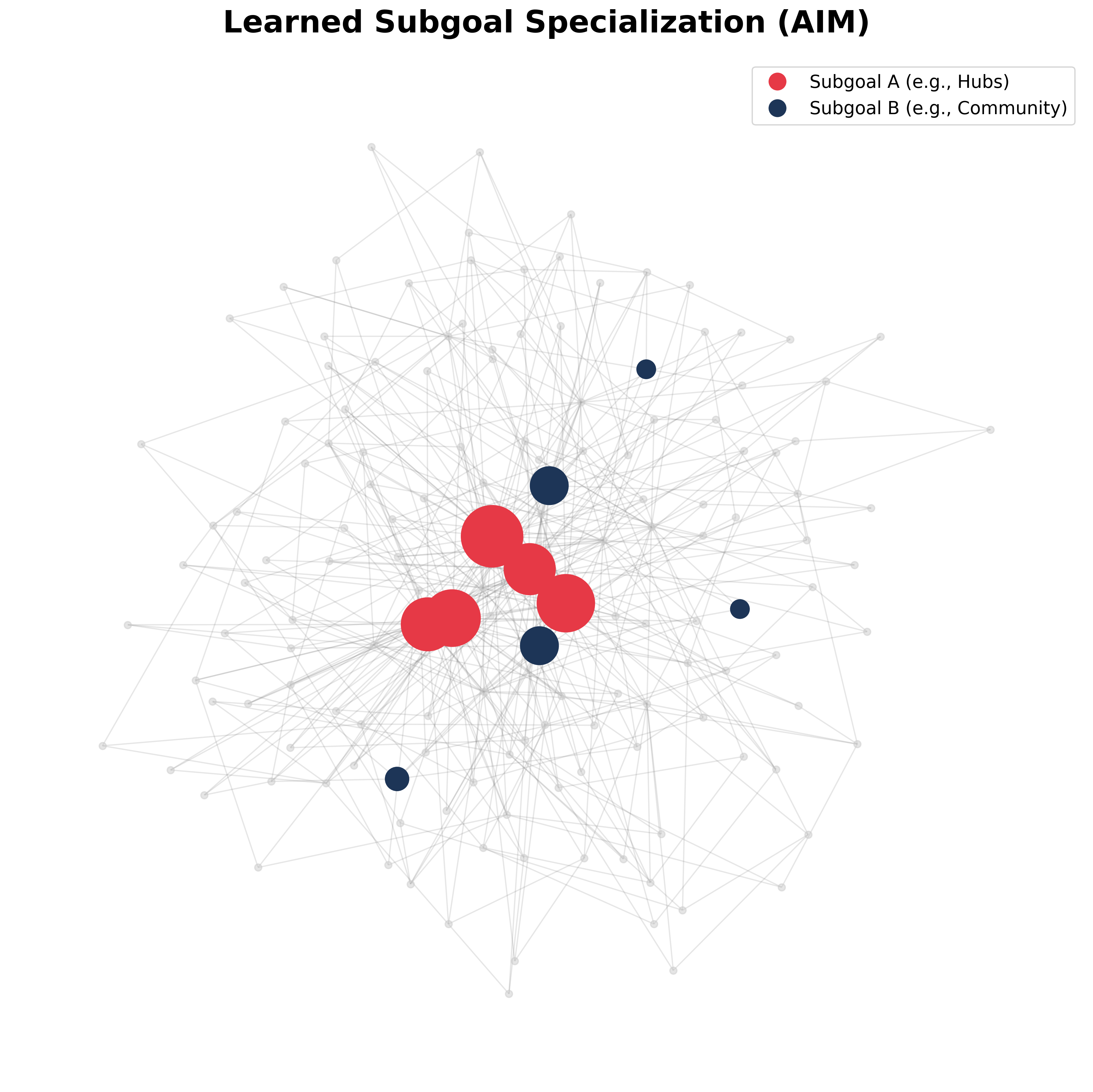}
       \caption{AIM: Hub-Hunting vs. Community-Targeting}
       \label{fig:aim_specialization}
   \end{subfigure}
   \hfill
   \begin{subfigure}{0.48\textwidth}
       \centering
       \includegraphics[width=\linewidth]{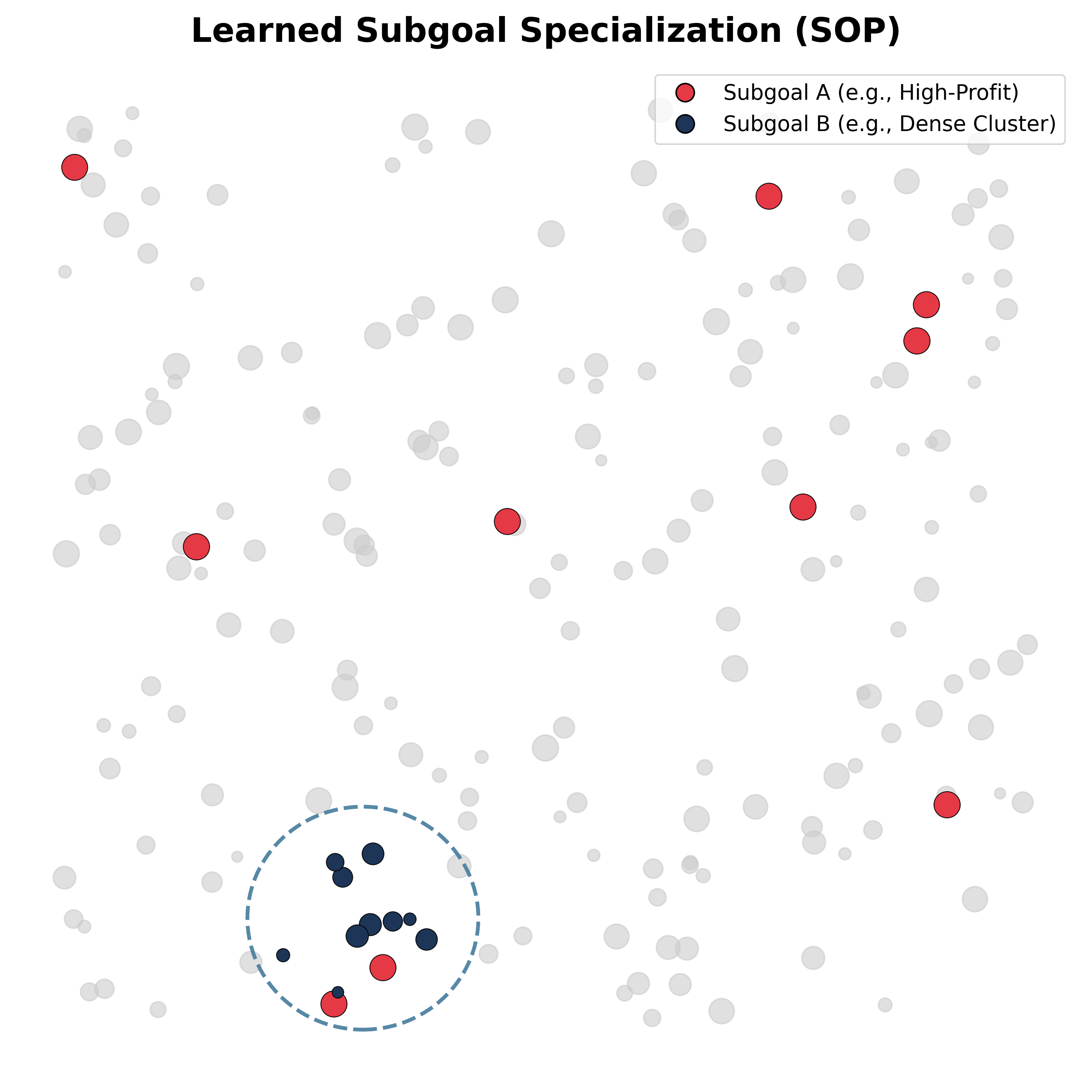}
       \caption{SOP: High-Profit vs. Dense-Cluster}
       \label{fig:sop_specialization}
   \end{subfigure}
   \caption{Visualization of learned subgoal specialization. In both domains, LMTA learns distinct and interpretable strategies. (a) In AIM, one subgoal targets high-degree central hubs (red) while another focuses on a dense community (blue). (b) In SOP, one subgoal prioritizes scattered high-profit nodes (red), whereas another concentrates on a spatially dense neighborhood (blue; dashed circle), illustrating region-level specialization.}

   \label{fig:strategy_analysis}
\end{figure*}

Although these visualizations indicate strategically distinct behaviors, the subgoals remain learned latent abstractions rather than a human-authored control vocabulary. Improving their
interpretability and controllability is an important direction for human-in-the-loop deployment.

% =========================
% ====== SECTION L ========
% =========================
\section{Validation of the Multi-Timescale World Model via Kendall's Tau}
\label{app:kendall_validation}

We empirically investigate whether the Multi-Timescale SMDP (MTS) objective enables the agent to learn a temporally meaningful latent space, where the model's internal score for a subgoal, $\sigma_{\theta,\phi}(s,z)$, is monotonically related to the subgoal's true duration, $\hat{\tau}(s,z)$. To this end, we use Kendall's Rank Correlation Coefficient ($\tau$). A high positive correlation ($\tau \to 1$) indicates that the model has successfully learned the relative temporal ordering of its abstract actions, a crucial capability for effective long-term planning.

\subsection{Order-Consistency of the MTS Objective}

We analyze the quality of the learned temporal ordering for the two variants of our framework that incorporate the MTS objective. Table \ref{tab:ablation_kendall} presents the Kendall's Tau correlation for both the fixed-margin version (Variant D) and the full LMTA model with a learnable margin (Variant E). The results provide clear, quantitative evidence that the MTS objective is highly effective. Both variants achieve a strong positive correlation, with Tau values consistently in the $0.69 \sim 0.76$ range. This indicates that the model has successfully learned a reliable sense of temporal order. Furthermore, the full LMTA model (E) consistently achieves a higher correlation than the fixed-margin version (D), demonstrating the benefit of the learnable, subgoal-specific margin in further refining the temporal representation.

\begin{table}[h]
\centering
\caption{Kendall's Tau correlation between the learned score $\sigma_{\theta,\phi}(s,z)$ and empirical duration $\hat{\tau}(s,z)$ for the variants incorporating the MTS objective. Both achieve a strong positive correlation, directly validating the effectiveness of our proposed mechanism.}
\label{tab:ablation_kendall}
\small
\begin{tabular}{lcc}
\toprule
Algorithm Variant & AIM Kendall's $\tau$ & SOP Kendall's $\tau$ \\
\midrule
D: + MTS (Fixed Margin) & 0.71 \sem{0.04} & 0.69 \sem{0.05} \\
E: LMTA (Full Model) & 0.74 \sem{0.03} & 0.76 \sem{0.04} \\
\bottomrule
\end{tabular}
\end{table}

\subsection{Calculation Methodology}
\label{app:kendall_methodology}
The reported Kendall's Tau values were generated following a procedure for each of the 10 random seeds per algorithm variant.
\begin{enumerate}
    \item \textbf{Model Selection:} We take the final trained agent from each of the 10 independent runs.
    \item \textbf{Data Sampling:} We perform rollouts on a held-out set of 50 test environment instances. During these rollouts, we sample 100 high-level states $s$ at random.
    \item \textbf{Generating Paired Rankings:} For each sampled state $s$, we generate two ranked lists over the entire set of available subgoals $\mathcal{Z} = \{z_1, z_2, \dots, z_{|\mathcal{Z}|}\}$:
    \begin{itemize}
        \item \textbf{Learned Score Ranking:} We perform a forward pass for each subgoal to compute the list of learned scores $\{f(s, z_1), f(s, z_2), \dots, f(s, z_{|\mathcal{Z}|})\}$.
        \item \textbf{Empirical Duration Ranking:} To get a stable estimate of the ground-truth duration, we execute each subgoal $z_i$ from state $s$ five separate times, allowing the low-level policy to run until termination. We record the number of steps taken in each of the five runs and use the average duration to form the list $\{\hat{\tau}(s, z_1), \hat{\tau}(s, z_2), \dots, \hat{\tau}(s, z_{|\mathcal{Z}|})\}$. This averaging mitigates the effects of stochasticity in both the environment and the low-level policy.
    \end{itemize}
    \item \textbf{Calculation and Aggregation:} For each of the 100 sampled states, we compute the Kendall's Tau correlation coefficient between the score ranking and the duration ranking. The final Tau value for a single seed is the average of these 100 individual Tau calculations.
    \item \textbf{Final Reporting:} The values presented in the tables are the mean and standard error of the mean (s.e.m.) of the final Tau values from the 10 independent seeds.
\end{enumerate}
This comprehensive process ensures that the reported correlation is a robust and reliable measure of the model's learned temporal awareness.

\section{Calibration of Value--Geometry Smoothness}
\label{app:calibration}

We empirically examine the link between latent displacement and value variation posited by our smoothness assumptions. At each high-level decision point we form pairs \((u,v)\) with \(u=h_\theta(s)\) and \(v=g_\theta(h_\theta(s),z)\), and compute the chordal distance on the unit sphere,
\(d(u,v)=\|u-v\|_2\). We estimate \(|\Delta V| \triangleq |V(u)-V(v)|\) using the same training targets as in the main method. Unless stated otherwise, results aggregate \textbf{5 seeds} over \textbf{25 epochs} per task (total \(n=125\) points) under the task settings \(T{=}10\), \(K{=}60\).

We report Kendall’s $\tau_b$ and Spearman’s $\rho$ between $d(u,v)$ and \(|\Delta V|\), along with nonparametric bootstrap 95\% confidence intervals. Both benchmarks show a clear positive monotonic association (Table~\ref{tab:calibration-metrics}); the corresponding scatter plots are in Fig.~\ref{fig:calibration-aim-sop}. The strong rank correlations imply that larger latent displacements are associated with larger absolute value changes, supporting the use of latent geometry as a proxy for temporal/strategic scale in SMDP planning.

\begin{table}[h]
  \centering
  \caption{Calibration of value--geometry smoothness. Rank correlations between chordal distance \(d(u,v)\) and \(|\Delta V|\) with bootstrap \(95\%\) CIs. Each task uses \(n{=}125\) points (5 seeds \(\times\) 25 epochs), \(T{=}10\), \(K{=}60\). These empirical rank relations complement Assumption~\ref{asmp:regularity} and the monotone-ordering guarantees (Lemma~\ref{lem:macro-micro-one-sided}, Proposition~\ref{prop:expected-duration-lb}, and Theorem~\ref{thm:order-consistency-weak}), which underpin the regret bound in Theorem~\ref{thm:planning-regret}.}
  \label{tab:calibration-metrics}
  \begin{tabular}{lcc}
    \toprule
    Task & Kendall’s \(\tau_b\) \([95\%\ \mathrm{CI}]\) & Spearman’s \(\rho\) \([95\%\ \mathrm{CI}]\) \\
    \midrule
    \textsc{AIM} & \(\mathbf{0.493}\) \([0.401,\,0.580]\) & \(\mathbf{0.666}\) \([0.536,\,0.779]\) \\
    \textsc{SOP} & \(\mathbf{0.559}\) \([0.475,\,0.629]\) & \(\mathbf{0.748}\) \([0.634,\,0.813]\) \\
    \bottomrule
  \end{tabular}
\end{table}

\begin{figure}[t]
  \centering
  \begin{subfigure}[t]{0.48\linewidth}
    \centering
    \includegraphics[width=\linewidth]{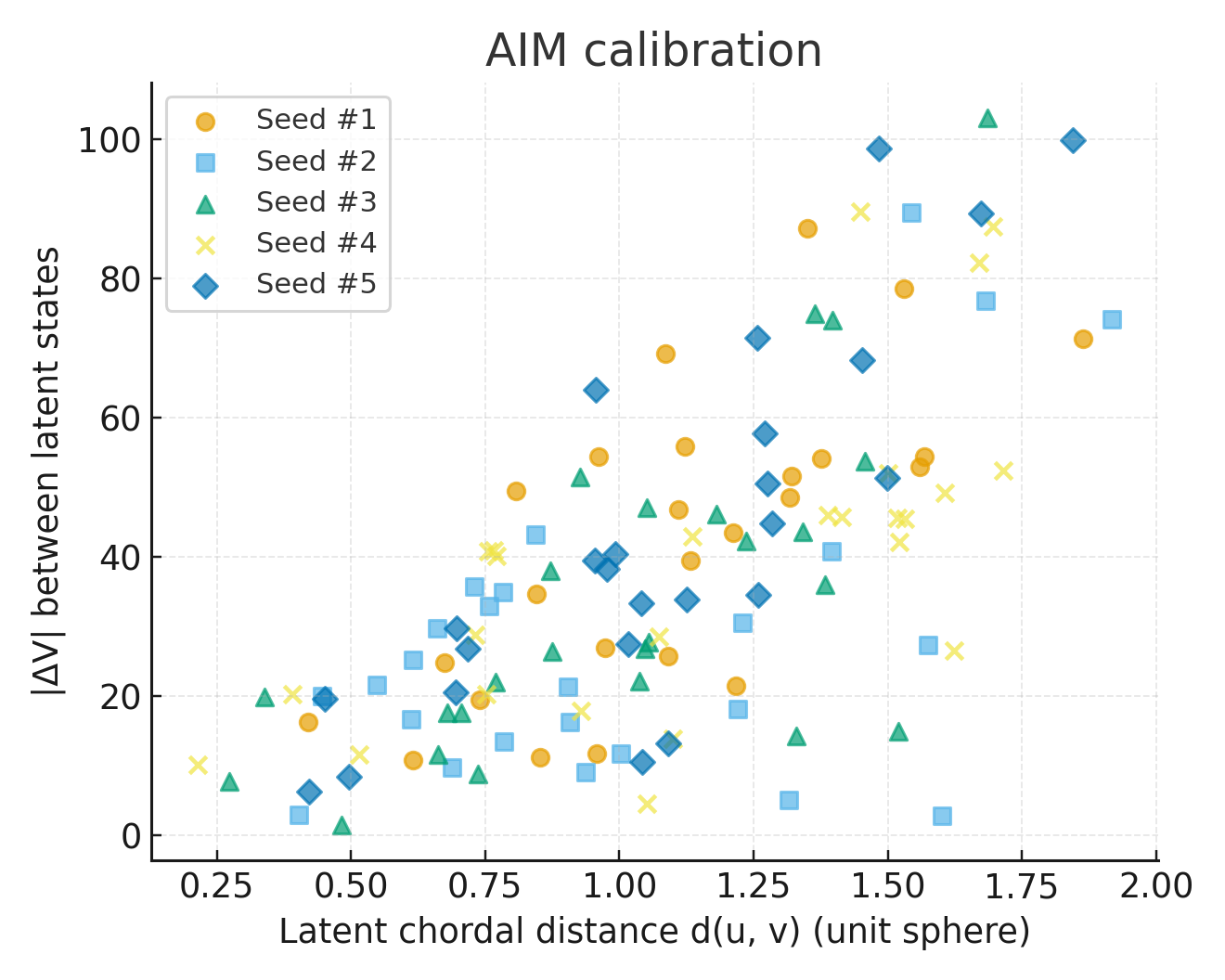}
    \caption{\textsc{AIM}: \(|\Delta V|\) vs.\ chordal distance.}
  \end{subfigure}\hfill
  \begin{subfigure}[t]{0.48\linewidth}
    \centering
    \includegraphics[width=\linewidth]{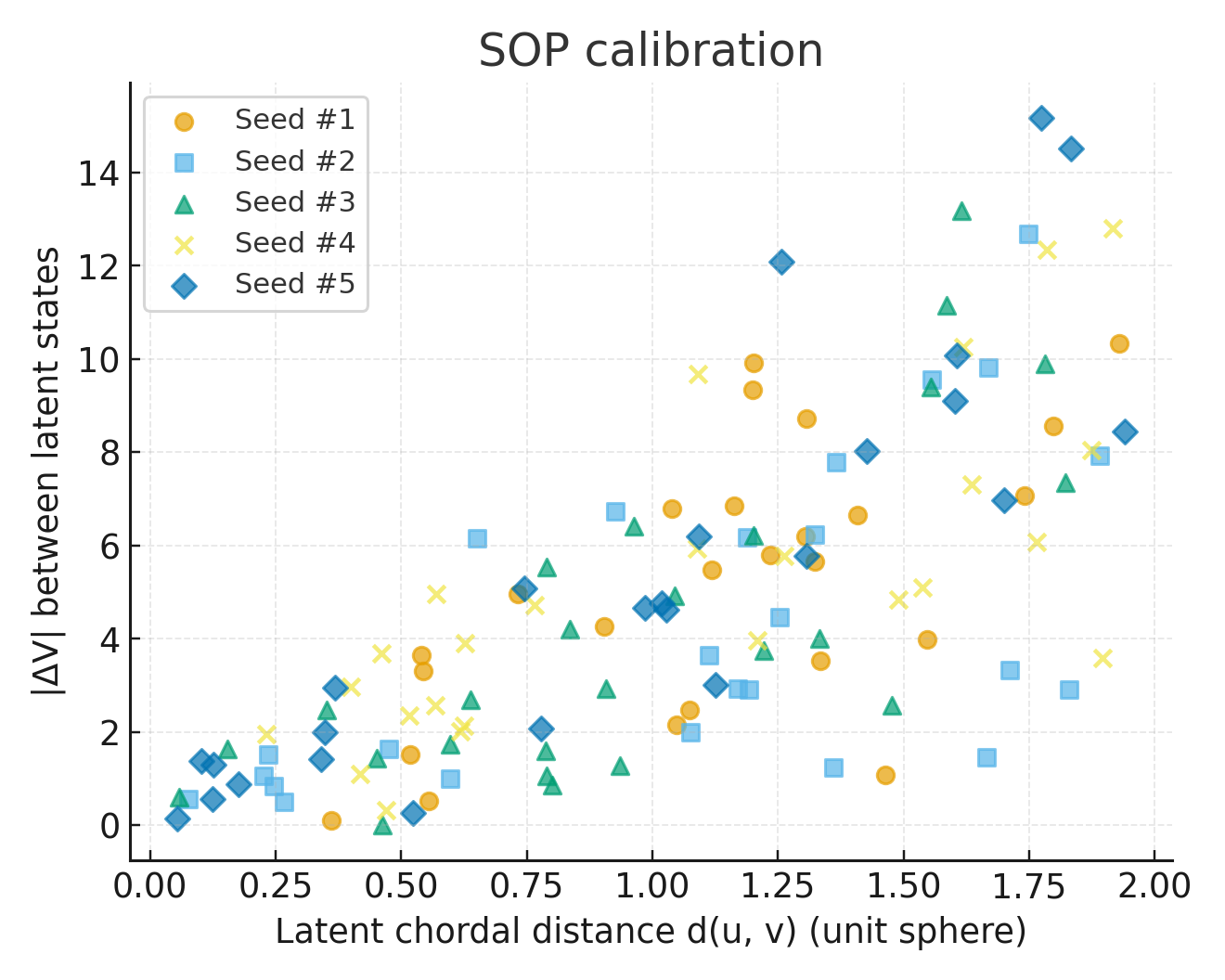}
    \caption{\textsc{SOP}: \(|\Delta V|\) vs.\ chordal distance.}
  \end{subfigure}
  \caption{Calibration plots probing the value--geometry relationship. Points aggregate 5 seeds \(\times\) 25 epochs per task.}
  \label{fig:calibration-aim-sop}
\end{figure}

\section{Additional Results}
\label{app:additional_results}

We report additional evaluations for model-based baselines,
large-action-space planning, scaling with graph size, small-scale
optimality calibration, and budget-aware search variants.

\paragraph{Baselines.}
\begin{itemize}
    \item \textbf{Director} \citep{hafner2022deep}: A goal-conditioned hierarchical model-based RL agent. Since Director uses a fixed high-level stride $H$, we adapted it to SSCO by using an even budget allocation strategy (budget per stage is fixed). 
    \item \textbf{Flat MuZero} \citep{schrittwieser2020mastering}: A standard model-based RL agent without hierarchy, operating directly on primitive actions using the same GNN encoder as LMTA.
    \item \textbf{Sampled MuZero} \citep{hubert2021learning}:
    A MuZero-style baseline that samples candidate primitive actions
during tree search to reduce branching in large discrete action spaces.
\end{itemize}

\paragraph{Additional Task: Power-2500.}
We evaluate all learning-based methods on the \textbf{Power-2500} task, which is based on the large-scale influence maximization instance from \citet{feng2025sequential} with graph size $N = 2500$, but with a significantly higher budget ($K = 60$).

\paragraph{Quantitative Results.}
Table~\ref{tab:new_baselines} summarizes performance against additional learning-based baselines and on Power-2500. The results are consistent with the main AIM/SOP comparisons: LMTA obtains the highest mean return across the reported settings.

\begin{table}[h]
\centering
\small
\caption{Comparison with additional learning-based baselines on AIM,
SOP, and Power-2500. Mean $\pm$ s.e.m. over 10 seeds.}
\label{tab:new_baselines}
\begin{tabular}{lccc}
\toprule
Method & AIM ($N=500, K=70$) & SOP ($N=500, K=70$) & Power ($N=2500, K=60$) \\
\midrule
\textbf{LMTA} & \textbf{324.15 \sem{2.38}} & \textbf{31.60 \sem{1.94}} & \textbf{856.14 \sem{10.20}} \\
WS-Option & 301.53 \sem{9.10} & 12.99 \sem{1.25} & 676.86 \sem{21.58} \\
Flat DQN & 243.46 \sem{3.78} & 15.90 \sem{0.67} & 464.78 \sem{14.28} \\
Director & 306.53 \sem{2.58} & 20.74 \sem{0.93} & 725.61 \sem{5.00} \\
Flat MuZero & 260.60 \sem{8.41} & 17.75 \sem{0.30} & 354.42 \sem{4.32} \\
\bottomrule
\end{tabular}
\end{table}

\begin{table}[h]
\centering
\small
\caption{MuZero-style planning baselines on large-budget AIM and SOP settings. Mean $\pm$ s.e.m. over 10 seeds.}
\label{tab:sampled_muzero}
\begin{tabular}{lcc}
\toprule
Method & AIM $(N=500,K=70)$ & SOP $(N=500,K=70)$ \\
\midrule
Flat MuZero & 260.60 \sem{8.41} & 17.75 \sem{0.30} \\
Sampled MuZero & 283.93 \sem{2.55} & 19.41 \sem{0.46} \\
\textbf{LMTA} & \textbf{324.15 \sem{2.38}} &
\textbf{31.60 \sem{1.94}} \\
\bottomrule
\end{tabular}
\end{table}

\begin{figure*}[t]
    \centering
    \begin{subfigure}{0.48\textwidth}
        \centering
        \includegraphics[width=\linewidth]{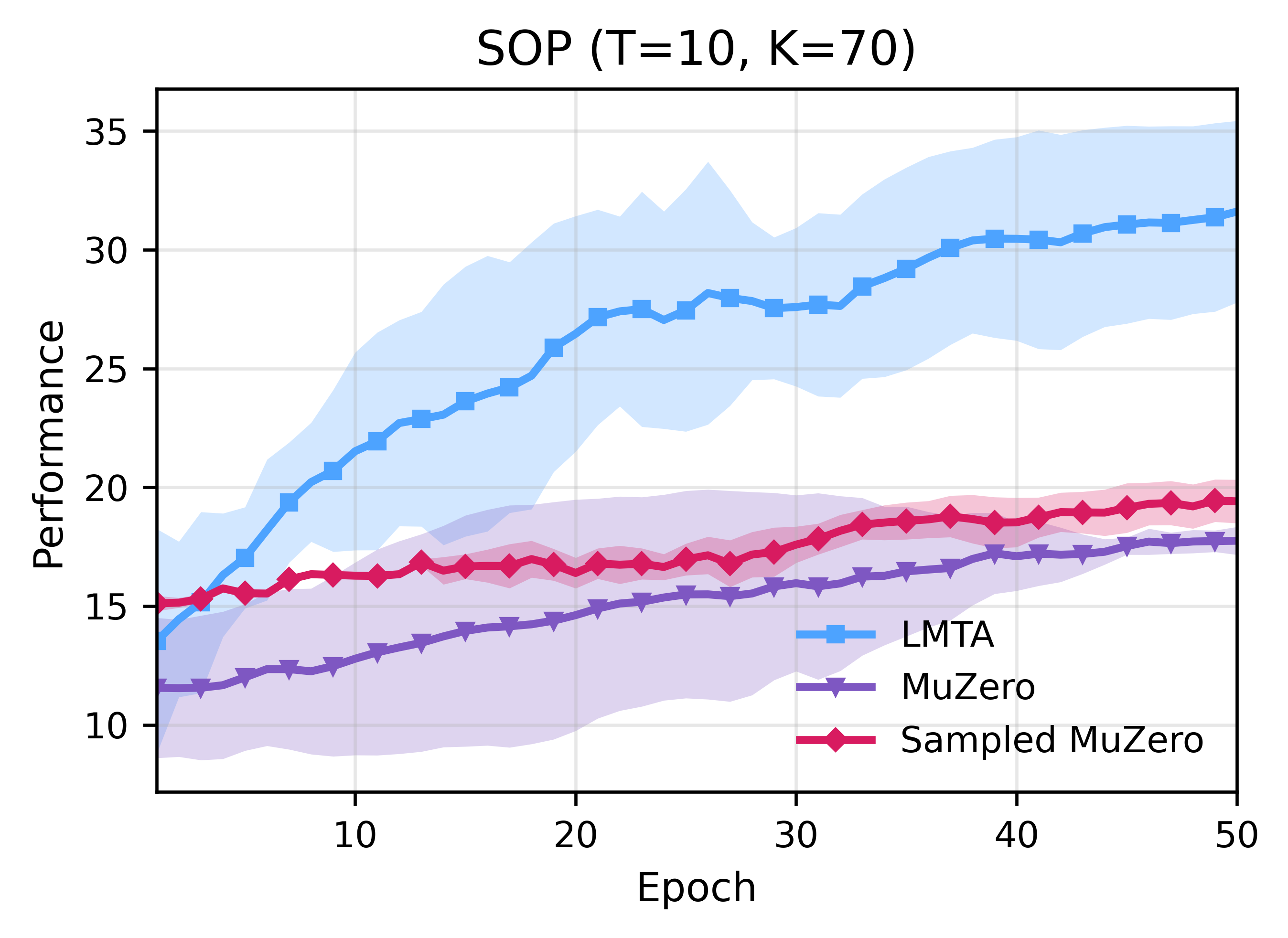}
        \caption{SOP $(T=10,K=70)$}
        \label{fig:sop_sampled_muzero_k70}
    \end{subfigure}
    \hfill
    \begin{subfigure}{0.48\textwidth}
        \centering
        \includegraphics[width=\linewidth]{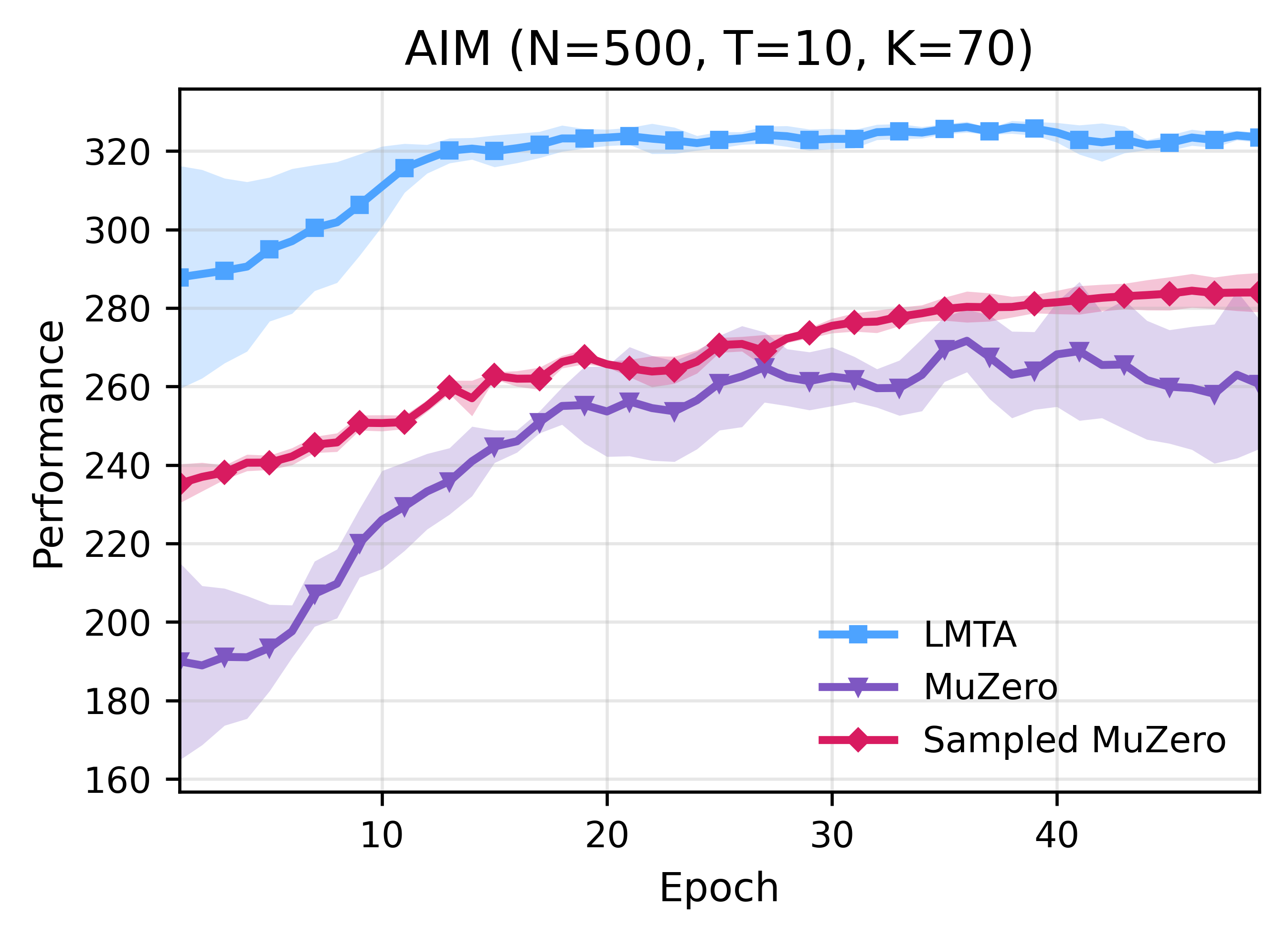}
        \caption{AIM $(T=10,K=70)$}
        \label{fig:aim_sampled_muzero_k70}
    \end{subfigure}
    \caption{Training curves for LMTA and MuZero-style baselines on
    large-budget SOP and AIM settings.}
    \label{fig:sampled_muzero_curves}
\end{figure*}

\begin{table}[t]
\centering
\small
\setlength{\tabcolsep}{4pt}
\caption{AIM results at $N=200$ graph size. Mean $\pm$ s.e.m.}
\label{tab:aim_original_size}
\begin{tabular}{lcccc}
\toprule
Method & $T=10,K=10$ & $T=10,K=20$ & $T=10,K=30$ & $T=20,K=10$ \\
\midrule
WS-Option & 78.84 \sem{2.86} & 116.21 \sem{3.71} &
128.63 \sem{3.89} & 81.14 \sem{2.93} \\
\textbf{LMTA} & \textbf{83.21 \sem{2.41}} &
\textbf{123.83 \sem{3.12}} & \textbf{137.28 \sem{3.46}} &
\textbf{86.16 \sem{2.66}} \\
\bottomrule
\end{tabular}
\end{table}

\paragraph{Scaling with graph size.}
Figure~\ref{fig:aim_scaling} reports AIM performance as a function
of graph size for $T=10,K=30$. Both methods are retrained and
evaluated under the same protocol at each graph size.

\begin{figure}[t]
    \centering
    \includegraphics[width=0.62\textwidth]{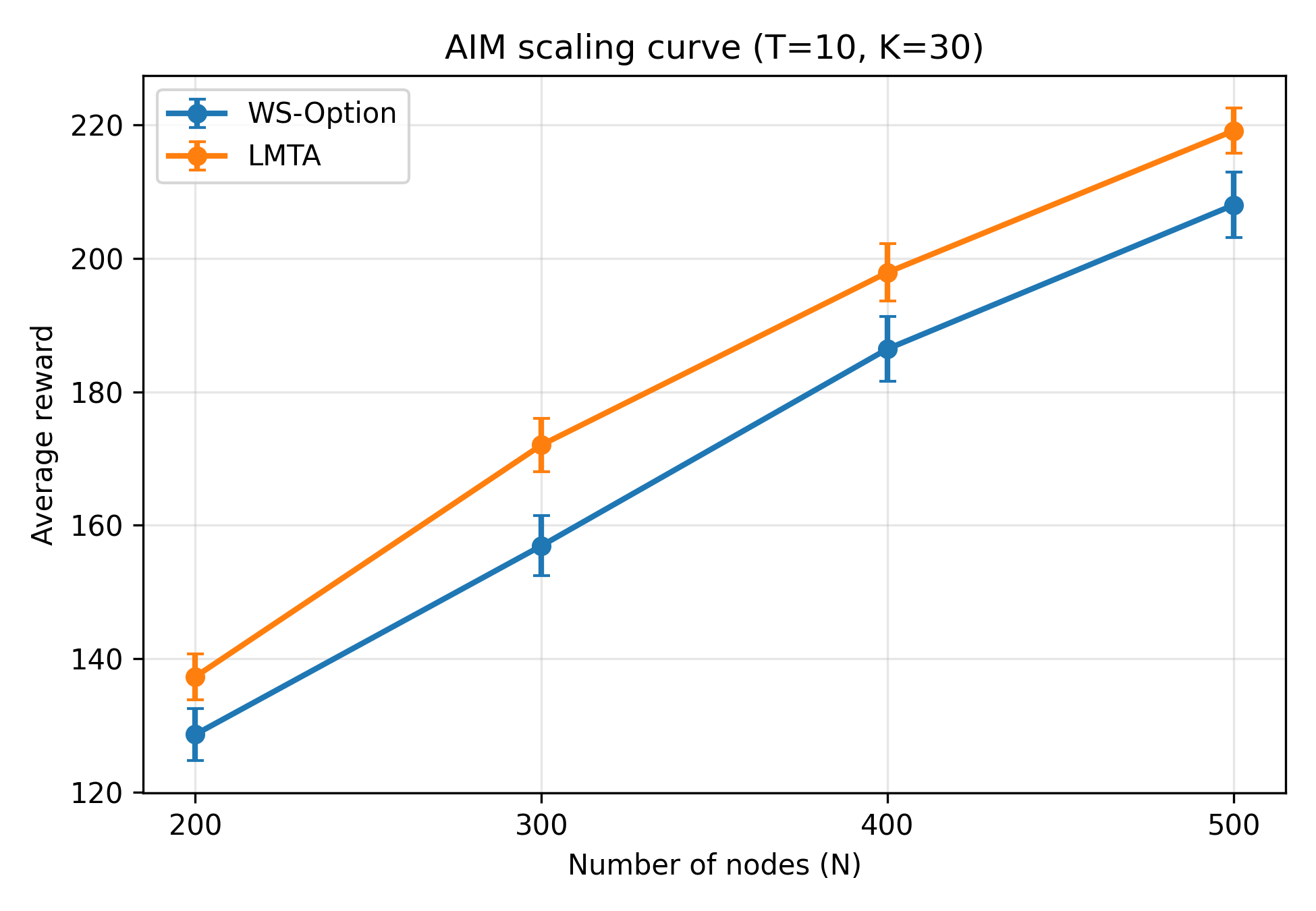}
    \caption{Scaling with graph size on AIM $(T=10,K=30)$.
    Average reward is reported as a function of the number of nodes
    $N$. Both methods are retrained and evaluated under the same
    protocol at each graph size.}
    \label{fig:aim_scaling}
\end{figure}

\paragraph{Small-Scale Exact Solver Calibration.}
Table~\ref{tab:exact_solver_aim} reports performance on tiny AIM
instances where exact stochastic expectimax with memoization is
tractable.

\begin{table}[h]
\centering
\small
\caption{Exact-solver calibration on tiny AIM instances $(N=10,T=4,K=4)$.}
\label{tab:exact_solver_aim}
\begin{tabular}{lccc}
\toprule
Method & Value & Gap to optimum & Ratio to optimum \\
\midrule
Exact optimum & \textbf{8.91} & 0.00 & \textbf{1.00} \\
LMTA & 8.73 & 0.18 & 0.98 \\
WS-Option & 7.72 & 1.19 & 0.87 \\
\bottomrule
\end{tabular}
\end{table}

\paragraph{Additional Learning Curves.}
Figure~\ref{fig:appendix_curves} reports learning curves for the
$K=70$ AIM/SOP settings and the $K=60$ Power-2500 benchmark.

\begin{figure}[h]
    \centering
    \begin{subfigure}{0.32\textwidth}
        \includegraphics[width=\linewidth]{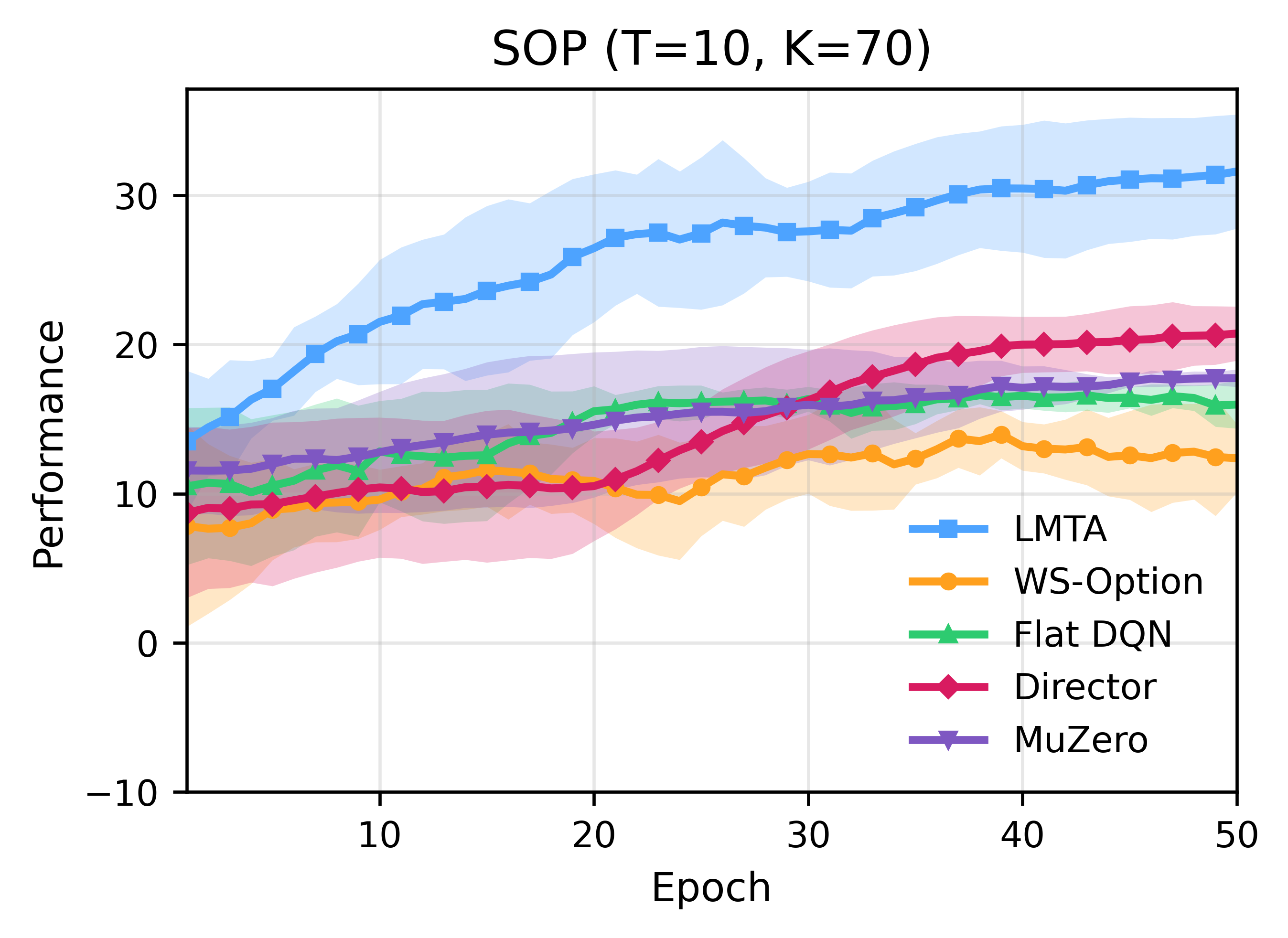}
        \caption{SOP (T=10, K=70)}
    \end{subfigure}
    \hfill
    \begin{subfigure}{0.32\textwidth}
        \includegraphics[width=\linewidth]{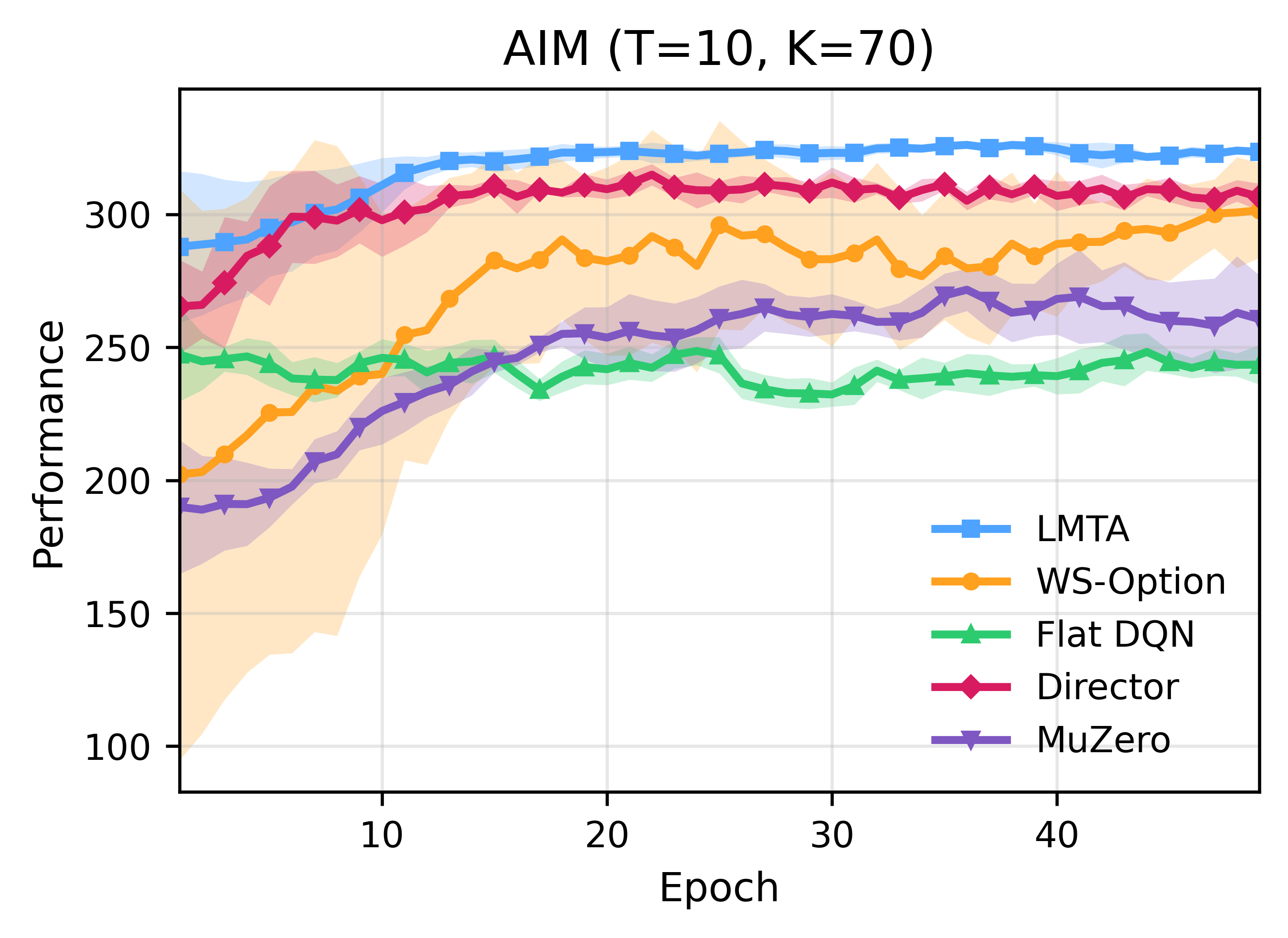}
        \caption{AIM (T=10, K=70)}
    \end{subfigure}
    \hfill
    \begin{subfigure}{0.32\textwidth}
        \includegraphics[width=\linewidth]{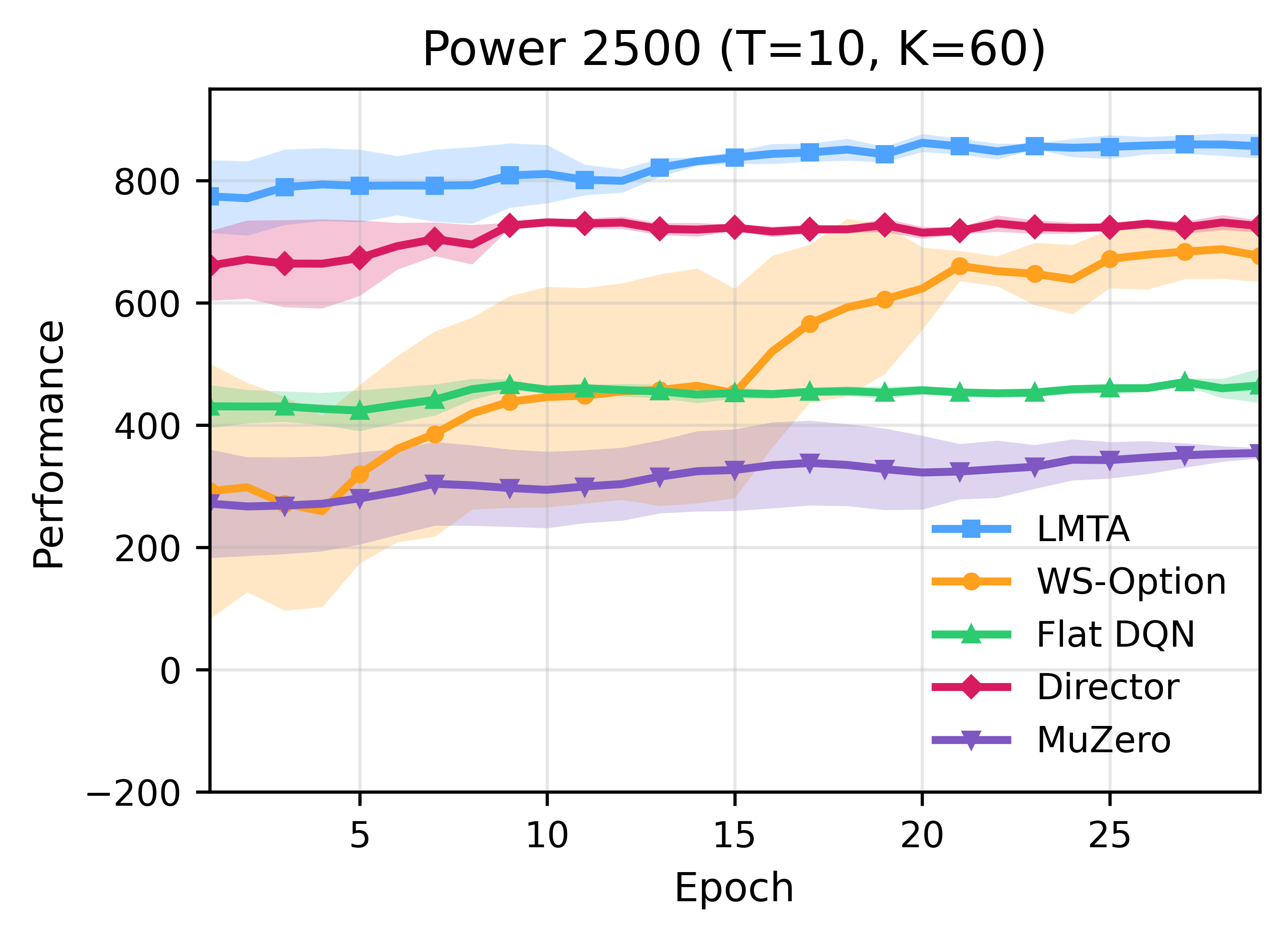}
        \caption{Power-2500 (T=10, K=60)}
    \end{subfigure}
    \caption{Learning curves for $K=70$ AIM/SOP settings and $K=60$ Power-2500.}
    \label{fig:appendix_curves}
\end{figure}

\paragraph{Budget Sampling in MCTS.}
Table~\ref{tab:chance_budget_mcts} reports performance for the
default budget-marginalized model and variants that sample budgets
as chance variables during MCTS.

\begin{table}[h]
\centering
\small
\caption{Budget sampling variants in MCTS. Mean $\pm$ s.e.m. over 10 seeds.}
\label{tab:chance_budget_mcts}
\begin{tabular}{lcc}
\toprule
Method & AIM $(T=10,K=70)$ & SOP $(T=10,K=70)$ \\
\midrule
\textbf{LMTA} & \textbf{324.15 \sem{2.38}} &
\textbf{31.60 \sem{1.94}} \\
LMTA + chance-budget MCTS (3 samples) & 322.8 \sem{2.93} &
29.5 \sem{2.31} \\
LMTA + chance-budget MCTS (1 sample) & 321.9 \sem{3.12} &
29.1 \sem{2.45} \\
\bottomrule
\end{tabular}
\end{table}

\section{Computational Cost}
\label{app:wallclock}

\begin{figure}[t]
    \centering
    \includegraphics[width=0.6\columnwidth]{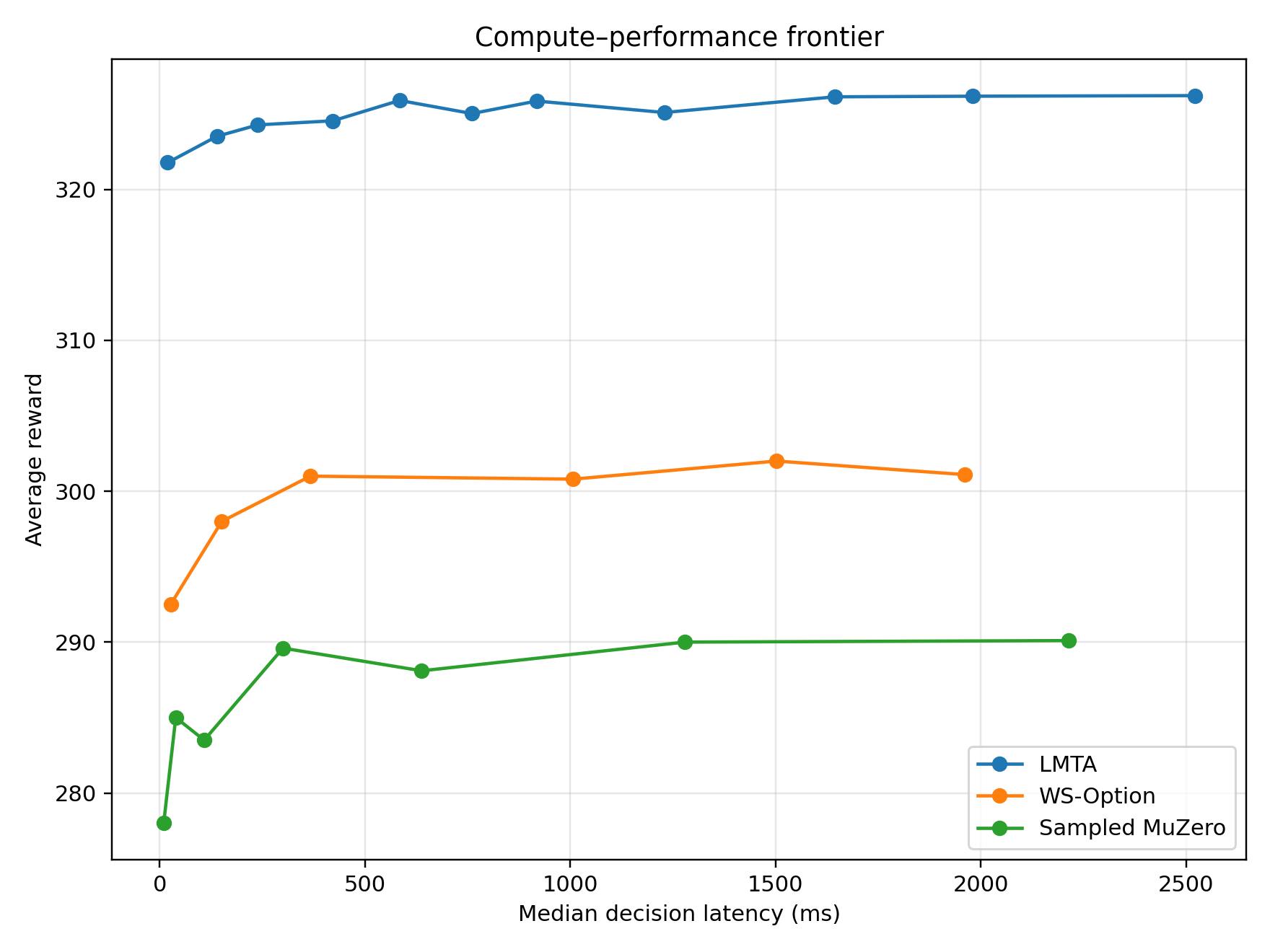}
    \caption{Compute--performance frontier on AIM. Average reward is
    reported against median decision latency under matched test-time
    compute sweeps.}
    \label{fig:frontier_latency}
\end{figure}

We provide a detailed breakdown of LMTA's computational cost in Table \ref{tab:wallclock_lmta} and a comparison against baselines in Table \ref{tab:wallclock_comparison}.

For the AIM profiling setting $(T=10,K=70)$, environment interaction
counts are matched across methods: all methods use about 10
high-level decisions and 69--70 low-level actions/seeds per
instance. The wall-clock differences therefore primarily reflect
internal planning and model-evaluation compute.

\subsection{LMTA Detailed Metrics}
\begin{table}[h]
\centering
\caption{Setup for LMTA profiling.}
\begin{tabular}{ll}
\toprule
Item & Value \\
\midrule
GPU & NVIDIA Tesla V100 (32 GB HBM2) \\
CUDA / cuDNN & 11.7 / 8.5 \\
PyTorch & 2.0.1 \\
Python & 3.8.19 \\
OS & Linux (kernel 4.15.0-135-generic) \\
Mixed precision (AMP) & disabled \\
Task & AIM with $N=500$, $T=10$, $K=70$ \\
Subgoals / MCTS simulations & 32 / 150 \\
\bottomrule
\end{tabular}
\end{table}

\begin{table}[h]
\centering
\caption{LMTA Results.}
\label{tab:wallclock_lmta}
\begin{tabular}{ll}
\toprule
Metric & Value \\
\midrule
Training time / epoch & 62.26 s \\
Decision latency (median / p95) & 441.0 ms / 519.5 ms \\
Episode latency (mean; $T{=}10$) & 4{,}288.4 ms \\
Episodes / epoch & 16 \\
Env micro-steps (true frames) & 55{,}154 \\
\bottomrule
\end{tabular}
\end{table}

\noindent
In Table \ref{tab:wallclock_lmta}, the decision latency includes MCTS, subgoal selection, budget selection, and low-level action selection. Episode latency sums per-decision latency over $T$ decisions. CUDA synchronization is applied around timed regions.

\subsection{Comparison with Baselines}

We compare the computational cost of LMTA against baselines in Table~\ref{tab:wallclock_comparison}. Measurements were taken on the same hardware setup described above.

\begin{table}[h]
\centering
\caption{Wall-clock time comparison (AIM, $N=500$, $T=10$, $K=70$).}
\label{tab:wallclock_comparison}
\begin{tabular}{lcc}
\toprule
Method & Decision Latency (ms) & Training Time / Epoch (s) \\
\midrule
Flat DQN & 62.65 & 329.03 \\
WS-Option & 101.21 & 12.56 \\
Director & 40.38 & 22.5 \\
Flat MuZero & 851.53 & 822.39 \\
\textbf{LMTA} & 441.03 & 62.34 \\
\bottomrule
\end{tabular}
\end{table}

\paragraph{Analysis.}
LMTA has higher decision latency than WS-Option, Director, and
Flat DQN because it performs MCTS over learned subgoals, while
remaining below Flat MuZero, whose search operates over primitive
actions. Figure~\ref{fig:frontier_latency} reports reward across test-time
planning budgets. LMTA already obtains high reward in the greedy
no-search setting and changes modestly as additional planning is
added. This pattern is consistent with the training procedure:
search-improved targets can be partly amortized into the learned
high-level policy and value function, while online MCTS provides a
smaller additional improvement at inference.

\end{document}